\documentclass{article}
\PassOptionsToPackage{numbers,compress}{natbib}
\usepackage{arxiv}
\usepackage{natbib}

\usepackage[utf8]{inputenc}
\usepackage[T1]{fontenc}
\usepackage{amsmath,amssymb,amsfonts,amsthm}
\usepackage{mathtools}
\usepackage{algorithm}
\usepackage{algorithmic}
\usepackage{graphicx}
\usepackage{xcolor}
\usepackage{booktabs}
\usepackage{hyperref}
\usepackage{cleveref}
\usepackage{siunitx}
\usepackage{float}
\usepackage{placeins}
\usepackage{bm}
\usepackage{tikz}
\usepackage{subcaption}

\usepackage{longtable}
\usepackage{etoolbox}
\makeatletter
\patchcmd{\LT@output}{\vss}{\vfil}{}{\PackageWarning{paper}{Longtable output patch not applied}}
\patchcmd{\LT@output}{\vss}{\vfil}{}{\PackageWarning{paper}{Longtable output patch not applied}}
\makeatother
\usepackage{array}

\DeclareTextFontCommand{\textsc}{\upshape\scshape}

\renewcommand{\shorttitle}{Graph-Based Stochastic Power-UCT}

\usepackage{titletoc}
\newcommand\DoToC{%
  \startcontents
  \printcontents{}{1}{\textbf{Table of Contents}\vskip3pt\hrule\vskip5pt}
  \vskip3pt\hrule\vskip5pt
}

\usetikzlibrary{arrows.meta,positioning,shapes,calc}
\hypersetup{hidelinks}
\usepackage{pgfplots}
\pgfplotsset{compat=1.18}
\usepgfplotslibrary{groupplots}
\usetikzlibrary{patterns}

\definecolor{uctc}{HTML}{8C8C8C}
\definecolor{puctc}{HTML}{B0B0B0}
\definecolor{mentsc}{HTML}{4D4D4D}
\definecolor{GBOPc}{HTML}{228B22}
\definecolor{gspc}{HTML}{0072B2}
\definecolor{gspfc}{HTML}{D55E00}

\tikzset{
  mctsuct/.style={uctc, dashed, mark=o, mark size=1.15pt, line width=0.55pt},
  poweruct/.style={puctc, densely dashed, mark=square*, mark size=1.05pt, line width=0.55pt},
  ments/.style={mentsc, dotted, mark=triangle*, mark size=1.15pt, line width=0.65pt},
  GBOP/.style={GBOPc, solid,mark=pentagon*,mark size=1.20pt, line width=1.0pt}, 
  gspower/.style={gspc, solid, mark=*, mark size=1.25pt, line width=1.05pt},
  gspowerf/.style={gspfc, solid, mark=diamond*, mark size=1.35pt, line width=1.05pt}
}

\newcommand{\BudgetNextGroupPlot}[1]{%
\nextgroupplot[
  xmode=log,
  log basis x=2,
  xtick={32,128,512,2048},
  xticklabels={32,128,512,2k},
  ylabel={Reward},
  #1]
}

\theoremstyle{plain}
\newtheorem{definition}{Definition}
\newtheorem{assumption}{Assumption}
\newtheorem{proposition}{Proposition}
\newtheorem{theorem}{Theorem}
\newtheorem{lemma}{Lemma}
\newtheorem{corollary}{Corollary}
\theoremstyle{remark}
\newtheorem{remark}{Remark}

\crefname{definition}{Definition}{Definitions}
\crefname{assumption}{Assumption}{Assumptions}
\crefname{proposition}{Proposition}{Propositions}
\crefname{theorem}{Theorem}{Theorems}
\crefname{lemma}{Lemma}{Lemmas}
\crefname{corollary}{Corollary}{Corollaries}
\crefname{remark}{Remark}{Remarks}

\newcommand{\EE}{\mathbb{E}}
\newcommand{\PP}{\mathbb{P}}
\newcommand{\RR}{\mathbb{R}}

\newcommand{\cG}{\mathcal{G}}
\newcommand{\cT}{\mathcal{T}}
\newcommand{\cM}{\mathcal{M}}
\newcommand{\cN}{\mathcal{N}}
\newcommand{\cE}{\mathcal{E}}
\newcommand{\cS}{\mathcal{S}}
\newcommand{\cA}{\mathcal{A}}

\newcommand{\Vhat}{\widehat{V}}
\newcommand{\Qhat}{\widehat{Q}}
\newcommand{\Vtil}{\widetilde{V}}
\newcommand{\Qtil}{\widetilde{Q}}
\newcommand{\Vstar}{V^{\star}}
\newcommand{\Qstar}{Q^{\star}}

\newcommand{\defeq}{\triangleq}
\newcommand{\conc}[2]{\xrightarrow[n\to\infty]{#1,#2}}
\DeclareMathOperator*{\argmax}{arg\,max}

\title{Graph-Based Stochastic Power-UCT: \\ Monte-Carlo Graph Search with Power Mean Estimation}

\author{
  Tung Tran \\
  School of ICT\\ Hanoi University of Science and Technology\\ Hanoi, Vietnam \\
  \texttt{Tung.td235240@sis.hust.edu.vn}
  \And
  Viet Bao Mai \\
  School of ICT\\ Hanoi University of Science and Technology\\  Hanoi, Vietnam \\
  \texttt{bao.MV225474@sis.hust.edu.vn}
  \And
  Hoang Ta \\
  School of ICT\\ Hanoi University of Science and Technology\\ Hanoi, Vietnam \\
  \texttt{hoang.taduy@hust.edu.vn}
  \And
  Tuan Dam \\
  School of ICT\\ Hanoi University of Science and Technology\\Hanoi, Vietnam \\
  \texttt{tuan.dam@hust.edu.vn}
}

\date{}

\begin{document}

\maketitle

\begin{abstract}
Tree-based Monte-Carlo Tree Search (MCTS) duplicates the same state when it is
reached through different trajectories, which can waste simulations in stochastic
MDPs. We introduce Graph-Based Stochastic-Power-UCT (GS-Power-UCT),
which shares states reached at the same planning depth while keeping separate
values for states reached at different depths. This design applies to general
stochastic MDPs, including problems with cycles. We prove that for a fixed
planning horizon, the root estimate converges to the finite-horizon value at
rate $O(n^{-1/2})$, matching tree-based Stochastic-Power-UCT while reusing
samples across shared states. We also study two full-state variants:
GS-Power-UCT-F, which stores one node per physical state to increase
sample sharing but may mix values from different remaining horizons, and
GS-Power-UCT-F$^+$, which uses an adaptive horizon to control this
bias. The latter converges to $V^{\star}(s_0)$, the optimal infinite-horizon 
discounted value at the root state $s_0$, when the remaining cross-depth gap
vanishes. Experiments on stochastic planning benchmarks show improved sample
efficiency over tree-based and graph-based baselines.
\end{abstract}

\section{Introduction}
\label{sec:intro}

Monte-Carlo Tree Search (MCTS)~\citep{coulom2006efficient,browne2012survey} is a widely used family of online planning algorithms that combine Monte-Carlo sampling with forward tree search. The success of MCTS lies in adaptive exploration strategies inspired by the multi-armed bandit (MAB) literature, most notably the Upper Confidence Bound for Trees (UCT) algorithm~\citep{kocsis2006bandit}. Recent advances in coupling MCTS with deep learning~\citep{silver2016mastering,silver2017mastering,schrittwieser2020mastering} have enabled breakthroughs in complex decision-making problems.

Despite these successes, tree-based MCTS suffers from two fundamental limitations. First, theoretical analysis of the logarithmic exploration bonus in UCT is incomplete due to issues identified by~\citet{shah2020non}, leading to the development of Fixed-Depth-MCTS with polynomial bonuses~\citep{shah2022non}. Second, and more relevant to this work, tree-based MCTS does not identify states that are reachable via multiple trajectories. When a state $s$ can be reached via two different action sequences, it is represented as two separate nodes in the search tree, leading to redundant exploration and suboptimal use of computational budget.

\citet{leurent2020monte} address the second limitation by proposing Monte-Carlo Graph Search (MCGS), which merges identical states into a single node in a directed graph. They show that this merging can reduce the effective branching factor from $\kappa$ (tree) to $\kappa_\infty \leq \kappa$ (graph), yielding improved regret bounds. However, their theoretical analysis is restricted to \emph{deterministic} MDPs with an Optimism in the Face of Uncertainty (OFU) planning style, and the extension to stochastic MDPs is purely empirical.

Separately, the Stochastic-Power-UCT algorithm~\citep{dam2024power} addresses the first limitation by introducing power mean value estimation with polynomial exploration bonuses for stochastic MCTS, establishing $O(n^{-1/2})$ convergence rate for value estimation at the root node.

\paragraph{Contributions.}
We make the following contributions:

\begin{enumerate}
    \item We introduce \textbf{Graph-Based Stochastic Power-UCT}
    (GS-Power-UCT), which combines power mean backups, polynomial exploration
    bonuses, and depth-augmented graph search. By keying nodes as $(s,h)$, the
    search graph is a DAG by construction for any stochastic MDP, including MDPs
    with cycles.

    \item We prove that GS-Power-UCT preserves the
    $(\alpha,\beta)$-concentration (Definition~\ref{def:concentration}) guarantees of Stochastic-Power-UCT on
    depth-augmented graphs. The key step is a generalized graph Q-concentration
    lemma, Lemma~\ref{lem:graph_Q_concentration}, which handles child estimates
    whose visits are aggregated from multiple parents. This yields root expected
    error $O(n^{-1/2})$ for the truncated value
    $\Vtil(s_0,0)$; see Theorems~\ref{thm:main} and~\ref{thm:rate}.

    \item We analyze the cost of merging states across depths. Proposition~\ref{prop:bias} shows that naive full-state merging can introduce
    irreducible cross-depth bias. For the practical full-merge variant, we give
    a controlled-bias bound in Theorem~\ref{thm:full_merge_rate}, and an
    adaptive-horizon guarantee for GS-Power-UCT-F$^+$ in
    Theorem~\ref{thm:adaptive}, where convergence to $V^{\star}(s_0)$ holds when
    the empirical cross-depth gap vanishes.

    \item We quantify the sample-sharing effect of depth-augmented graph search.
    For a fixed collection of simulated trajectories, the graph representation is
    the quotient of the unrolled tree representation obtained by identifying equal
    \((s,h)\) pairs. Theorem~\ref{thm:graph_nonworsening} gives a deterministic
    sample-sharing identity and shows that, under the same recursive proof recipe,
    the corresponding same-trajectory proof bounds are not enlarged by aggregation.
    This is a representation-level comparison, not an algorithm-level dominance
    claim over an independently run tree MCTS algorithm.
\end{enumerate}
\section{Preliminaries}
\label{sec:prelim}

\subsection{Markov Decision Process}

We consider a discrete-time discounted MDP $\cM = \langle \cS, \cA, R, P, \gamma \rangle$, where $\cS$ is the state space and $\cA_s\subseteq\cA$ is the nonempty set of admissible actions at $s$, with $K\defeq\max_s|\cA_s|<\infty$. A simulator response at $(s,a)$ consists of a successor drawn from $P(\cdot\mid s,a)$ and a reward in $[0,R_{\max}]$; $R(s,a,s')$ denotes its conditional mean given $s'$. The discount factor is $\gamma\in[0,1)$. The optimal value function satisfies the Bellman optimality equation:
\begin{equation}
    \Vstar(s) = \max_{a \in \cA_s} \Qstar(s,a), \quad \Qstar(s,a) = \sum_{s'} P(s'|s,a)\left[R(s,a,s') + \gamma \Vstar(s')\right].
\end{equation}

\subsection{MCTS with Power Mean Value Backup}

Given a planning horizon $H$ and a playout policy $\pi_0$ with value $V_0$, define inductively $\Vtil(s_H) = V_0(s_H)$ and for $h \leq H-1$:
\begin{equation}\label{eq:truncated_bellman}
    \Qtil(s_h, a) = r(s_h, a) + \gamma \sum_{s_{h+1}} P(s_{h+1}|s_h, a)\Vtil(s_{h+1}), \quad \Vtil(s_h) = \max_{a\in\cA_{s_h}} \Qtil(s_h, a),
\end{equation}
where $r(s_h, a)$ is the expected immediate reward; the truncation error satisfies $|\Qstar(s_0, a) - \Qtil(s_0, a)| \leq \gamma^H \|\Vstar - V_0\|_\infty$. Equivalently, with the Bellman optimality operator $(\mathcal B V)(s) \defeq \max_{a\in\cA_s} \sum_{s'} P(s'|s,a)[R(s,a,s')+\gamma V(s')]$, $\Vstar$ is the unique fixed point of $\mathcal B$ and the depth-$h$ truncated value is $\Vtil(s,h) = (\mathcal B^{H-h}V_0)(s)$. After $t$ trajectories, the \emph{power mean value estimate} at an internal node $s_h$ is
\begin{equation}\label{eq:power_mean}
    \Vhat_t(s_h) = \left(\sum_{a \in \cA_{s_h}} \frac{T_{s_h,a}(t)}{t}\left(\Qhat_{T_{s_h,a}(t)}(s_h, a)\right)^p\right)^{1/p},
\end{equation}
where $p \in [1,+\infty)$ and $T_{s_h,a}(t)$ is the number of visits to $(s_h, a)$.
\subsection{Concentration Framework}

Following~\citet{shah2022non} and~\citet{dam2024power}, we use the following notion of polynomial concentration.

\begin{definition}[$(\alpha,\beta)$-concentration]\label{def:concentration}
    A sequence of estimators $(\Vhat_n)_{n \geq 1}$ concentrates at rate $(\alpha,\beta)$ towards some limit $V$ if there exists a constant $c > 0$ such that:
    \begin{equation}
        \forall n \geq 1, \; \forall \varepsilon > 0: \quad \PP\left(|\Vhat_n - V| > \varepsilon\right) \leq c \, n^{-\alpha} \varepsilon^{-\beta}.
    \end{equation}
    We write $\Vhat_n \conc{\alpha}{\beta} V$.
\end{definition}

\section{Graph-Based Stochastic Power-UCT}
\label{sec:algorithm}

The efficiency of graph-based planning in MDPs hinges on the mechanism used to merge trajectories. We present GS-Power-UCT, an algorithm designed to balance \textbf{estimation accuracy} (preserving value consistency) with \textbf{sample efficiency} (maximizing state reuse). We formalize this balance through a state-mapping function $\phi(s, h)$ that defines how physical states $s$ and trajectory depths $h$ are indexed in the search graph $\cG_n$, allowing us to transition between a theoretically consistent model and a computationally high-throughput variant.

\subsection{Search Graph Construction}

We maintain a graph $\cG_n = (\cN_n, \cE_n)$ where each node $v \in \cN_n$ represents an aggregate state $v = \phi(s, h)$, and the choice of $\phi$ determines the topology of the search space. The \emph{depth-augmented mapping} $\phi(s, h) = (s, h)$ distinguishes the same physical state at different depths and is the \textbf{canonical form} for finite-horizon MDPs, since the value of a state is inherently non-stationary with respect to the remaining horizon $H-h$; it ensures $\cG_n$ is a DAG by construction (Prop.~\ref{prop:dag}), eliminating the bias caused by depth-cutoff truncation, and merges all same-depth transpositions, which are the most frequent in structured MDPs. The \emph{full-state mapping} $\phi(s, h) = s$ instead merges all visits to a physical state regardless of depth, drastically reducing the graph size and maximizing information sharing across stages of the trajectory; this relaxation is particularly effective when the environment is cyclic or when $V_0$ is a strong estimator, making the horizon-induced bias negligible. 

\subsection{Analyzing the Merging Trade-off}
\label{sec:bias_analysis}

The transition from depth-augmented to full-state merging involves a fundamental trade-off: the former guarantees convergence to the optimal truncated value, whereas the latter introduces a \textbf{cross-depth bias} $\Delta_{\textsc{cross}}$ that we now quantify. For each physical state $s$, let $\mathcal D(s) = \{h : s \text{ is visited at depth } h \text{ during planning}\}$, with $h_{\min}(s) = \min \mathcal D(s)$ and $h_{\max}(s) = \max \mathcal D(s)$. The cross-depth gap at $s$ is
\begin{equation}
\label{eq:cross_depth_gap_}
    \delta(s) = \max_{h_1,h_2\in\mathcal D(s)} \left|\Vtil(s,h_1)-\Vtil(s,h_2)\right|.
\end{equation}
Since $\Vtil(s,h)=(\mathcal B^{H-h}V_0)(s)$ and $\mathcal B$ is a $\gamma$-contraction in $\|\cdot\|_\infty$,
\begin{equation}
\label{eq:cross_depth_gap_bound}
    \delta(s) \le \frac{\gamma^{H-h_{\max}(s)} - \gamma^{H-h_{\min}(s)}}{1-\gamma}\,\|\mathcal B V_0 - V_0\|_\infty \le \frac{(1+\gamma)\bigl(\gamma^{H-h_{\max}(s)} - \gamma^{H-h_{\min}(s)}\bigr)}{1-\gamma}\,\|\Vstar - V_0\|_\infty,
\end{equation}
using $\|\mathcal B V_0 - V_0\|_\infty \le (1+\gamma)\|\Vstar - V_0\|_\infty$ for the second form. The global cross-depth gap is $\Delta_{\textsc{cross}} = \max_{s\in\cG}\delta(s)$. Thus full-state merging is reliable when the Bellman residual of the playout value is small, when relevant depths are close, or when the remaining horizons $H-h$ are all large; if every physical state is visited at a single depth, then $\delta(s) = 0$.

To resolve this, we use \textbf{depth-augmented nodes}: the graph $\cG_n = (\cN_n, \cE_n)$ has nodes $(s, h) \in \cS \times \{0, \ldots, H\}$, where $s$ is the physical state and $h$ is the trajectory depth at which the node was created. Formally, $\cN_n \subseteq \cS \times \{0, \ldots, H\}$ collects the depth-augmented nodes discovered so far, and $\cE_n$ contains the observed transitions $((s,h), a, (s', h+1))$ with $(s,h), (s',h+1) \in \cN_n$ and $a \in \cA_s$. Since every edge strictly increases the depth coordinate, $\cG_n$ is a DAG by construction. Crucially, two trajectories reaching the same physical state $s$ at the same depth $h$ \emph{are merged} --- they share the node $(s, h)$ and its value estimate $\Vhat(s,h)$ --- capturing the key benefit of graph-based planning while ensuring each node has a well-defined target value $\Vtil(s_h)$ from~\eqref{eq:truncated_bellman}.

\begin{proposition}[DAG by construction]\label{prop:dag}
The search graph $\cG_n$ maintained by GS-Power-UCT is a DAG for any MDP, including those with cyclic transitions; every edge connects $(s,h)$ to $(s',h+1)$, strictly increasing the depth coordinate, so no directed cycle can exist.
\end{proposition}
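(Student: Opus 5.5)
The plan is to use the depth coordinate as a potential function. First I will fix the structure of $\cG_n$ as described above: $\cN_n \subseteq \cS\times\{0,\dots,H\}$, and every edge of $\cE_n$ has the form $((s,h),a,(s',h+1))$. This holds by induction on the number of simulated trajectories $n$. Initially $\cG_0$ consists of the root $(s_0,0)$ alone. Each trajectory starts at the root and, at a node $(s,h)$ with $h<H$, selects an action $a$ and samples $s'\sim P(\cdot\mid s,a)$. It then either looks up or creates the node $(s',h+1)$ and adds the edge from $(s,h)$ to it. Expansion stops at depth $H$, where $V_0$ is used. Since node creation and edge insertion only ever happen in this way, the invariant is preserved at every step. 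This holds regardless of whether $P$ allows $s'=s$ or returns to previously visited physical states.

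Next I will define the map $d:\cN_n\to\{0,\dots,H\}$ by $d(s,h)=h$. By the invariant, for every edge $u\to v$ in $\cE_n$ we have $d(v)=d(u)+1$.

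Now suppose a directed cycle $v_0\to v_1\to\cdots\to v_k=v_0$ exists with $k\ge1$. Summing along the cycle gives $d(v_k)=d(v_0)+k>d(v_0)$. This contradicts $v_k=v_0$, so no directed cycle exists and $\cG_n$ is a DAG.

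As a by-product, $d$ is a topological order, and every path has length at most $H$. This will be convenient later for bottom-up backups and for the induction in the concentration analysis.

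The only real obstacle is making sure the induction covers every way the algorithm can touch the graph. Cyclic transitions in the MDP, where $s'$ equals an ancestor's physical state, must still produce a fresh depth index rather than a back-edge. Merging must happen only on exact $(s,h)$ keys. I will state these explicitly as properties of the node-keying rule $\phi(s,h)=(s,h)$. I will also note that the argument fails for the full-state mapping $\phi(s,h)=s$, which is why that variant needs separate treatment.
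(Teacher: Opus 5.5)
Your proposal is correct and follows the same argument as the paper, whose proof is the one-line observation in the statement itself: every edge goes from depth $h$ to depth $h+1$, so the depth coordinate strictly increases along any path and no directed cycle can exist. Your induction over simulations, which establishes the edge invariant, and your potential-function summation around a hypothetical cycle just spell out that same idea in more detail.
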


The internal nodes $\mathring{\cG}_n$ (fully expanded) and boundary nodes $\partial \cG_n$ (not yet expanded) partition $\cN_n$. For each $(s, h) \in \cN_n$ we maintain \textbf{global} visit counts $T_{s,h}(n)$ (total visits to $(s,h)$), $T_{s,h,a}(n)$ (action $a$ taken at $(s,h)$), and $T_{s,h,a}^{s'}(n)$ (transitions $((s,h),a) \to (s',h+1)$). These aggregate visits from \emph{all} parents reaching $(s,h)$ — the defining feature of global-count graph-based planning.

\begin{remark}[Same-depth merging captures most transpositions]
The most common transpositions in planning problems occur at the same depth --- different action sequences of equal length leading to the same state (e.g., ``left then up'' vs.\ ``up then left'' in a grid) --- and the depth-augmented design merges exactly these. States revisited at different depths (e.g., via a cycle) are kept separate, which is both theoretically necessary (different target values) and practically minor (cross-depth transpositions are rarer in finite-horizon planning).
\end{remark}
\subsection{The Four Steps of GS-Power-UCT}

\begin{table}[t]
\caption{Conditions for algorithmic constants. $h \in [0, H-1]$.}
\label{tab:conditions}
\centering
\footnotesize
\setlength{\tabcolsep}{4pt}
\renewcommand{\arraystretch}{1.1}
\begin{tabular}{l}
\toprule
\textbf{Algorithmic constants} (all rows must hold) \\
\midrule
$2 < b_h < \alpha_h$; \quad $\alpha_h(1 - b_h/\alpha_h) \leq b_h$. \\
$\bigl(1 \leq p \leq 2,\ \alpha_h \leq \tfrac{\beta_h}{2}\bigr)$ OR $\bigl(p > 2,\ \alpha_h \leq \tfrac{\beta_h}{2},\ 0 < \alpha_h - \tfrac{\beta_h}{p} < 1\bigr)$. \\
$\alpha_h = (b_{h+1}-1)(1 - b_{h+1}/\alpha_{h+1})$; \quad $\beta_h = b_{h+1}-1$. \\
\bottomrule
\end{tabular}
\end{table}

\begin{table}[t]
\caption{Comparison of the algorithm variants. $H(n) = \lceil \log(n) / (2\log(1/\gamma))\rceil$ and $c(n)\defeq c(H(n))$. The $V^{\star}(s_0)$ target and displayed rate for F$^+$ are conditional on the cross-depth conditions in Theorem~\ref{thm:adaptive}.}
\label{tab:variants}
\centering
\footnotesize
\setlength{\tabcolsep}{4pt}
\renewcommand{\arraystretch}{1.05}
\begin{tabular}{lccccc}
\toprule
\textbf{Variant} & \textbf{Node key} & \textbf{Max nodes} & \textbf{DAG?} & \textbf{Bias} & \textbf{Convergence target} \\
\midrule
Tree-MCTS & path & $K^H$ & trivially & $0$ & $\Vtil(s_0)$, $O(n^{-1/2})$ \\
GS-Power-UCT & $(s,h)$ & $|\cS|(H+1)$ & by constr. & $0$ & $\Vtil(s_0)$, $O(n^{-1/2})$ \\
GS-Power-UCT-F & $s$ & $|\cS|$ & sometimes & $\Delta_{\textsc{cross}}$ & $\Vtil(s_0)$, $O(n^{-1/2}){+}O(\Delta_{\textsc{cross}})$ \\
GS-Power-UCT-F$^+$ & $s$ & $|\cS|$ & sometimes & $\to 0$ & $V^{\star}(s_0)$, $O(c(n)n^{-1/2})$ \\
\bottomrule
\end{tabular}
\end{table}

Each simulation $t$ of GS-Power-UCT proceeds in four steps.

\paragraph{Step 1: Selection with Lookup.} Starting from the root $(s_0, 0)$, at each internal node $(s, h) \in \mathring{\cG}_n$ select an action via the UCB-style rule
\begin{equation}\label{eq:action_selection}
    a = \argmax_{a \in \cA_s} \left\{\Qhat_{T_{s,h,a}(t)}(s,h,a) + C \cdot \frac{T_{s,h}(t)^{b_{h+1}/\beta_{h+1}}}{T_{s,h,a}(t)^{\alpha_{h+1}/\beta_{h+1}}}\right\},
\end{equation}
where $\{b_h\}$, $\{\alpha_h\}$, $\{\beta_h\}$ satisfy Table~\ref{tab:conditions} and $C > 0$ is an exploration constant. Sample $s' \sim P(\cdot|s,a)$ and perform a \textbf{depth-augmented lookup}: if $(s', h+1) \in \cG_n$ (a \emph{same-depth transposition}), continue selection from the existing node, adding the edge $((s,h),a) \to (s',h+1)$ to $\cE_n$ if new; otherwise proceed to Step~2. Selection terminates at depth $H$.

\paragraph{Step 2: Expansion.} If $(s', h+1) \notin \cG_n$, add it to $\partial\cG_n$ with $T_{s',h+1} = 0$ and append the edge $((s,h),a) \to (s',h+1)$ to $\cE_n$.

\paragraph{Step 3: Evaluation.} For a new boundary node $(s', h+1) \in \partial\cG_n$, draw one fresh rollout $Z_1(s')\sim\pi_0(s')$ with $\EE[Z_1(s')]=V_0(s')$, and set $\Vhat_1(s',h+1)=Z_1(s')$ and $T_{s',h+1}=1$. This is a single initialization sample; internal value estimates subsequently updated by the adaptive search are not treated as i.i.d.\ samples.

\paragraph{Step 4: Backpropagation (Path-Only).} Update statistics only along the trajectory $\{(s_0,0), a_0, \ldots, (s_\ell, \ell)\}$. Writing $T \equiv T_{s_h,h,a_h}(t)$ and $T' \equiv T_{s_h,h}(t)$, and letting $r_t$ be the reward observed on the corresponding transition at simulation $t$, in reverse order:
\begin{align}
    T &\leftarrow T + 1, \quad T' \leftarrow T' + 1, \label{eq:update_Q}\\
    \Qhat_T(s_h, h, a_h) &\leftarrow \tfrac{(T-1)\Qhat_{T-1}(s_h,h,a_h) + r_t + \gamma \Vhat_{T_{s_{h+1},h+1}}(s_{h+1},h+1)}{T}, \\
    \Vhat_{T'}(s_h,h) &\leftarrow \Big(\textstyle\sum_{a \in \cA_{s_h}} \tfrac{T_{s_h,h,a}(t)}{T'}\big(\Qhat_{T_{s_h,h,a}(t)}(s_h, h, a)\big)^p\Big)^{1/p}. \label{eq:update_V}
\end{align}

After $n$ simulations, the recommended action is $\hat{a}_n = \argmax_{a\in\cA_{s_0}} \Qhat_{T_{s_0,0,a}(n)}(s_0, 0, a)$ and the root value estimate is $\Vhat_n(s_0, 0)$.

\begin{remark}[Path-only vs.\ full graph backpropagation]
We deliberately restrict backpropagation to the current trajectory: full graph backpropagation would propagate information faster but introduces update-ordering issues and circular dependencies that break the analysis, whereas path-only updates preserve the ``one simulation, one trajectory'' structure essential for the $(\alpha,\beta)$-concentration framework.
\end{remark}
\section{Theoretical Analysis}
\label{sec:theory}

The depth-augmented graph $\cG_n$ maintained by GS-Power-UCT is a DAG by Proposition~\ref{prop:dag}, without any assumption on the MDP topology. This holds even for MDPs with arbitrary cycles or self-loops. The separate probabilistic and finiteness conditions used by the concentration results are collected in Appendix~\ref{app:assumptions}; the theoretical analysis proceeds by exploiting the DAG structure together with those conditions.

\subsection{Graph Structure and Topological Order}

Since $\cG_n$ is a DAG with depth as the natural topological coordinate, every node $(s, h)$ has a well-defined depth $h \in \{0, 1, \ldots, H\}$. For each node, define:
\begin{itemize}
    \item $\textsc{Parents}(s, h) = \{((s', h-1), a) : ((s', h-1), a, (s, h)) \in \cE_n\}$: the set of parent-action pairs,
    \item $\cN_h = \{(s, h) \in \cN_n\}$: the set of nodes at depth $h$.
\end{itemize}

The key structural difference from tree-MCTS is that a node $(s, h)$ with $|\textsc{Parents}(s,h)| > 1$ receives visits from multiple parent nodes. In tree-MCTS, the visit count $T_{s,h}(n)$ is entirely controlled by one parent's bandit strategy. In GS-Power-UCT, $T_{s,h}(n) = \sum_{((s',h-1),a) \in \textsc{Parents}(s,h)} T_{s',h-1,a}^{s}(n)$, aggregating contributions from all parents.

Each node $(s, h)$ has a well-defined concentration target: the truncated value $\Vtil(s_h)$ defined by the Bellman recursion~\eqref{eq:truncated_bellman} starting from depth $h$ with remaining horizon $H - h$. This is guaranteed because all visits to $(s, h)$ come from depth $h$ --- the depth-augmentation prevents mixing estimates from different effective horizons.

\begin{definition}[Transposition ratio]\label{def:transposition_ratio}
For depth $h$, define the \emph{transposition ratio}
\begin{equation}
    \tau_h \defeq \frac{|\cN_h|}{N_h^{\cT}},
\end{equation}
where $|\cN_h|$ is the number of unique depth-augmented nodes at depth $h$ and $N_h^{\cT}$ is the number of nodes at depth $h$ in the unrolled tree $\cT(\cG)$. We have $\tau_h \in (0, 1]$, with $\tau_h = 1$ corresponding to no transpositions (tree) and $\tau_h \ll 1$ indicating heavy same-depth state merging.
\end{definition}





\subsection{Generalized Topological Concentration Lemma}

The following lemma is the key technical innovation for extending the concentration framework from trees to graphs. It generalizes Lemma~1 of~\citet{dam2024power} to handle the case where child value estimates are updated by visits from multiple parents. For each fixed state--depth--action query, the reward--successor pairs form the fresh local stream in condition (A-Fresh) of Assumption~\ref{assump:standing}; only this local stream is i.i.d., whereas the child estimates are handled through their concentration premise.
\begin{lemma}[Q-value concentration in graph]\label{lem:graph_Q_concentration}
Consider an internal node $s$ in $\cG$ with action $a \in \cA_s$, and let $M = |\{s' : P(s'|s,a) > 0\}|$. For each $m \in [M]$, suppose the value estimates $(\Vhat_{m,n})_{n \geq 1}$ for successor $s_m$ satisfy $\Vhat_{m,n} \conc{\alpha}{\beta} V_m$ with respect to the \textbf{global} visit count of $s_m$, with $\Vhat_{m,n} \leq L$, in the time-uniform form of local hypothesis (H-Q) in Appendix~\ref{app:assumptions}. Let $X_i$ be i.i.d.\ rewards with mean $\mu = r(s,a)$, $S_i \sim P(\cdot|s,a)$ the i.i.d.\ transitions and $N_m^n = \#\{i \leq n : S_i = s_m\}$ the local transition counts. Let $T_{s_m}^{\textsc{ext}}(n)$ be the (non-decreasing, possibly random) visits to $s_m$ from \emph{other} parents, and define the \emph{effective} count $T_{s_m}^{\textsc{eff}}(n) = N_m^n + T_{s_m}^{\textsc{ext}}(n)$. Then, provided $2\alpha \leq \beta$ and $\beta > 1$, the Q-value estimate
\begin{equation}\label{eq:Q_graph}
    \Qhat_n(s,a) = \frac{1}{n}\sum_{i=1}^n X_i + \gamma \sum_{m=1}^{M} \frac{N_m^n}{n}\,\widehat{V}_{m,T_{s_m}^{\textsc{eff}}(n)}
\end{equation}
satisfies $\Qhat_n(s,a) \conc{\alpha}{\beta} \mu + \gamma \sum_{m=1}^M p_m V_m = \Qtil(s,a)$, where $p_m = P(s_m|s,a)$.
\end{lemma}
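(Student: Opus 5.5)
We decompose the error of \eqref{eq:Q_graph} into three pieces and bound each by a term of order $n^{-\alpha}\varepsilon^{-\beta}$, as in Lemma~1 of \citet{dam2024power}. Write
\begin{equation*}
\Qhat_n(s,a)-\Qtil(s,a) = \underbrace{\Bigl(\tfrac1n\textstyle\sum_{i\le n}X_i-\mu\Bigr)}_{(\mathrm I)} + \gamma\underbrace{\textstyle\sum_{m}\Bigl(\tfrac{N_m^n}{n}-p_m\Bigr)V_m}_{(\mathrm{II})} + \gamma\underbrace{\textstyle\sum_m \tfrac{N_m^n}{n}\bigl(\Vhat_{m,T^{\textsc{eff}}_{s_m}(n)}-V_m\bigr)}_{(\mathrm{III})}.
\end{equation*}
By a union bound, $\PP(|\Qhat_n-\Qtil|>\varepsilon)\le \PP(|(\mathrm I)|>\varepsilon/3)+\PP(\gamma|(\mathrm{II})|>\varepsilon/3)+\PP(\gamma|(\mathrm{III})|>\varepsilon/3)$. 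Terms (I) and (II) only involve the fresh i.i.d.\ local stream from (A-Fresh). For these we use Hoeffding's inequality: rewards lie in $[0,R_{\max}]$ and $|V_m|\le L$. This gives bounds of the form $2e^{-cn\varepsilon^2}$ (summed over $M$ for (II)). Since $\sup_{x>0}x^{\beta/2}e^{-x}<\infty$, we have $e^{-cn\varepsilon^2}\le C_\beta (n\varepsilon^2)^{-\beta/2}=C_\beta n^{-\beta/2}\varepsilon^{-\beta}\le C_\beta n^{-\alpha}\varepsilon^{-\beta}$, using $2\alpha\le\beta$. This is exactly where that hypothesis enters.

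For (III), we split over successors, $\PP(|(\mathrm{III})|>\varepsilon')\le\sum_m \PP\bigl(\tfrac{N_m^n}{n}|\Vhat_{m,T^{\textsc{eff}}}-V_m|>\varepsilon'/M\bigr)$. The key graph-specific observation is monotonicity: $T^{\textsc{eff}}_{s_m}(n)\ge N_m^n$, because external visits are nonnegative. So the child estimate has received at least as many samples as in the tree case. The difficulty is that $T^{\textsc{eff}}$ is random and may depend on the child estimates themselves, which rules out simply plugging it into the fixed-$n$ concentration. This is precisely why the premise is stated in the time-uniform form (H-Q). We expect to use it as a maximal bound of the type $\PP\bigl(\exists k\ge k_0: |\Vhat_{m,k}-V_m|>\eta\bigr)\le c\,k_0^{-\alpha}\eta^{-\beta}$, or an equivalent statement that controls the estimate at any stopping-time index $\ge k_0$.

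With this in hand, we split on the event $\{N_m^n\ge k\}$. On $\{N_m^n< k\}$ the weight $N_m^n/n<k/n$ is small and $|\Vhat-V_m|\le 2L$, so this part contributes nothing once $k\le n\varepsilon'/(2LM)$. On $\{N_m^n\ge k\}$ we have $T^{\textsc{eff}}\ge k$ and the weight is at most $1$, so the time-uniform hypothesis gives at most $c\,k^{-\alpha}(\varepsilon'/M)^{-\beta}$. We choose $k$ of order $n\varepsilon'$, clipped appropriately. When $\varepsilon'$ is large enough that this $k$ is tiny, the event is empty since $|\Vhat-V_m|\le 2L$. Otherwise the bound is of order $n^{-\alpha}\varepsilon^{-\alpha-\beta}$, which still needs reconciling with the exponent $\beta$.

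A cleaner route, which we would try first, avoids this slack. Split instead on $\{N_m^n\ge p_m n/2\}$. Its complement has probability $e^{-cn}\le c'n^{-\alpha}\varepsilon^{-\beta}$ for bounded $\varepsilon$ (large $\varepsilon$ is trivial because everything is bounded). On the event itself, $T^{\textsc{eff}}\ge p_m n/2$, so the time-uniform premise yields $c(p_mn/2)^{-\alpha}(\varepsilon/(3\gamma M))^{-\beta}$. Since $M$ is finite and the $p_m$ are bounded below, these constants are absorbed into $c$. Summing the three contributions gives $\Qhat_n\conc{\alpha}{\beta}\Qtil(s,a)$; the condition $\beta>1$ is kept only for compatibility with the recursion in Table~\ref{tab:conditions}.

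The main obstacle is the step on the event $\{N_m^n\ge p_mn/2\}$, namely justifying the evaluation of the child concentration at the random, parent-dependent index $T^{\textsc{eff}}_{s_m}(n)$. Here we rely entirely on the time-uniform form of (H-Q) in Appendix~\ref{app:assumptions}, together with $T^{\textsc{eff}}\ge N_m^n$.
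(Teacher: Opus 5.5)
Your proof is correct, and its graph-specific core is the same as the paper's. The child-value term is controlled by splitting on whether $N_m^n$ exceeds $np_m/2$. You then use $T^{\textsc{eff}}_{s_m}(n)\ge N_m^n$ to apply the time-uniform (H-Q) at a count of order $np_m/2$, and bound the complement by $e^{-np_m^2/2}$. Every exponential term is absorbed into $n^{-\alpha}\varepsilon^{-\beta}$ via $2\alpha\le\beta$ and the almost-sure boundedness of $|\Qhat_n(s,a)-\Qtil(s,a)|$. You were right to abandon your first split at $k\sim n\varepsilon$, since it only gives the exponent $\alpha+\beta$ on $\varepsilon$.

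The one genuine difference is how you distribute the cross term. The paper writes $\langle\widehat{\bm p}_n,\widehat{\bm V}_n\rangle-\langle\bm p,\bm V\rangle=\langle\widehat{\bm p}_n-\bm p,\widehat{\bm V}_n\rangle+\langle\bm p,\widehat{\bm V}_n-\bm V\rangle$, while you use $\langle\widehat{\bm p}_n-\bm p,\bm V\rangle+\langle\widehat{\bm p}_n,\widehat{\bm V}_n-\bm V\rangle$.

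The paper's transition term pairs $\widehat{\bm p}_n-\bm p$ with the adaptive estimates $\widehat{\bm V}_n$, which depend on the same successor draws through $T^{\textsc{eff}}$. It therefore has to pass through $L\|\widehat{\bm p}_n-\bm p\|_1$ and Weissman's $L_1$ inequality, at the cost of a $2^M$ constant. Your term (II) is just $\frac1n\sum_i V(S_i)-\sum_m p_mV_m$ with deterministic $V(s_m)\defeq V_m$, so scalar Hoeffding suffices.

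In exchange, the paper's child term has deterministic weights summing to one and needs no $\varepsilon/M$ split. You can remove yours the same way: $\sum_m N_m^n/n=1$ gives $|(\mathrm{III})|\le\max_{m:\,N_m^n\ge1}|\Vhat_{m,T^{\textsc{eff}}_{s_m}(n)}-V_m|$.

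Both routes yield the same $(\alpha,\beta)$-concentration and differ only in constants.
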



\begin{remark}[Q-value estimator form]\label{rem:q_form}
    The lemma analyzes the estimator $\Qhat_n = \bar{X}_n + \gamma \langle \hat{p}_n, \hat{\bm{V}}_n \rangle$ where $\hat{\bm{V}}_n$ uses the child value estimates at their visit counts at time $n$. The algorithm's running average instead uses the child value estimate \emph{at the time of each individual visit}. These two forms have the same limit $\Qtil(s,a)$ and the concentration proof applies to both, because the bound on $A_2$ controls the deviation $|\Vhat_{m,k} - V_m|$ for each fixed visit count $k$, which holds regardless of whether $k$ is an intermediate or final count. This is the same convention adopted in~\citet{dam2024power,shah2022non}.
\end{remark}

\subsection{Main Result: Root Node Convergence}

We can now state the main convergence result, obtained by induction over the topological order of the DAG.
For a depth-augmented node $(s,h)$, we write
$\Vtil(s,h)$ and $\Qtil(s,h,a)$ as shorthand for the depth-$h$ truncated
Bellman targets $\Vtil(s_h)$ and $\Qtil(s_h,a)$ from
Equation~\eqref{eq:truncated_bellman}.

\begin{theorem}[Convergence rate of expected payoff]\label{thm:rate}
    Under Assumption~\ref{assump:standing}, at the root depth-augmented node $(s_0,0)$, with optimal parameter tuning,
    GS-Power-UCT satisfies
    \begin{equation}
        \left|
            \EE[\Vhat_n(s_0,0)]
            -
            \Vtil(s_0,0)
        \right|
        \leq
        O(n^{-1/2}).
    \end{equation}
\end{theorem}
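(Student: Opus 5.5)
We prove the bound by backward induction over the depth coordinate of the DAG, which is available by Proposition~\ref{prop:dag}. We first establish a concentration statement at every node, in the form $\Vhat_n(s,h)\conc{\alpha_h}{\beta_h}\Vtil(s,h)$ with respect to the global visit count $T_{s,h}(n)$. Integrating this tail bound at the root then gives the expectation bound.

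\textbf{Base case, depth $H$.} A leaf $(s,H)$ is evaluated by playout samples with mean $V_0(s)$ that are bounded in $[0,R_{\max}/(1-\gamma)]$. Hoeffding's inequality gives an exponential tail, and this implies $(\alpha_H,\beta_H)$-concentration for any admissible pair $(\alpha_H,\beta_H)$ with $2\alpha_H\le\beta_H$. The same holds for freshly created boundary nodes.

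\textbf{Inductive step, $h+1\to h$.} Suppose every node at depth $h+1$ satisfies $\Vhat\conc{\alpha_{h+1}}{\beta_{h+1}}\Vtil(\cdot,h+1)$ with respect to its global count, in the time-uniform form of (H-Q). We apply Lemma~\ref{lem:graph_Q_concentration} to each pair $((s,h),a)$. Visits arriving from other parents enter only through $T^{\textsc{ext}}$, and the lemma is designed to absorb exactly this. It yields $\Qhat(s,h,a)\conc{\alpha_{h+1}}{\beta_{h+1}}\Qtil(s,h,a)$. Next we invoke the non-stationary bandit result used for Stochastic-Power-UCT: Theorem~2 of \citet{dam2024power}, going back to \citet{shah2022non}. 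Given arm-wise $(\alpha,\beta)$-concentration and the polynomial bonus in \eqref{eq:action_selection}, this result shows that the power-mean estimator \eqref{eq:power_mean} concentrates to $\max_a\Qtil(s,h,a)=\Vtil(s,h)$ with parameters $\alpha_h=(b_{h+1}-1)(1-b_{h+1}/\alpha_{h+1})$ and $\beta_h=b_{h+1}-1$. These are precisely the recursions in Table~\ref{tab:conditions}, and the constraints listed there, including the $p$-dependent cases, are the hypotheses of that theorem. The bandit argument is local to the node $(s,h)$. It uses only $T_{s,h}$ and the counts $T_{s,h,a}$, and these are global counts updated solely when a trajectory passes through $(s,h)$. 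Hence the tree proof transfers unchanged, and the number of parents of $(s,h)$ plays no role. Inducting down to $h=0$ gives root concentration $\Vhat_n(s_0,0)\conc{\alpha_0}{\beta_0}\Vtil(s_0,0)$ with $T_{s_0,0}(n)=n$.

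\textbf{Expectation bound.} All estimates lie in $[0,L]$ with $L=R_{\max}/(1-\gamma)$. We write
\[
\bigl|\EE[\Vhat_n]-\Vtil\bigr|\le\EE\bigl|\Vhat_n-\Vtil\bigr|\le \varepsilon+L\,\PP\bigl(|\Vhat_n-\Vtil|>\varepsilon\bigr)\le \varepsilon + cLn^{-\alpha_0}\varepsilon^{-\beta_0}.
\]
Optimizing over $\varepsilon$ gives $O(n^{-\alpha_0/(\beta_0+1)})$; integrating the tail instead gives a comparable rate. Under optimal tuning, meaning the $b_h$ are chosen so that the ratios $\alpha_h/\beta_h$ approach $1/2$ as in \citet{dam2024power}, this becomes the $O(n^{-1/2})$ rate. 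If the tuning only reaches the rate in the limit, we adopt that paper's convention, namely $\alpha_0/\beta_0=1/2$ with the bound read through the tail-integration form.

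\textbf{Main obstacle.} The hardest step is justifying that the depth-$(h+1)$ concentration hypothesis holds in the time-uniform form required by (H-Q). The issue is that the child's count is driven jointly by several parents' bandits, so it is random and correlated with the parent's local stream. I would first handle this with a union bound over all possible count values $k\ge T^{\textsc{eff}}$, using the polynomial tail summed over $k$, which requires $\beta>1$. I would then check that this union bound preserves the exponents, so that the tree-case recursion closes without loss.
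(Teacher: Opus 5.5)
Your architecture matches the paper's: backward induction over depth as in Theorem~\ref{thm:main}, with Lemma~\ref{lem:graph_Q_concentration} for the $Q$-step and the local Power-UCT node theorem for the value step, followed by Jensen and a tail bound at the root. Two of your steps, however, fail as written.

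\textbf{The expectation step.} Optimizing $\varepsilon+cLn^{-\alpha_0}\varepsilon^{-\beta_0}$ gives $n^{-\alpha_0/(\beta_0+1)}$. For $\alpha_0/\beta_0=1/2$ this equals $n^{-\beta_0/(2\beta_0+2)}$, which is strictly slower than $n^{-1/2}$, so it is not ``comparable''. The rate comes only from integrating the whole tail and splitting at $\eta_n=n^{-\alpha_0/\beta_0}$:
\[
\EE\bigl|\Vhat_n(s_0,0)-\Vtil(s_0,0)\bigr|\le\int_0^\infty\min\bigl\{1,\,c_0n^{-\alpha_0}\varepsilon^{-\beta_0}\bigr\}\,d\varepsilon\le\Bigl(1+\frac{c_0}{\beta_0-1}\Bigr)n^{-\alpha_0/\beta_0},
\]
using $\beta_0>1$. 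This is exactly the paper's argument, and it must be your main line rather than a fallback. No limiting argument in the tuning is needed either. For $1\le p\le 2$, take $\alpha_{h+1}=2b_{h+1}$ at every level. Then $\alpha_h/\beta_h=1-b_{h+1}/\alpha_{h+1}=1/2$ exactly, and the constraints of Table~\ref{tab:conditions} hold, with equality in $\alpha_h\le 2b_h$ and in $\alpha_h\le\beta_h/2$.

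\textbf{Your resolution of the ``main obstacle''.} The union bound gives $\PP(|\Vhat_N-V|>\varepsilon,\,N\ge r)\le c\,\varepsilon^{-\beta}\sum_{k\ge r}k^{-\alpha}$. This sum is finite because $\alpha>1$ (here $\alpha_h>b_h>2$), not because $\beta>1$, and it is of order $r^{1-\alpha}/(\alpha-1)$.
\begin{itemize}
\item You therefore lose a full power of the count: the time-uniform premise comes out at rate $(\alpha-1,\beta)$, not $(\alpha,\beta)$.
\item So the recursion does not close ``without loss''. It would have to be re-derived with $\alpha_{h+1}-1$ in place of $\alpha_{h+1}$, and every constraint of Table~\ref{tab:conditions} rechecked.
\end{itemize}
The paper does not try to derive the time-uniform form from fixed-count bounds. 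It states (H-Q) in Appendix~\ref{app:assumptions} as an explicit induction hypothesis, which Lemma~\ref{lem:graph_Q_concentration} consumes and the induction in Theorem~\ref{thm:main} carries along. To match the paper, adopt (H-Q) explicitly. To actually prove it, you would need a maximal (time-uniform) version of the node-level Power-UCT concentration itself. A post-hoc union bound over fixed counts cannot supply that at the same exponents.
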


\subsection{Quantifying the Graph Advantage}

While Theorems~\ref{thm:main} and~\ref{thm:rate} show that GS-Power-UCT achieves the same \emph{asymptotic rate} $O(n^{-1/2})$ as tree-based Stochastic-Power-UCT, the graph structure provides a concrete advantage through the concentration constants.

\begin{theorem}[Sample sharing and non-worsening proof constants]
\label{thm:graph_nonworsening}
Fix a depth-augmented search graph $\cG$ and its unrolled tree $\cT(\cG)$.
For each depth $h$, let $\phi_h:\cT_h\to\cN_h$ be the projection from a
tree copy to its depth-augmented graph node, and write
$\mathcal C(v)=\phi_h^{-1}(v)$. Under the natural coupling using the same
simulated trajectories,
\begin{equation}
\label{eq:graph_tree_count_identity_short}
    T_v^{\cG}(n)
    =
    \sum_{u\in\mathcal C(v)}
    T_u^{\cT}(n),
    \qquad
    T_v^{\cG}(n)\ge T_u^{\cT}(n)
    \quad
    \forall u\in\mathcal C(v).
\end{equation}
The same identity holds for action counts $T_{v,a}^{\cG}(n)$ and
$T_{u,a}^{\cT}(n)$. Consequently,
\begin{equation}
\label{eq:average_sample_sharing_identity_short}
    \frac{
        |\cN_h|^{-1}\sum_{v\in\cN_h}T_v^{\cG}(n)
    }{
        |\cT_h|^{-1}\sum_{u\in\cT_h}T_u^{\cT}(n)
    }
    =
    \frac{|\cT_h|}{|\cN_h|}
    =
    \frac{1}{\tau_h}.
\end{equation}

Moreover, apply the same recursive concentration proof to $\cG$ and
$\cT(\cG)$, with identical leaf constants and Stochastic-Power-UCT parameters. If the
recursive constant maps are monotone, in the sense that larger child sample
counts and smaller child concentration constants cannot worsen the parent
constant, then the proof-generated constants satisfy
\begin{equation}
\label{eq:constant_nonworsening_short}
    c_h^{\cG}(v)
    \le
    c_h^{\cT}(u),
    \qquad
    \forall v\in\cN_h,\ \forall u\in\mathcal C(v).
\end{equation}
In particular, $c_0^{\cG}\le c_0^{\cT}$ at the root.
\end{theorem}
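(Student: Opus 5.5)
The argument has three parts, taken in order: a counting identity under the natural coupling, an averaging step that yields the ratio $1/\tau_h$, and an induction on depth for the constants. Fix the $n$ simulated trajectories. Each trajectory is a path $(s_0,0),a_0,(s_1,1),\dots$ in $\cG$. Its lift to $\cT(\cG)$ is the unique root-to-node path of tree copies carrying the same action and successor labels, and $\phi_h$ maps the depth-$h$ copy on this path to $(s_h,h)$. This gives a bijection between visits: each visit of a trajectory to a tree copy $u$ at depth $h$ is exactly one visit to $\phi_h(u)$.

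\emph{Counting identity.} Conversely, every visit of some trajectory to $v=(s,h)$ lifts to exactly one $u\in\mathcal C(v)$, namely the copy indexed by that trajectory's prefix. Summing the indicators over trajectories gives $T_v^{\cG}(n)=\sum_{u\in\mathcal C(v)}T_u^{\cT}(n)$. Applying the same argument to the pair (node, action taken) gives the identity for $T_{v,a}$. Since counts are nonnegative, each summand is at most the sum, so $T_v^{\cG}(n)\ge T_u^{\cT}(n)$.

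\emph{Averaging.} The fibres $\mathcal C(v)$ partition $\cT_h$, because $\phi_h$ is surjective onto $\cN_h$ (every graph node at depth $h$ has some tree copy). Summing the identity over $v\in\cN_h$ therefore gives $\sum_v T_v^{\cG}=\sum_{u\in\cT_h}T_u^{\cT}$. Dividing by $|\cN_h|$ and $|\cT_h|$ respectively yields \eqref{eq:average_sample_sharing_identity_short}, since $\tau_h=|\cN_h|/N_h^{\cT}$ and $N_h^{\cT}=|\cT_h|$. This requires the total count at depth $h$ to be positive. I would restrict to depths that are actually reached, or state the identity for nonempty levels.

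\emph{Constant non-worsening.} I would prove $c_h^{\cG}(v)\le c_h^{\cT}(u)$ by backward induction on $h$, from $H$ down to $0$. At $h=H$ (or at boundary nodes) the leaf constants are identical by hypothesis. Assume the claim at depth $h+1$, and fix $u\in\mathcal C(v)$ at depth $h$. Each child copy $u'$ of $u$ under action $a$ and successor $s'$ satisfies $\phi_{h+1}(u')=(s',h+1)$, which is a child of $v$ under the same $(a,s')$. The graph child therefore has concentration constant at most that of $u'$, by the induction hypothesis, and sample count at least that of $u'$, by the count identity. The transition probabilities $P(s'\mid s,a)$, the parameters $(\alpha_h,\beta_h,b_h,p,C)$ and the reward stream law coincide. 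The recursive constant map $F_h$ produced by Lemma~\ref{lem:graph_Q_concentration} and the power-mean step is assumed monotone, so $c_h^{\cG}(v)=F_h(\text{graph inputs})\le F_h(\text{tree inputs})=c_h^{\cT}(u)$. Taking $h=0$ gives $c_0^{\cG}\le c_0^{\cT}$, since the root has a single copy.

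The main obstacle is making precise that the inputs to $F_h$ are comparable coordinatewise. The child sets of $v$ and $u$ correspond exactly through $(a,s')$, so no child is missing. The subtlety is that ``larger child sample count'' must be read as the effective count $T^{\textsc{eff}}$ used in Lemma~\ref{lem:graph_Q_concentration}, which is the global graph count in $\cG$ and the copy's own count in $\cT$. I would state explicitly that this is exactly the monotonicity hypothesis and that no further property of the bandit dynamics is needed. This is consistent with the theorem's disclaimer that it compares representations rather than algorithms.
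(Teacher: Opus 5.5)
Your proposal is correct and follows essentially the same route as the paper. You use the indicator/lifting argument for the node and action count identities, then equal depth-$h$ totals to get the $1/\tau_h$ ratio, then backward induction on depth driven by the monotonicity hypothesis. The differences are cosmetic: the paper gets the equal totals directly from ``each trajectory contributes one depth-$h$ visit in each representation'' rather than by summing over fibres, and it splits the inductive step into an action-level $Q$-constant comparison followed by a monotone power-mean map $\Phi_h$, which you fold into a single map $F_h$.
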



\subsection{Computational Complexity}

\begin{proposition}[Complexity per simulation]\label{prop:complexity}
    Each simulation of GS-Power-UCT has the following complexity:
    \begin{itemize}
        \item \textbf{Selection + Lookup}: $O(H \cdot (K + T_{\textsc{hash}}))$, where $T_{\textsc{hash}}$ is the cost of a hash table lookup. For discrete states with a hash table, $T_{\textsc{hash}} = O(1)$ amortized.
        \item \textbf{Expansion}: $O(K + T_{\textsc{hash}})$ for creating a new node and adding edges.
        \item \textbf{Evaluation}: $O(H)$ for a rollout of length $H$.
        \item \textbf{Backpropagation}: $O(H \cdot K)$ for updating Q-values and power means along the path.
    \end{itemize}
    Total per simulation: $O(H \cdot (K + T_{\textsc{hash}}))$, which is the same as tree-MCTS up to the $T_{\textsc{hash}}$ factor.
\end{proposition}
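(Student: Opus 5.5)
The statement is Proposition~\ref{prop:complexity}, which is a cost accounting. I would charge each of the four steps of a simulation separately and then add the totals. The one structural fact used throughout is that every trajectory visits at most $H+1$ depth-augmented nodes. Every edge goes from depth $h$ to depth $h+1$ (Proposition~\ref{prop:dag}), and selection stops at depth $H$, so a path has length at most $H$.

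\textbf{Selection with lookup.} At each internal node $(s,h)$ on the path, evaluating the rule \eqref{eq:action_selection} means computing one score per action. Each score uses only stored quantities: $\Qhat$, $T_{s,h}$, $T_{s,h,a}$ and precomputed exponents. That costs $O(1)$ per action and $O(K)$ per node in total, since $|\cA_s|\le K$. The successor is then sampled with one simulator call, charged $O(1)$. Next comes one hash lookup on the key $(s',h+1)$, at cost $T_{\textsc{hash}}$. Inserting the edge, if it is new, goes into a per-(node, action) hash or list, again $O(T_{\textsc{hash}})$ amortized. Summing over at most $H$ levels gives $O(H(K+T_{\textsc{hash}}))$.

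\textbf{Expansion and evaluation.} Expansion happens at most once per simulation. It inserts one key into the node table and allocates arrays of size $|\cA_{s'}|\le K$ for the counts and $Q$ estimates, for $O(K+T_{\textsc{hash}})$. Evaluation is one rollout of $\pi_0$. I would take the rollout length to be bounded by $H$, as the proposition does, with $O(1)$ cost per simulator step, which gives $O(H)$.

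\textbf{Backpropagation.} Along the path, each node needs $O(1)$ work for the count increments and the running-average update of $\Qhat$ in \eqref{eq:update_Q}. Recomputing the power mean \eqref{eq:update_V} sums over $|\cA_{s_h}|\le K$ actions with one $p$-th power each, and then takes one $1/p$ root. That is $O(K)$ per node, so $O(HK)$ over the path. The path-only rule from the Remark matters here: no other parents of a shared node are touched, so the aggregation of counts in the graph adds nothing to the cost.

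Adding the four terms gives $O(H(K+T_{\textsc{hash}})) + O(K+T_{\textsc{hash}}) + O(H) + O(HK) = O(H(K+T_{\textsc{hash}}))$. Tree-MCTS performs exactly the same selection, expansion, rollout and backup operations, but with pointer-following in place of hash lookups. So the two differ only in the $T_{\textsc{hash}}$ term.

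The only subtle point is justifying $O(1)$ amortized hashing and $O(1)$ edge insertion. For discrete states with a good hash function this is standard expected amortized behaviour under table doubling, and I would state it as an assumption, not prove it. I would also note that power-mean recomputation could be made incremental, but the $O(K)$ bound already suffices.
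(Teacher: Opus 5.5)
Your proof is correct and follows the paper's own justification, which appears only informally in the runtime appendix. That argument has at most $H$ depths per path, each costing one expected-$O(1)$ hash lookup plus $O(K)$ work for the selection scores and the power-mean backup; the single expansion and the single rollout add lower-order terms. One detail to tighten: the \emph{is this edge new} check and the $T_{s,h,a}^{s'}$ increment should use a hash keyed by $s'$ (or a pointer saved during selection), because a plain list costs the size of the successor support rather than $T_{\textsc{hash}}$.
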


\subsection{Convergence of the Full-Merge Variant}
\label{sec:theory_full_merge}

We now analyze the convergence of GS-Power-UCT-F (Algorithm~\ref{alg:full_merge}), the full state merging variant from Section~\ref{sec:full_merge}. The key difference is that a shared node $s$ visited at multiple depths targets a mixture of truncated values rather than a single well-defined limit. We show that the convergence rate is preserved up to an additive bias term.


\begin{theorem}[Convergence rate of GS-Power-UCT-F]
\label{thm:full_merge_rate}
Apply GS-Power-UCT-F (Algorithm~\ref{alg:full_merge}) with algorithmic constants $\{b_h\}_{h=0}^{H-1}$, $\{\alpha_h\}_{h=0}^{H-1}$, $\{\beta_h\}_{h=0}^{H-1}$ satisfying Table~\ref{tab:conditions}. If the empirical full-merge target $\bar V_n$ from Definition~\ref{def:empirical_full_merge_target} satisfies $\EE[|\bar V_n(s_0)-\Vtil(s_0,0)|] \le C_{\Delta}\Delta_{\textsc{cross}}$, then under the optimal tuning $\alpha_0/\beta_0 = 1/2$ the root estimate satisfies
\begin{equation}
\label{eq:full_merge_root_rate}
    \left| \EE[\Vhat_n(s_0)] - \Vtil(s_0,0) \right| \le \underbrace{O(n^{-1/2})}_{\text{sampling error}} + \underbrace{O(\Delta_{\textsc{cross}})}_{\text{cross-depth bias}}.
\end{equation}
The first term matches tree-based Stochastic-Power-UCT (Theorem~\ref{thm:rate}); the second term vanishes when $\Delta_{\textsc{cross}} = 0$, recovering that rate exactly.
\end{theorem}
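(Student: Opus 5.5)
The plan is to split the root error through the empirical full-merge target $\bar V_n(s_0)$ from Definition~\ref{def:empirical_full_merge_target}, so that the two terms in \eqref{eq:full_merge_root_rate} come from two separate pieces. By the triangle inequality and Jensen,
\begin{equation*}
\left|\EE[\Vhat_n(s_0)]-\Vtil(s_0,0)\right| \le \EE\bigl[|\Vhat_n(s_0)-\bar V_n(s_0)|\bigr] + \EE\bigl[|\bar V_n(s_0)-\Vtil(s_0,0)|\bigr].
\end{equation*}
The second term is at most $C_\Delta\Delta_{\textsc{cross}}$ by hypothesis, which gives the $O(\Delta_{\textsc{cross}})$ bias term. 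All remaining work is to show that the first term is $O(n^{-1/2})$. In other words, we must show that GS-Power-UCT-F concentrates around its own, possibly horizon-mixed, target at the same rate that GS-Power-UCT achieves around $\Vtil$.

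For the sampling term, the approach is to rerun the inductive argument behind Theorems~\ref{thm:main} and~\ref{thm:rate}, replacing the target $\Vtil(s,h)$ with $\bar V_n(s)$ at every node. The induction goes backward over depth levels of the trajectories, not over nodes.
\begin{itemize}
\item \emph{Base case (leaves).} Rollout estimates concentrate by the bounded, i.i.d.\ leaf assumption.
\item \emph{Q-step.} At a node $s$ with action $a$, apply Lemma~\ref{lem:graph_Q_concentration}. Visits to a successor from other parents, including parents at other depths, enter only through the external count $T^{\textsc{ext}}$. The lemma already permits $T^{\textsc{ext}}$ to be arbitrary, non-decreasing and random, so cross-depth visits are absorbed with no new argument. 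This yields $\Qhat$ concentrating at rate $(\alpha_h,\beta_h)$ toward $r+\gamma\sum_m p_m \bar V(s_m)$.
\item \emph{V-step.} The power-mean and polynomial-UCB step of \citet{dam2024power} (Lemma~2 there) converts $(\alpha,\beta)$-concentration of the children's Q-values into concentration of the node's power mean. The conditions of Table~\ref{tab:conditions} propagate the exponents.
\end{itemize}
At the root this gives $\Vhat_n(s_0)\conc{\alpha_0}{\beta_0}\bar V_n(s_0)$. Integrating the tail $\PP(|\cdot|>\varepsilon)\le c n^{-\alpha_0}\varepsilon^{-\beta_0}$, using boundedness by $R_{\max}/(1-\gamma)$, gives $\EE|\cdot| = O(n^{-\alpha_0/\beta_0}) = O(n^{-1/2})$ under the tuning $\alpha_0/\beta_0=1/2$.

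The main obstacle is that under full merging a shared node has no single fixed limit. Its target $\bar V_n(s)$ is an empirical mixture over the depths at which $s$ was visited, so it is random and time-dependent. Definition~\ref{def:concentration}, however, requires a fixed $V$. My first attempt would be to work conditionally on the depth-visit proportions, treating $\bar V_n$ as a visit-weighted average of depth-indexed fixed points. I would then verify that each recursion step is Lipschitz, with factor $\gamma$ for Q and 1 for the power mean, in the target. This should let the deviation of the random target be bounded uniformly by $\Delta_{\textsc{cross}}$-type quantities.

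If that conditioning does not close, the fallback is to absorb the target drift into the bias term. The drift is bounded pointwise by $\delta(s)\le\Delta_{\textsc{cross}}$, and it accumulates with geometric factors $\gamma^h$ along depths. So the constant in $O(\Delta_{\textsc{cross}})$ would become at most $C_\Delta+\sum_h\gamma^h\le C_\Delta+1/(1-\gamma)$, while the $O(n^{-1/2})$ sampling rate is unaffected. Finally, setting $\Delta_{\textsc{cross}}=0$ makes $\bar V_n\equiv\Vtil$, which recovers Theorem~\ref{thm:rate}.
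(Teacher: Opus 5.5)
\paragraph{Verdict: genuine gap.} Your split through $\bar V_n(s_0)$ is legitimate, and so is the final tail integration. However, the split puts all of the content into the claim $\EE|\Vhat_n(s_0)-\bar V_n(s_0)|=O(n^{-1/2})$. You only sketch that claim, and it fails in general.

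Lemma~\ref{lem:graph_Q_concentration} cannot absorb cross-depth visits into $T^{\textsc{ext}}$. Its premise is that the child estimate already concentrates around a \emph{fixed} $V_m$, which the DAG proof supplies by induction one level deeper. $T^{\textsc{ext}}$ only changes the count at which that already-concentrating estimate is read. Under full merging, the child statistic is one object updated from every depth, and on a cycle it may be $s$ itself or an ancestor. With the depth-independent target $\bar V_n(s)$, your backward induction over depth is therefore circular: whenever $s$ recurs at depths $h$ and $h+1$, the hypothesis at level $h+1$ is exactly the conclusion you want at level $h$. Conditioning on the depth proportions does not help. They are functions of the same successor draws and adaptive choices whose i.i.d.\ or martingale structure every tail bound uses.

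Concretely, take the one-state, one-action self-loop from the proof of Proposition~\ref{prop:bias} ($V_0=0$) with $p=\infty$. Algorithm~\ref{alg:full_merge} visits the shared node at every depth, and the depth-$H$ rollout only perturbs the shared estimate by $O(1/T_s)$. The statistic therefore follows the deterministic recursion $\Qhat_t=\Qhat_{t-1}+\frac1t\bigl(1-(1-\gamma)\Qhat_{t-1}\bigr)+O(t^{-2})$. It converges to $1/(1-\gamma)$, not to the fixed point $H/(H-\gamma(H-1))$ of $\mathcal T^H_{F,n}$. It also converges only at rate $t^{-(1-\gamma)}$, which is slower than $t^{-1/2}$ for $\gamma>1/2$. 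Finite $p$ merely rescales the effective discount.

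\paragraph{What the paper does instead.} The paper never passes through $\bar V_n$; the stated hypothesis on it is not invoked in its proof. It keeps the fixed depth-specific targets and proves the biased bound $\PP\bigl(|\Vhat_t(s)-\Vtil(s,h)|>B_{\Delta,n}+\varepsilon\bigr)\le c_h t^{-\alpha_h}\varepsilon^{-\beta_h}$, with $B_{\Delta,n}=\delta_{Q,n}/(1-\gamma)$. This rests on two new tools.
\begin{itemize}
\item Lemma~\ref{lem:robust_full_merge_Q} splits each backup into three parts:
  \begin{itemize}
  \item a transition--reward martingale difference;
  \item a child error, divided into a predictable part with $|b_i|\le B+C_{\mathrm{ch}}J_i^{-\rho}$ and a centred part handled by Azuma--Hoeffding;
  \item a deterministic drift bounded by $\delta_{Q,t}$.
  \end{itemize}
\item Lemma~\ref{lem:robust_power_uct_value} carries a $B$-shift of the action targets through the power-mean step, using that $\max$ is $1$-Lipschitz.
\end{itemize}
Integrating the tail above $B_{\Delta,n}$ gives $C_{\mathrm{samp}}n^{-1/2}+\EE[B_{\Delta,n}]$. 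Under $\delta_{Q,n}\le\gamma\Delta_{\textsc{cross}}$, the last term is at most $\frac{\gamma}{1-\gamma}\Delta_{\textsc{cross}}$.

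Your fallback of absorbing the drift into the bias points in this direction, but it needs exactly these biased versions of the $Q$- and $V$-steps. Neither Lemma~\ref{lem:graph_Q_concentration} nor Theorem~\ref{thm:node_concentration} tolerates an additive bias radius. Once you have such versions, the detour through $\bar V_n$ is unnecessary. The constant $C_\Delta+1/(1-\gamma)$ you propose is asserted rather than derived.
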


\begin{remark}[Practical implications]
Theorem~\ref{thm:full_merge_rate} suggests a principled choice between variants: use depth-augmented GS-Power-UCT when the cross-depth bias $\Delta_{\textsc{cross}}$ is comparable to or larger than the sampling error $O(n^{-1/2})$, and GS-Power-UCT-F when $\Delta_{\textsc{cross}} \ll n^{-1/2}$ (good evaluation function, large $H$, or few cross-depth transpositions) for the smaller graph and greater state sharing.
\end{remark}

\subsection{Controlling Cross-Depth Bias with Depth-Independent Exploration}
\label{sec:no_bias}

The cross-depth bias in Theorem~\ref{thm:full_merge_rate} originates from the
finite-horizon cutoff: the same physical state $s$ may be entered at different
trajectory depths, and hence may be backed up with different remaining horizons.
Depth-dependent exploration parameters can introduce an additional artificial
source of depth dependence. We remove this artificial source by using a
depth-independent exploration bonus. However, this does \emph{not} make the
finite-horizon targets $\Vtil_H(s,h)$ independent of $h$. Therefore, for the
full-merge variant we do not claim exact zero cross-depth bias at every local
node. Instead, we obtain a root-level convergence bound with an explicit
cross-depth bias term, and this term vanishes under a natural depth-slack
condition.

\paragraph{Depth-independent exploration bonus.} Remark~2 of~\citet{dam2024power} observes that the optimal parameter choice $\alpha_i/\beta_i = 1/2$ and $b_i/\beta_i = 1/4$ for all $i \in [0, H-1]$ yields the exploration bonus:
\begin{equation}\label{eq:uniform_bonus}
    B(n, s, a) = C \cdot \frac{T_s(n)^{1/4}}{T_{s,a}(n)^{1/2}},
\end{equation}
which is \emph{independent of the depth $h$}. With this choice, the action selection rule at any node $s$ depends only on the global visit counts $(T_s, T_{s,a})$ and Q-value estimates $\Qhat(s,a)$, none of which carry depth information. Consequently, the exploration rule itself no longer depends on the trajectory
depth at which $s$ is entered. The remaining source of cross-depth discrepancy is
only the finite-horizon cutoff.


If we further choose the horizon $H$ large enough, the cutoff becomes irrelevant: both visits explore the subgraph below $s$ to sufficient depth that the truncation error is negligible. This motivates the \emph{adaptive horizon} strategy.

\paragraph{Adaptive horizon.}
Given a budget of $n$ simulations, set the planning horizon to
\begin{equation}
\label{eq:adaptive_H}
    H(n)
    =
    \left\lceil
        \frac{\log n}{2\log(1/\gamma)}
    \right\rceil ,
\end{equation}
so that $\gamma^{H(n)} \le n^{-1/2}$. This choice makes the ordinary
root-level truncation error match the $n^{-1/2}$ sampling scale. It does not,
by itself, imply that every full-merged node has zero cross-depth bias. To state
the required condition precisely, define the adaptive cross-depth gap below.

\begin{definition}[Adaptive empirical cross-depth gap]
\label{def:adaptive_cross_depth_gap}
Fix a horizon $H$, let $\Vtil_H(s,h)$ be the depth-$h$ truncated Bellman target with terminal value $V_0$ at depth $H$ from~\eqref{eq:truncated_bellman}, and let $\mathcal D_{H,n}(s) \subseteq \{0,\ldots,H\}$ be the set of trajectory depths at which physical state $s$ is visited during the first $n$ simulations of the full-merge algorithm with horizon $H$. The empirical cross-depth gap at horizon $H$ is
\begin{equation}
\label{eq:adaptive_cross_depth_gap}
    \Delta_{\textsc{cross}}^{H,n} = \max_{s:\,\mathcal D_{H,n}(s)\neq\emptyset}\; \max_{h_1,h_2\in\mathcal D_{H,n}(s)} \left|\Vtil_H(s,h_1)-\Vtil_H(s,h_2)\right|,
\end{equation}
and we write $\Delta_{\textsc{cross}}^{n} \defeq \Delta_{\textsc{cross}}^{H(n),n}$ when $H = H(n) = \lceil \log n / (2\log(1/\gamma)) \rceil$.
\end{definition}

\begin{theorem}[Convergence of GS-Power-UCT-F$^+$ with adaptive horizon]
\label{thm:adaptive}
Consider GS-Power-UCT-F$^+$ with node key $s$, depth-independent bonus $B(n,s,a) = C\,T_s(n)^{1/4}/T_{s,a}(n)^{1/2}$, and adaptive horizon $H_n \defeq H(n) = \lceil \log n / (2\log(1/\gamma)) \rceil$. Let $\Vhat_n^{H_n}(s_0)$ be the root estimate after $n$ simulations at horizon $H_n$, and assume the hypotheses of Theorem~\ref{thm:full_merge_rate} hold for each fixed horizon $H$ with sampling constant $c(H)$ and bias constant $C_{\Delta}$. Then for every $n\ge 2$,
\begin{equation}
\label{eq:adaptive_bound_corrected}
    \left|\EE[\Vhat_n^{H_n}(s_0)] - V^{\star}(s_0)\right| \le \underbrace{c(H_n)\,n^{-1/2}}_{\text{sampling}} + \underbrace{C_{\Delta}\,\EE[\Delta_{\textsc{cross}}^{n}]}_{\text{cross-depth bias}} + \underbrace{\gamma^{H_n}\|\Vstar-V_0\|_{\infty}}_{\text{truncation}}.
\end{equation}
Since $\gamma^{H_n}\le n^{-1/2}$, this simplifies to $|\EE[\Vhat_n^{H_n}(s_0)] - V^{\star}(s_0)| \le (c(H_n)+\|\Vstar-V_0\|_{\infty})/\sqrt n + C_{\Delta}\,\EE[\Delta_{\textsc{cross}}^{n}]$. Consequently, GS-Power-UCT-F$^+$ is consistent at the root ($\EE[\Vhat_n^{H_n}(s_0)] \to V^{\star}(s_0)$) whenever both $c(H_n)\,n^{-1/2}\to 0$ and $\EE[\Delta_{\textsc{cross}}^{n}]\to 0$; if in addition $\EE[\Delta_{\textsc{cross}}^{n}] = O(n^{-1/2})$ and $c(H_n)$ grows at most polylogarithmically in $n$, the rate is $O(c(H_n)/\sqrt n)$.
\end{theorem}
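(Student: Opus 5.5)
The plan is to split the error by the triangle inequality through the finite-horizon target at the adaptive horizon:
\begin{equation*}
\left|\EE[\Vhat_n^{H_n}(s_0)] - V^{\star}(s_0)\right| \le \left|\EE[\Vhat_n^{H_n}(s_0)] - \Vtil_{H_n}(s_0,0)\right| + \left|\Vtil_{H_n}(s_0,0) - V^{\star}(s_0)\right|.
\end{equation*}
The second term is deterministic. We have $\Vtil_{H_n}(s_0,0)=(\mathcal B^{H_n}V_0)(s_0)$ and $V^\star=\mathcal B^{H_n}V^\star$, and $\mathcal B$ is a $\gamma$-contraction in $\|\cdot\|_\infty$. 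Hence this term is at most $\gamma^{H_n}\|\Vstar-V_0\|_\infty$, which is the truncation term already noted after \eqref{eq:truncated_bellman}.

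For the first term, we fix $n$ and regard the run as GS-Power-UCT-F with the fixed horizon $H=H_n$. This requires a check: the depth-independent bonus $B(n,s,a)$ must be an admissible instance of the rule \eqref{eq:action_selection}. It is, since it corresponds to the tuning $\alpha_h/\beta_h=1/2$, $b_h/\beta_h=1/4$ for every $h$, which is the optimal tuning assumed in Theorem~\ref{thm:full_merge_rate}. Its hypotheses are assumed to hold at each fixed $H$ with constants $c(H)$ and $C_\Delta$.

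The delicate point is the bias term. Theorem~\ref{thm:full_merge_rate} is stated with the deterministic $\Delta_{\textsc{cross}}$, whereas here we want $\EE[\Delta^n_{\textsc{cross}}]$. I would reopen its proof at the step where it compares $\bar V_n(s_0)$ with $\Vtil(s_0,0)$. I would use the hypothesis in the pathwise-bounded form $\EE|\bar V_n(s_0)-\Vtil_{H}(s_0,0)|\le C_\Delta\,\EE[\Delta^{H,n}_{\textsc{cross}}]$. This is natural, because the empirical target mixes only truncated values $\Vtil_H(s,h)$ at depths actually visited in the first $n$ simulations, so its deviation is controlled by the realized gap \eqref{eq:adaptive_cross_depth_gap}. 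The sampling part is then $|\EE[\Vhat_n]-\EE[\bar V_n]|\le c(H)n^{-1/2}$, exactly as in Theorem~\ref{thm:full_merge_rate}. Setting $H=H_n$ gives the first two terms of \eqref{eq:adaptive_bound_corrected}.

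One further subtlety is that the horizon is chosen before the run, as a deterministic function of $n$, so no uniformity in $n$ at fixed $H$ is needed. Each $n$ is a separate fixed-horizon instance, and the constants are simply evaluated at $H_n$; this is why the bound carries $c(H_n)$.

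The remaining claims are routine. From $H_n\ge \log n/(2\log(1/\gamma))$ we get $\gamma^{H_n}\le n^{-1/2}$, which gives the simplified bound. Consistency follows directly when $c(H_n)n^{-1/2}\to0$ and $\EE[\Delta^n_{\textsc{cross}}]\to0$. If moreover $\EE[\Delta^n_{\textsc{cross}}]=O(n^{-1/2})$ and $c(H_n)$ grows polylogarithmically, every term is $O(c(H_n)/\sqrt n)$, assuming $c(H_n)$ is bounded below by a positive constant.

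The main obstacle is therefore the bias step: justifying that the fixed-horizon hypothesis can be used with the random, horizon-dependent gap $\Delta^{H,n}_{\textsc{cross}}$ rather than a worst-case constant.
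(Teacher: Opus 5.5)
Your argument is the paper's proof. Both use the triangle inequality through $\Vtil_{H_n}(s_0,0)$ and apply Theorem~\ref{thm:full_merge_rate} at the fixed horizon $H=H_n$ for the first term. Both bound the second term by $\gamma^{H_n}\|\Vstar-V_0\|_\infty$, then use $\gamma^{H_n}\le n^{-1/2}$ to get the simplified bound, consistency and the rate. The paper does not reopen anything for the bias term. It reads Theorem~\ref{thm:full_merge_rate} at $H=H_n$ as directly giving $c(H_n)n^{-1/2}+C_\Delta\,\EE[\Delta^{H_n,n}_{\textsc{cross}}]$. So the issue you flag, a deterministic gap versus a realized one, is absorbed into the hypothesis rather than argued.

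Your proposed justification of that step would not go through as written. Theorem~\ref{thm:full_merge_rate} neither states nor proves a sampling bound $|\EE[\Vhat_n]-\EE[\bar V_n]|\le c(H)n^{-1/2}$, and its proof never uses $\bar V_n$. The step where $\bar V_n$ is compared with $\Vtil(s_0,0)$ does not exist.

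What that proof actually yields, before its last line, is
\[
|\EE[\Vhat_n(s_0)]-\Vtil(s_0,0)|\le C_{\mathrm{samp}}n^{-1/2}+\EE[\delta_{Q,n}]/(1-\gamma),
\]
where $\delta_{Q,n}$ is the realized $Q$-level drift. The random gap belongs there: if $\delta_{Q,n}\le\gamma\Delta^{H,n}_{\textsc{cross}}$ almost surely (as in Corollary~\ref{cor:delta_cross_controls_Q_drift}), you get exactly the stated bias term with $C_\Delta=\gamma/(1-\gamma)$.

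This pathwise inequality is an extra condition, not automatic. It needs the realized gap to dominate $|\Vtil(s',h_i+1)-\Vtil(s',h+1)|$ for every successor $s'$ in the support of $P(\cdot\mid s,a)$, not only for successors actually visited at both depths.

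Your remark that the final $O(c(H_n)/\sqrt n)$ rate needs $c(H_n)$ bounded below by a positive constant is correct. The paper leaves that detail implicit.
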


\begin{remark}[Scope of Theorem~\ref{thm:adaptive}]
The depth-independent bonus removes depth dependence from the exploration rule,
but not from the finite-horizon targets $\Vtil_H(s,h)$. Thus
Theorem~\ref{thm:adaptive} gives a controlled-bias guarantee through
$\Delta_{\textsc{cross}}^{n}$, and consistency follows when
$\EE[\Delta_{\textsc{cross}}^{n}]\to 0$.
\end{remark}

\begin{remark}[On the constant $c(n)$]
We write $c(n)=c(H(n))$ for the fixed-horizon concentration constant inherited
from Theorem~\ref{thm:full_merge_rate}. The adaptive result requires
$c(H(n))n^{-1/2}\to0$; in particular, if $c(H)$ grows polynomially in $H$, then
$c(H(n))$ is polylogarithmic in $n$.
\end{remark}

\begin{remark}[Connection to depth augmentation]
Depth-augmented GS-Power-UCT has zero cross-depth bias for every fixed horizon
because nodes are keyed by $(s,h)$. Full-state GS-Power-UCT-F$^+$ uses the
smaller key $s$ and can converge to $V^{\star}(s_0)$ with adaptive $H(n)$, provided
the empirical cross-depth gap in Theorem~\ref{thm:adaptive} vanishes.
\end{remark}
\section{Experiments}
\label{sec:experiments}
\paragraph{Planning setting and scope.} We study online model-based replanning: at each decision, a planner receives a fixed simulation budget and calls a generative model of the current MDP. No persistent value function is trained across episodes. TD($\lambda$), $n$-step TD, RTDP, and Dyna instead learn a value function and/or model across interaction, so their learning resource is not directly comparable to a per-decision simulation budget. They are therefore outside this empirical comparison; a learned value function can nevertheless be used as the terminal evaluator $V_0$.

We evaluate our two graph-based variants on stochastic MDP environments and compare them against four baselines. \textbf{UCT}~\citep{kocsis2006bandit} is the classical tree-based MCTS algorithm with a logarithmic exploration bonus and arithmetic-mean backup. \textbf{MENTS}~\citep{xiao2019maximum} incorporates a maximum entropy framework with a softmax backup to facilitate entropy-regularized exploration. \textbf{Stochastic-Power-UCT}~\citep{dam2024power} (P-UCT in the figure and tables) extends the polynomial-bonus design with power mean value backup, establishing $O(n^{-1/2})$ concentration for stochastic MDPs while remaining tree-based. \textbf{GBOP}~\citep{leurent2020monte} is the graph-based baseline with an OFU planning style. Our \textbf{GS-Power-UCT} (Algorithm~\ref{alg:gs_power_uct}) combines the polynomial bonus and power mean backup with a depth-augmented graph that merges same-depth transpositions, while \textbf{GS-Power-UCT-F$^+$} relaxes the merging rule to share a single node per physical state across depths. For F$^+$, at every reported budget $n$ we set $H=H(n)$ once before the search and keep it fixed for all $n$ simulations.

\begin{figure*}
\centering
\begin{tikzpicture}
\begin{groupplot}[
  width=0.31\textwidth,
  height=0.24\textwidth,
  grid=major,
  major grid style={draw=black!12},
  tick label style={font=\scriptsize},
  label style={font=\scriptsize},
  title style={font=\small\bfseries},
  every axis plot/.append style={line cap=round},
  group style={
    group size=3 by 1,
    horizontal sep=1.5cm,
    vertical sep=1.3cm,
    xlabels at=edge bottom,
  },
]

\BudgetNextGroupPlot{
  title={FrozenLake},
  ymin=0, ymax=0.70,
  legend to name=planninglegend,
  legend style={font=\scriptsize, draw=none, fill=none,
      /tikz/every even column/.append style={column sep=0.45em}},
  legend columns=3,
}
\addplot+[mctsuct]  coordinates {(16,0.002) (32,0.004) (64,0.006) (128,0.011) (256,0.016) (512,0.021) (1024,0.047) (2048,0.064)};
\addplot+[poweruct] coordinates {(16,0.003) (32,0.006) (64,0.004) (128,0.012) (256,0.020) (512,0.014) (1024,0.047) (2048,0.061)};
\addplot+[ments]    coordinates {(16,0.002) (32,0.011) (64,0.016) (128,0.067) (256,0.127) (512,0.156) (1024,0.209) (2048,0.257)};
\addplot+[gspower]  coordinates {(16,0.051) (32,0.14) (64,0.227) (128,0.314) (256,0.395) (512,0.48) (1024,0.527) (2048,0.56)};
\addplot+[gspowerf] coordinates {(16,0.105) (32,0.139) (64,0.183) (128,0.208) (256,0.276) (512,0.353) (1024,0.413) (2048,0.478)};
\addplot+[GBOP]     coordinates {(16,0) (32,0) (64,0.060) (128,0.198) (256,0.236) (512,0.262) (1024,0.246) (2048,0.294)};
\legend{UCT, P-UCT, MENTS, GS-Power-UCT, GS-Power-UCT-F$^+$, GBOP}

\BudgetNextGroupPlot{title={Passenger Grid}, ymin=0, ymax=6.6}
\addplot+[mctsuct]  coordinates {(16,0.685) (32,0.828) (64,1.053) (128,1.413) (256,1.668) (512,2.093) (1024,2.432) (2048,2.586)};
\addplot+[poweruct] coordinates {(16,0.661) (32,0.861) (64,1.22) (128,1.58) (256,1.966) (512,2.524) (1024,2.955) (2048,3.366)};
\addplot+[ments]    coordinates {(16,0.587) (32,0.739) (64,0.911) (128,1.298) (256,1.679) (512,2.091) (1024,2.736) (2048,3.543)};
\addplot+[gspower]  coordinates {(16,1.877) (32,3.062) (64,3.919) (128,4.544) (256,5.178) (512,5.363) (1024,5.801) (2048,6.134)};
\addplot+[gspowerf] coordinates {(16,1.253) (32,1.324) (64,1.413) (128,1.452) (256,1.574) (512,1.684) (1024,1.847) (2048,1.993)};
\addplot+[GBOP]     coordinates {(16,0) (32,0) (64,0) (128,0.046) (256,1.570) (512,3.756) (1024,3.452) (2048,3.546)};

\BudgetNextGroupPlot{title={Factored River Swim}, ymin=1, ymax=7.3}
\addplot+[mctsuct]  coordinates {
    (16,1.31922) (32,1.34397) (64,1.59718) (128,1.80915) 
    (256,1.99507) (512,2.28615) (1024,2.43997) (2048,2.43747)
};
\addplot+[poweruct] coordinates {
    (16,1.65415) (32,1.99945) (64,2.34813) (128,2.66555) 
    (256,3.01003) (512,3.18552) (1024,2.96825) (2048,2.54082)
};
\addplot+[ments]    coordinates {
    (16,1.12058) (32,1.35172) (64,1.53537) (128,1.64488) 
    (256,1.71988) (512,1.74883) (1024,1.75) (2048,1.75)
};
\addplot+[gspower]  coordinates {
    (16,2.5517) (32,3.25923) (64,4.35333) (128,5.86532) 
    (256,6.75435) (512,6.97668) (1024,7.04837) (2048,7.04397)
};
\addplot+[gspowerf] coordinates {
    (16,5.78362) (32,6.64882) (64,6.91322) (128,6.98973) 
    (256,6.84702) (512,6.496) (1024,5.80758) (2048,5.14393)
};
\addplot+[GBOP]     coordinates {
    (16,1.75) (32,1.749) (64,1.745) (128,2.165) (256,2.583) (512,3.231) (1024,4.261) (2048,5.290)
};
\end{groupplot}
\end{tikzpicture}
\par\bigskip
\begin{tikzpicture}
\begin{groupplot}[
  width=0.31\textwidth,
  height=0.24\textwidth,
  grid=major,
  major grid style={draw=black!12},
  tick label style={font=\scriptsize},
  label style={font=\scriptsize},
  title style={font=\small\bfseries},
  every axis plot/.append style={line cap=round},
  group style={
    group size=2 by 1,
    horizontal sep=1.5cm,
    vertical sep=1.3cm,
    xlabels at=edge bottom,
  },
]

\BudgetNextGroupPlot{title={SysAdmin Ring}, xlabel={Simulations}, ymin=5, ymax=19}
\addplot+[mctsuct]  coordinates {(16,8.5736) (32,8.68655) (64,8.88255) (128,8.9796) (256,9.1801) (512,9.82805) (1024,10.7222) (2048,12.1391)};
\addplot+[poweruct] coordinates {(16,8.57455) (32,8.7314) (64,9.07755) (128,9.47875) (256,10.197) (512,11.168) (1024,12.8724) (2048,14.8099)};
\addplot+[ments]    coordinates {(16,8.5418) (32,8.63025) (64,8.65175) (128,8.79925) (256,8.82135) (512,8.88845) (1024,9.14035) (2048,9.48365)};
\addplot+[gspower]  coordinates {(16,8.66235) (32,9.01835) (64,9.4395) (128,10.1591) (256,11.3065) (512,13.264) (1024,15.6012) (2048,17.2054)};
\addplot+[gspowerf] coordinates {(16,8.86725) (32,9.33355) (64,9.7566) (128,10.6558) (256,12.2176) (512,14.1065) (1024,15.8923) (2048,16.7935)};
\addplot+[GBOP]     coordinates {(16,9.037) (32,8.452) (64,8.108) (128,8.090) (256,8.029) (512,8.159) (1024,8.414) (2048,8.580)};

\BudgetNextGroupPlot{title={FourRooms}, xlabel={Simulations}, ymin=0.1, ymax=0.60}
\addplot+[mctsuct]  coordinates {(16,0.17599) (32,0.200242) (64,0.217572) (128,0.253354) (256,0.2903) (512,0.346908) (1024,0.398532) (2048,0.43912)};
\addplot+[poweruct] coordinates {(16,0.165104) (32,0.208052) (64,0.2454) (128,0.281082) (256,0.328096) (512,0.381146) (1024,0.406628) (2048,0.457238)};
\addplot+[ments]    coordinates {(16,0.154976) (32,0.185288) (64,0.218426) (128,0.241336) (256,0.302344) (512,0.35248) (1024,0.39047) (2048,0.434064)};
\addplot+[gspower]  coordinates {(16,0.338362) (32,0.4204) (64,0.477706) (128,0.520768) (256,0.544616) (512,0.553958) (1024,0.565898) (2048,0.571962)};
\addplot+[gspowerf] coordinates {(16,0.396158) (32,0.44276) (64,0.468598) (128,0.483704) (256,0.494194) (512,0.501252) (1024,0.509114) (2048,0.512774)};
\addplot+[GBOP]     coordinates {(16,0.162) (32,0.209) (64,0.227) (128,0.281) (256,0.383) (512,0.409) (1024,0.440) (2048,0.327)};

\end{groupplot}
\end{tikzpicture}

\par\medskip
\pgfplotslegendfromname{planninglegend}
\par\medskip

\caption{
Planning performance versus simulation budget across five stochastic MDPs.
Graph-based methods are emphasized with solid curves; tree-based baselines are
shown with lighter dashed/dotted curves.
}
\label{fig:graph-planning-performance}
\end{figure*}

\paragraph{Results.}
Figure~\ref{fig:graph-planning-performance} shows that depth-augmented GS-Power-UCT is particularly effective in these stochastic domains at small and moderate simulation budgets, where same-depth transpositions can reuse observations before the budget is exhausted. GS-Power-UCT-F$^+$ is competitive in four of five domains but underperforms in \textit{Passenger Grid}, consistent with its additional cross-depth-bias term. Thus full merging is most appropriate when the cross-depth gap is small (for example, with a strong $V_0$ or nearby relevant depths); the depth-augmented variant is the robust choice when this is uncertain. The $1/\tau_h$ factor in Theorem~\ref{thm:graph_nonworsening} is a representation-level sample-sharing identity under the same trajectories, not a predicted performance multiplier for independently run planners.


\FloatBarrier
\section{Limitations}
\label{sec:limitations}

While \textbf{GS-Power-UCT} demonstrates robust theoretical properties, its current formulation involves several caveats that merit further discussion.

\begin{itemize}
    \item \textbf{Non-negative Reward Constraint:} The power-mean backup assumes rewards $R \in [0, R_{\max}]$. Applying GS-Power-UCT to environments with signed rewards would require affine shifts or modified backup operators to ensure value targets remain well-defined.
    \item \textbf{State sharing helps or hurts:} Sharing is useful when multiple simulated trajectories reach the same physical state at the same depth: depth-augmented GS-Power-UCT then pools their samples
without mixing finite-horizon targets. The identity in Theorem~\ref{thm:graph_nonworsening} is representation-level---it does not guarantee that an independently run graph search dominates an independently run tree search. With few or no same-depth transpositions, sharing provides little reuse while retaining hash-table and graph-bookkeeping overhead. Full merging can further reduce the graph size, but it can hurt by mixing targets with different remaining horizons; its error includes the cross-depth term
    \item \textbf{Computational and Memory Overhead:} Maintaining global statistics and hash-based lookups introduces additional overhead. Under depth augmentation, memory usage can scale up to $|\cS| \cdot H$ nodes, which may increase the wall-clock cost in large-scale settings despite improved sample efficiency. A detailed empirical breakdown of computational and memory overhead is presented in Appendix~\ref{sec:appendix_overhead}.
\end{itemize}

\section{Related Work}
\label{sec:related}

\paragraph{MCTS with theoretical guarantees.}
The theoretical foundations of UCT~\citep{kocsis2006bandit}
were revisited by~\citet{shah2020non,shah2022non}, who
identified a gap in the analysis of logarithmic exploration
bonuses and established polynomial convergence guarantees
for Fixed-Depth-MCTS using polynomial bonuses.
Complementary work developed a unified analysis of value
backup and exploration through power means and convex
regularization~\citep{dam2024unified}.
For stochastic MDPs, \citet{dam2024power} established
convergence guarantees for power-mean value estimation,
with subsequent work extending theoretical guarantees to
continuous stochastic planning~\citep{dam2025continuous}
and robust planning under reward and transition
uncertainty~\citep{dam2025robust}.
We extend this line with the first convergence guarantees
for graph-based MCTS in stochastic settings.

\paragraph{Online planning and reinforcement learning.} TD($\lambda$), $n$-step TD, Dyna, and RTDP combine learning from interaction with value or model updates~\citep{sutton2018reinforcement,sutton1990dyna,barto1995rtdp}. They are complementary to the present work: GS-Power-UCT allocates a fresh online planning budget from a generative model, while a learned value function from such methods can instantiate $V_0$.

\paragraph{Generative-model planning.}
SmoothCruiser~\citep{grill2019planning} exploits
entropy-regularized Bellman smoothness to obtain
sample-complexity guarantees, while
SecondOrderSmoothCruiser~\citep{dam2026secondorder}
develops this approach through optimal-transport
Bellman smoothing and second-order value estimation. Sparse sampling~\citep{kearns2002sparse},
OLOP~\citep{bubeck2010olop}, and
TrailBlazer~\citep{grill2016trailblazer} study planning
with a simulator, including methods that address large
or even infinite successor spaces.
Our concentration proof requires finite successor
support and focuses on sharing samples across
identical state-depth pairs in the search graph.

\paragraph{Graph-based planning.} State merging in planning has appeared in various forms: state aggregation by partitioning~\citep{hostetler2014state}, MCGS with theoretical analysis for deterministic MDPs~\citep{leurent2020monte}, belief-state merging in POMDPs without theoretical analysis~\citep{ballesteros2013improving}, and transposition-aware/UCD search in game DAGs~\citep{childs2008transpositions,saffidine2011ucd}. Ours provides the first theoretical convergence analysis for graph-based MCTS with UCB-style exploration in stochastic MDPs.

\paragraph{Power mean estimation.} The power mean was introduced for MCTS by~\citet{dam2019generalized} and later analyzed in~\citet{dam2024power}; closely related are MENTS~\citep{xiao2019maximum} and its R\'enyi/Tsallis variants RENTS and TENTS. Our work is the first to combine power mean estimation with graph-based planning.
\section{Conclusion}
\label{sec:conclusion}

We introduced Graph-Based Stochastic Power-UCT, the first MCTS algorithm with convergence guarantees that combines graph-based state merging with power mean value estimation for stochastic MDPs. Our key technical contribution is the graph \(Q\)-concentration analysis, which
allows child estimates to use visits aggregated from multiple same-depth parents. This yields the same \(O(n^{-1/2})\) fixed-horizon root rate as
tree-based Stochastic-Power-UCT. At the representation level, unrolling the same
graph trajectories into a tree shows an explicit sample-sharing advantage:
merged graph nodes receive the union of the samples assigned to their tree
copies. This can reduce the proof-bound terms associated with transposed states,
although it should not be interpreted as a dominance guarantee over an
independently run tree-search algorithm.

Several directions for future work are promising. First, connecting the graph-dependent improvement to \emph{spectral properties} of the state graph (effective resistance, spectral gap) would provide more interpretable complexity bounds. Second, extending the analysis to \emph{approximate} state merging for continuous state spaces would broaden the applicability. Third, combining graph-based planning with learned value functions (as in AlphaZero) could yield practical improvements for large-scale problems. Finally, extending the framework to \emph{adversarial} settings (e.g., two-player games with graph transpositions) is a natural next step.

\newpage
\bibliographystyle{abbrvnat}

\newpage
\appendix
\begin{center}
{\bf \Large{Appendix of ``Graph-Based Stochastic Power-UCT: Monte-Carlo Graph Search with Power Mean Estimation''}}
\end{center}
\DoToC
\medskip

\newpage
\section{Standing Assumptions}
\label{app:assumptions}

This appendix collects the conditions used by the concentration results. They do not restrict the topology of the MDP: cyclic transitions and self-loops remain allowed.

\begin{center}
\fbox{\begin{minipage}{0.94\linewidth}
\begin{assumption}[Standing MDP and evaluator conditions]\label{assump:standing}

\begin{enumerate}
    \item \textbf{(A-Finite)} $\gamma\in[0,1)$, the planning horizon is finite, every $\cA_s$ is finite with $|\cA_s|\le K$, and every $P(\cdot\mid s,a)$ has finite support. Thus a (possibly countable) state space has a finite $H$-reachable graph. Rewards lie in $[0,R_{\max}]$; hence all rollout and backed-up values are bounded by $L\defeq R_{\max}/(1-\gamma)$.
    \item \textbf{(A-Fresh)} A generative-model call at a fixed state--depth--action query returns a fresh reward--successor pair. Successive pairs at that query are i.i.d.\ from the stationary conditional law. The reward and successor within one pair need not be independent.
    \item \textbf{(A-Rollout)} Independent calls to the terminal evaluator return $Z_j(s)\sim\pi_0(s)$ with $\EE[Z_j(s)]=V_0(s)$ and $0\le Z_j(s)\le L$. A call initializes a new boundary node; it is not an i.i.d.\ assertion about adaptively updated internal estimates.
\end{enumerate}
\end{assumption}
\end{minipage}}
\end{center}

\paragraph{Local proof hypotheses (induction hypotheses).}
\textbf{(H-Q)} In Lemma~\ref{lem:graph_Q_concentration}, the child concentration premise is used in time-uniform form: if an adapted traversal selects a child's global count $N$, then $\PP(|\Vhat_N-V|>\varepsilon,\,N\ge r)\le c\,r^{-\alpha}\varepsilon^{-\beta}$ for every $r\ge1$. This induction hypothesis supplies concentration at a possibly random global child count; it does not declare adaptive child estimates i.i.d.\ \textbf{(H-V)} In Lemma~\ref{lem:robust_power_uct_value}, the displayed bias-radius condition compares its effective action targets with the depth-specific targets. Corollary~\ref{cor:robust_value_cross_depth} instantiates it using the explicit cross-depth bound. For a fixed successor support of size $M$, graph-$Q$ constants may depend on $2^M$ and inverse successor probabilities (for example, $\sum_m p_m^{-\alpha}$); rare successors affect constants, not the stated rate. The following table records where these standing conditions and local hypotheses are used.

{\small
\setlength{\tabcolsep}{4pt}
\renewcommand{\arraystretch}{1.12}
\begin{longtable}{@{}p{0.29\textwidth}p{0.65\textwidth}@{}}
\caption{Assumption-discharge table for the analysis.}\label{tab:assumption_discharge}\\
\toprule
\textbf{Result or proof term} & \textbf{Condition and role} \\
\midrule
\endfirsthead
\toprule
\textbf{Result or proof term} & \textbf{Condition and role} \\
\midrule
\endhead
\bottomrule
\endfoot
\bottomrule
\endlastfoot
Leaf concentration & (A-Rollout) gives bounded i.i.d.\ terminal-evaluator calls, so Hoeffding applies to those calls. \\
\addlinespace
Graph-$Q$ reward term & (A-Finite) bounds rewards; (A-Fresh) makes the local reward--successor-pair stream i.i.d.\ at a fixed parent-action query. \\
\addlinespace
Graph-$Q$ transition term $A_1$ & (A-Finite) gives finitely many successor categories, and (A-Fresh) gives the i.i.d.\ successor stream required by the multinomial deviation bound. \\
\addlinespace
Graph-$Q$ child-value term $A_2$ & (H-Q) applies the child concentration premise at its global effective count. Other-parent visits can increase that count; no i.i.d.\ claim is made for the adaptive child estimates. \\
\addlinespace
Node and root concentration & The graph-$Q$ lemma, the power-mean induction.\\
\addlinespace
Sample-sharing theorem & This is a deterministic representation identity under the same realised trajectories; it introduces no additional stochastic assumption. \\
\addlinespace
Full merge and F$^+$ & (H-V) is instantiated by the explicit cross-depth bound; the stated bounds also require their empirical-target and cross-depth-gap hypotheses. For F$^+$, $H=H(n)$ is fixed for all simulations in a search with budget $n$. \\
\end{longtable}
}

\newpage
\section{Notation}

{\footnotesize
\setlength{\tabcolsep}{4pt}
\renewcommand{\arraystretch}{1.12}

\begin{longtable}{@{}>{\raggedright\arraybackslash}p{0.27\textwidth}>{\raggedright\arraybackslash}p{0.68\textwidth}@{}}
\caption{Notation used throughout the paper.}
\label{tab:notation}
\\
\toprule
\textbf{Symbol} & \textbf{Meaning} \\
\midrule
\endfirsthead

\caption[]{Notation used throughout the paper (continued).}
\\
\toprule
\textbf{Symbol} & \textbf{Meaning} \\
\midrule
\endhead

\midrule
\multicolumn{2}{r}{\emph{Continued on next page}} \\
\endfoot

\bottomrule
\endlastfoot

\multicolumn{2}{@{}l}{\textit{MDP, returns, and value functions}} \\*
\(\cM=\langle\cS,\cA,R,P,\gamma\rangle\)
& Discounted MDP with state space \(\cS\), action space \(\cA\), conditional-mean reward \(R\), transition kernel \(P\), and discount \(\gamma\in[0,1)\). \\

\(\cA_s, K, R_{\max}, L\)
& Admissible actions at \(s\), \(K=\max_s|\cA_s|\), reward bound, and return bound \(L=R_{\max}/(1-\gamma)\). \\

\(s_0,s,s',a,h;\ i,j,m,t,n\)
& Root state, generic/current/successor states, action, trajectory depth, and generic sample, successor, visit, or simulation indices. \\

\(H,H(n),H_n\)
& Fixed planning horizon; adaptive horizon \(H(n)=\lceil\log n/(2\log(1/\gamma))\rceil\); and shorthand \(H_n\defeq H(n)\). Within an F$^+$ search of budget \(n\), \(H(n)\) is fixed. \\

\(R(s,a,s'),r(s,a);\ r,r_t,R_i\)
& Conditional-mean and expected immediate rewards; sampled reward, reward backed up at simulation \(t\), and the \(i\)-th sampled reward in the full-merge proof. \\

\(\pi_0,V_0,Z_j,z\)
& Terminal evaluator/policy, its value, its \(j\)-th independent output, and a realized rollout/evaluation sample. \\

\(\EE,\PP,\|\cdot\|_\infty,\mathbf 1\)
& Expectation, probability, sup norm, and indicator function. \\

\(\mathcal B\)
& Bellman optimality operator: \((\mathcal BV)(s)=\max_{a\in\cA_s}\sum_{s'}P(s'|s,a)[R(s,a,s')+\gamma V(s')]\). \\

\(\Vstar,\Qstar\)
& Infinite-horizon optimal value and action-value functions. \\

\(\Vtil(s,h),\Qtil(s,h,a),\Vtil_H(s,h)\)
& Depth-\(h\) truncated Bellman targets with terminal value \(V_0\); \(\Vtil_H\) displays the horizon explicitly. \\

\(\substack{\Vhat_t(s,h),\,\Qhat_t(s,h,a)\\[-1pt]\Vhat_t(s),\,\Qhat_t(s,a)}\)
& Empirical estimates after \(t\) visits in the depth-augmented and full-merge variants, respectively. \\

\(p\)
& Power-mean exponent. Rate theorems use finite \(p\in[1,\infty)\); \(p=\infty\) is an algorithmic implementation limit. \\

\midrule
\multicolumn{2}{@{}l}{\textit{Search graph, unrolling, and sample sharing}} \\*

\(\phi(s,h)\)
& Node-key mapping: \(\phi(s,h)=(s,h)\) for depth augmentation and \(\phi(s,h)=s\) for full merging. \\

\(\cG_n=(\cN_n,\cE_n),\ \cN_h\)
& Search graph after \(n\) simulations, its node and edge sets, and the set of discovered depth-\(h\) nodes. \\

\((s,h),\mathring{\cG}_n,\partial\cG_n,\mathcal H\)
& Depth-augmented node; internal and boundary node sets; and implementation hash map from a node key to its graph node. \\

\(\textsc{Parents}(s,h)\)
& Parent-action pairs with an edge into depth-augmented node \((s,h)\). \\

\(\cT(\cG),\cT_h;\ \phi_h:\cT_h\to\cN_h\)
& Unrolled tree of \(\cG\), its depth-\(h\) nodes, and projection of a tree copy to its graph node. \\

\(v,u,\mathcal C(v)\)
& Graph node, tree copy, and merged-copy class \(\mathcal C(v)=\phi_h^{-1}(v)\). \\

\(\tau_h,N_h^{\cT}\)
& Transposition ratio \(\tau_h=|\cN_h|/N_h^{\cT}\), where \(N_h^{\cT}=|\cT_h|\) is the number of unrolled-tree nodes at depth \(h\). \\

\(T_v^{\cG},T_u^{\cT},T_{v,a}^{\cG},T_{u,a}^{\cT},N_h(n)\)
& Graph/tree node and action visit counts in the same-trajectory comparison, and the common total number of visits reaching depth \(h\). \\

\(c_h^{\mathcal R},c_{Q,h}^{\mathcal R},\Phi_h\)
& Proof-generated value and action-value concentration constants and their monotone Power-UCT aggregation map, for \(\mathcal R\in\{\cG,\cT\}\). \\

\midrule
\multicolumn{2}{@{}l}{\textit{Visit counts, bonuses, and algorithmic parameters}} \\*

\(\substack{T_{s,h}(n),\,T_{s,h,a}(n)\\[-1pt]T_{s,h,a}^{s'}(n)}\)
& Depth-augmented node visits, action selections, and observed transitions \(((s,h),a)\to(s',h+1)\). \\

\(T_s(n),T_{s,a}(n),T_{s,a}^{s'}(n),T_{s,a,h}(n)\)
& Full-merge state, state-action, successor-transition, and depth-resolved state-action visit counts. \\

\(B(n,s,a),B_h(n,s,a),C\)
& Exploration bonus (with depth-indexed form when applicable) and exploration constant. For F$^+$, \(B(n,s,a)=C T_s(n)^{1/4}/T_{s,a}(n)^{1/2}\). \\

\(b_h,\alpha_h,\beta_h;\ b,\alpha_+,\beta_+\)
& Depth-indexed Power-UCT parameters and their local full-merge counterparts. \\

\(T_{\textsc{hash}}\)
& Cost of one hash-table lookup in the runtime bound. \\

\(\hat a_n\)
& Recommended root action after \(n\) simulations. \\

\midrule
\multicolumn{2}{@{}l}{\textit{Graph-\(Q\) concentration}} \\*

\(\Vhat_n\conc{\alpha}{\beta}V;\ \alpha,\beta,c,\varepsilon\)
& Polynomial concentration: \(\PP(|\Vhat_n-V|>\varepsilon)\le c n^{-\alpha}\varepsilon^{-\beta}\) for every \(n\ge1\) and \(\varepsilon>0\). \\

\(M,[M],s_m,p_m,\bm p,\widehat{\bm p}_n\)
& Successor-support size, \([M]=\{1,\ldots,M\}\), successor states, \(p_m=P(s_m\mid s,a)\), and true/empirical successor-probability vectors. \\

\(X_i,S_i,\mu,\bar X_n,\bm V,\widehat{\bm V}_n\)
& Local reward-successor sample, its mean, empirical reward mean, vector of successor targets, and vector of their current estimates. \\

\(N_m^n,\widehat p_{m,n}\)
& Local successor count \(N_m^n=\sum_{i=1}^n\mathbf1\{S_i=s_m\}\) and empirical probability \(\widehat p_{m,n}=N_m^n/n\). \\

\(T_{s_m}^{\textsc{ext}}(n),T_{s_m}^{\textsc{eff}}(n)\)
& External and effective child visits; \(T_{s_m}^{\textsc{eff}}(n)=N_m^n+T_{s_m}^{\textsc{ext}}(n)\). Subscripts may be abbreviated to \(m\) in proofs. \\

\(A_1,A_2,\mathcal E_m,r_m\)
& Empirical-transition and child-value terms in the graph-\(Q\) decomposition; the high-local-count event and its threshold. \\

\(B_Q,K_m\)
& Uniform graph-\(Q\) error bound and proof constant used to dominate exponential tail terms. \\

\midrule
\multicolumn{2}{@{}l}{\textit{Cross-depth and empirical full-merge targets}} \\*

\(\mathcal D(s),\mathcal D_{H,n}(s),h_{\min}(s),h_{\max}(s)\)
& Visited depths of \(s\), their empirical counterpart for horizon \(H\), and their minimum and maximum. \\

\(\delta(s),\Delta_{\textsc{cross}}\)
& Local and global cross-depth gaps: \(\delta(s)=\max_{h_1,h_2\in\mathcal D(s)}|\Vtil(s,h_1)-\Vtil(s,h_2)|\) and \(\Delta_{\textsc{cross}}=\max_s\delta(s)\). \\

\(\Delta_{\textsc{cross}}^{H,n},\Delta_{\textsc{cross}}^n,C_\Delta\)
& Empirical cross-depth gap, its adaptive-horizon form \(\Delta_{\textsc{cross}}^{H(n),n}\), and the full-merge bias constant. \\

\(\omega_{n,h}(s,a),B_hV,\mathcal T_{F,n}^H\)
& Empirical depth-mixture weight, terminal-value operator, and empirical full-merge Bellman operator. \\

\(\bar V(s),\rho_i\)
& Limiting two-depth mixture and its limiting depth weights in the irreducible-bias proposition. \\

\(\bar V_n,\bar Q_n\)
& Unique fixed point of \(\mathcal T_{F,n}^H\) and its associated empirical full-merge \(Q\)-target. \\

\(B_{F,n}^V(s,h)\)
& Local full-merge target mismatch \(|\bar V_n(s)-\Vtil(s,h)|\). \\

\(\delta_{Q,t}(s,a;h),\delta_{Q,n},B_\Delta,B_{\Delta,n}\)
& \(Q\)-level cross-depth drift, its uniform \(n\)-simulation bound, and the corresponding bias radii; \(B_\Delta=\gamma\Delta_{\textsc{cross}}/(1-\gamma)\) and \(B_{\Delta,n}=\delta_{Q,n}/(1-\gamma)\). \\

\midrule
\multicolumn{2}{@{}l}{\textit{Full-merge proof auxiliaries}} \\*

\(\mathcal F_{i,0}\subset\mathcal F_{i,1}\subset\mathcal F_{i,2}\)
& Filtration before the \(i\)-th transition draw, after that draw, and after the recursive child call. \\

\(R_i,S_i,W_i,Y_i\)
& Reward, successor, recursive child return, and backed-up return \(Y_i=R_i+\gamma W_i\) in the robust full-merge proof. \\

\(A_i,e_i,b_i,\xi_i\)
& Centered transition-reward term, child-estimation error, its predictable part, and its centered part. \\

\(J_i,\rho,\eta_i,G,n_g(t)\)
& Local successor-depth occurrence count, \(\rho=\alpha/\beta\), integration threshold, number of successor-depth groups, and group count up to time \(t\). \\

\(L_R,L_V,L_A,L_\xi,C_{\mathrm{ch}},C_\rho,\kappa,\sigma_Q\)
& Robust-proof range, increment, predictable-bias, averaging, and martingale-deviation constants. \\

\(q_a,\bar q_a,v_\star,\bar v,B\)
& Depth-specific and effective action targets, their maximizing values, and the local bias radius in hypothesis (H-V). \\

\(\bar\alpha,\bar\beta,\alpha_V,\beta_V,C_Q,C_V\)
& Derived \(Q\)- and value-concentration exponents and constants in the robust full-merge lemmas. \\

\(X_n,\eta_n,C_{\mathrm{samp}}\)
& Root absolute error, tail-integration threshold, and resulting sampling-error constant in the full-merge rate proof. \\

\end{longtable}
}

\section{Detailed Algorithms}
\subsection{Graph-Based Stochastic Power-UCT and Implementation Specification}

\begin{algorithm*}
\caption{Graph-Based Stochastic Power-UCT (GS-Power-UCT)}
\label{alg:gs_power_uct}
\begin{algorithmic}[1]
\REQUIRE Root state $s_0$, budget $n$, horizon $H$, power $p$, playout policy $\pi_0$, exploration constant $C$
\STATE $\cG\gets(\{(s_0,0)\},\emptyset)$; $\mathcal H[(s_0,0)]\gets \text{node}$; $\partial\cG\gets\{(s_0,0)\}$; $\mathring{\cG}\gets\emptyset$
\STATE Initialize all statistics at $(s_0,0)$: $T_{s_0,0}=0$, $T_{s_0,0,a}=0$, $\Qhat(s_0,0,a)=0$ for all $a\in\cA_{s_0}$
\FOR{$t=1,\ldots,n$}
    \STATE $\textsc{SimulateV}(s_0,0,t)$
\ENDFOR
\RETURN $\hat a_n=\argmax_{a\in\cA_{s_0}}\Qhat_{T_{s_0,0,a}}(s_0,0,a)$ and $\Vhat_n(s_0,0)$

\vspace{0.25em}
\STATE \textbf{Procedure} $\textsc{SimulateV}(s,h,t)$
\IF{$h=H$ or $s$ is terminal}
    \STATE $z\gets \pi_0(s)$; \quad $T_{s,h}\gets T_{s,h}+1$; \quad
    $\Vhat_{T_{s,h}}(s,h)\gets \Vhat_{T_{s,h}}(s,h)+\frac{z-\Vhat_{T_{s,h}}(s,h)}{T_{s,h}}$
    \RETURN $\Vhat_{T_{s,h}}(s,h)$
\ENDIF
\IF{$(s,h)\in\partial\cG$}
    \STATE Move $(s,h)$ from $\partial\cG$ to $\mathring{\cG}$
\ENDIF
\IF{$\exists a\in\cA_s$ with $T_{s,h,a}=0$}
    \STATE Choose such an action $a$
\ELSE
    \STATE
    $
    a\gets\argmax_{a'\in\cA_s}
    \left\{
    \Qhat_{T_{s,h,a'}}(s,h,a')
    +
    C\frac{T_{s,h}^{\,b_{h+1}/\beta_{h+1}}}
    {T_{s,h,a'}^{\,\alpha_{h+1}/\beta_{h+1}}}
    \right\}
    $
\ENDIF
\STATE Sample $s'\sim P(\cdot|s,a)$ and $r\sim R(s,a,s')$; set $v'=(s',h+1)$
\IF{$v'\in\mathcal H$}
    \STATE Add edge $((s,h),a,v')$ to $\cE$ if new \COMMENT{same-depth transposition}
    \STATE $V_{\mathrm{child}}\gets \textsc{SimulateV}(s',h+1,t)$
\ELSE
    \STATE Add $v'$ to $\cG$ and $\partial\cG$; $\mathcal H[v']\gets\text{node}$; add edge $((s,h),a,v')$
    \STATE Initialize all statistics at $v'$; \quad
    $z\gets Z_1(s')\sim\pi_0(s')$; \quad $T_{s',h+1}\gets1$; \quad $\Vhat_{T_{s',h+1}}(s',h+1)\gets z$
    \STATE $V_{\mathrm{child}}\gets \Vhat_{T_{s',h+1}}(s',h+1)$
\ENDIF
\STATE $T_{s,h,a}^{s'}\gets T_{s,h,a}^{s'}+1$; \quad
$T_{s,h,a}\gets T_{s,h,a}+1$
\STATE $Y\gets r+\gamma V_{\mathrm{child}}$; \quad
$\Qhat_{T_{s,h,a}}(s,h,a)\gets \Qhat_{T_{s,h,a}}(s,h,a)+\frac{Y-\Qhat_{T_{s,h,a}}(s,h,a)}{T_{s,h,a}}$
\STATE $T_{s,h}\gets T_{s,h}+1$
\STATE
$
\Vhat_{T_{s,h}}(s,h)\gets
\left(
\sum_{a'\in\cA_s}
\frac{T_{s,h,a'}}{T_{s,h}}
\bigl(\Qhat_{T_{s,h,a'}}(s,h,a')\bigr)^p
\right)^{1/p}
$
\RETURN $\Vhat_{T_{s,h}}(s,h)$
\end{algorithmic}
\end{algorithm*}
\begin{remark}[Implementation conventions]
The forced-initialization step avoids division by zero in the UCB-style rule
when $T_{s,h,a}=0$. Equivalently, unvisited actions may be assigned infinite
exploration bonus until every action has been sampled once.

When the depth-augmented lookup finds an existing node $(s',h+1)$, the algorithm
does not create a new node and does not perform a fresh evaluation rollout solely
because of the transposition. Instead, it continues selection from the existing
node and reuses the global statistics already stored at $(s',h+1)$.

Backpropagation remains path-only: only the nodes and edges traversed in the
current simulation are updated. Other parents of a transposed node are not
updated during this simulation.

By appropriately choosing the algorithmic constants (as specified in Table~\ref{tab:conditions}) such that
\[
\frac{\alpha_h}{\beta_h} = \frac{1}{2} \quad \text{and} \quad \frac{b_h}{\beta_h} = \frac{1}{4}, \quad \forall h \in [0, H],
\]
the optimal convergence rate can be achieved. This choice leads to the following exploration bonus:
\[
B_h(n,s,a) = C \, \frac{n^{1/4}}{T_{s,a}(n)^{1/2}},
\]
where \(C\) denotes a constant controlling the degree of exploration.

As \(p \to \infty\), the power mean converges to the maximum operator. Therefore, in this limit, the backup reduces to
\[
\Vhat_{T_{s,h}}(s,h)\;\gets\;
\max_{a'\in\cA_s} \Qhat_{T_{s,h,a'}}(s,h,a').
\]
This establishes \(p = \infty\) as a legitimate parameter regime of the algorithm, in addition to finite \(p\).

\end{remark}
\subsection{Practical Variant: Full State Merging}
\label{sec:full_merge}

While the depth-augmented design of Section~\ref{sec:algorithm} provides clean theoretical guarantees, practical MCTS implementations often prefer \emph{full state merging}: keying nodes by state identity alone, without the depth index. This reduces the graph size from at most $|\cS|(H+1)$ nodes to at most $|\cS|$ nodes, which can be a significant saving in MDPs where the same state is revisited at many different depths (e.g., grid navigation, board games with reversible moves). In this section, we describe this practical variant and analyze the bias it introduces.

\paragraph{Algorithm modification.} The only change from Algorithm~\ref{alg:gs_power_uct} is in the hash key: the lookup uses $\textsc{Hash}(s')$ instead of $(s', h+1)$. Each physical state $s$ has a single node in the graph, regardless of the depths at which it is visited. The Q-value and visit count statistics are shared across all depths. We present the modified procedure in Algorithm~\ref{alg:full_merge}.
\begin{algorithm*}
\caption{Full-Merge Graph-Based Stochastic Power-UCT (GS-Power-UCT-F)}
\label{alg:full_merge}
\begin{algorithmic}[1]
\REQUIRE Root state $s_0$, budget $n$, horizon $H$, power $p$, playout policy $\pi_0$, exploration constant $C$
\STATE $\cG\gets(\{s_0\},\emptyset)$; $\mathcal H[s_0]\gets \text{node}$; $\partial\cG\gets\{s_0\}$; $\mathring{\cG}\gets\emptyset$
\STATE Initialize all statistics at $s_0$: $T_{s_0}=0$, $T_{s_0,a}=0$, $\Qhat(s_0,a)=0$ for all $a\in\cA_{s_0}$
\FOR{$t=1,\dots,n$}
    \STATE $\textsc{SimulateV}(s_0,0,t)$
\ENDFOR
\RETURN $\hat a_n=\argmax_{a\in\cA_{s_0}}\Qhat_{T_{s_0,a}}(s_0,a)$ and $\Vhat_n(s_0)$

\vspace{0.25em}
\STATE \textbf{Procedure} $\textsc{SimulateV}(s,h,t)$
\IF{$h=H$ or $s$ is terminal}
    \STATE $z\gets \pi_0(s)$; \quad $T_{s}\gets T_{s}+1$; \quad
    $\Vhat_{T_{s}}(s)\gets \Vhat_{T_{s}}(s)+\frac{z-\Vhat_{T_{s}}(s)}{T_{s}}$
    \RETURN $\Vhat_{T_{s}}(s)$
\ENDIF
\IF{$s \in \partial\cG$}
    \STATE Move $s$ from $\partial\cG$ to $\mathring{\cG}$
\ENDIF
\IF{$\exists a\in\cA_s$ with $T_{s,a}=0$}
    \STATE Choose such an action $a$
\ELSE
    \STATE
    $
    a\gets\argmax_{a'\in\cA_s}
    \left\{
    \Qhat_{T_{s,a'}}(s,a')
    +
    C \frac{T_{s}^{\,b_{h+1}/\beta_{h+1}}}
    {T_{s,a'}^{\,\alpha_{h+1}/\beta_{h+1}}}
    \right\}
    $
\ENDIF
\STATE Sample $s'\sim P(\cdot|s,a)$ and $r\sim R(s,a,s')$; set $v'=s'$ 
\IF{$v'\in\mathcal H$}
    \STATE Add edge $(s,a,v')$ to $\cE$ if new 
    \STATE $V_{\mathrm{child}}\gets \textsc{SimulateV}(s',h+1,t)$
\ELSE
    \STATE Add $v'$ to $\cG$ and $\partial\cG$; $\mathcal H[v']\gets\text{node}$; add edge $(s,a,v')$
    \STATE Initialize all statistics at $v'$; \quad
    $z \gets \text{rollout from }\pi_0(s')$; \quad $T_{s'}\gets1$; \quad $\Vhat_{T_{s'}}(s')\gets z$
    \STATE $V_{\mathrm{child}}\gets \Vhat_{T_{s'}}(s')$
\ENDIF
\STATE $T_{s,a}^{s'}\gets T_{s,a}^{s'}+1$; \quad
$T_{s,a}\gets T_{s,a}+1$
\STATE $Y\gets r+\gamma V_{\mathrm{child}}$; \quad
$\Qhat_{T_{s,a}}(s,a)\gets \Qhat_{T_{s,a}}(s,a)+\frac{Y-\Qhat_{T_{s,a}}(s,a)}{T_{s,a}}$
\STATE $T_{s}\gets T_{s}+1$
\STATE
$
\Vhat_{T_{s}}(s)\gets
\left(
\sum_{a'\in\cA_s}
\frac{T_{s,a'}}{T_{s}}
\bigl(\Qhat_{T_{s,a'}}(s,a')\bigr)^p
\right)^{1/p}
$
\RETURN $\Vhat_{T_{s}}(s)$
\end{algorithmic}
\end{algorithm*}

\paragraph{Source of bias.} Consider a state $s$ that is visited at trajectory depths $h_1 < h_2$ during planning. When the simulation reaches $s$ at depth $h_1$, it continues for $H - h_1$ more steps before reaching the rollout horizon. When it reaches $s$ at depth $h_2$, it continues for only $H - h_2$ steps. Both trajectories backpropagate returns into the \emph{same} Q-value estimates $\Qhat(s, a)$. The returns from depth $h_1$ reflect a longer effective planning horizon than those from depth $h_2$, so they target different values of $\Qtil$. The resulting Q-value estimate converges to a visit-weighted mixture of the two targets rather than either one individually.

More precisely, the bias propagates recursively: a child $s'$ of $s$ is also visited at multiple depths ($h_1 + 1$ and $h_2 + 1$), so its value estimate $\Vhat(s')$ is itself biased, which in turn biases $\Qhat(s, a)$. The total bias at any node depends on the full subgraph of cross-depth transpositions reachable from that node.

\paragraph{Quantifying the bias.} For each state $s$ in the graph, define the set of depths at which $s$ is visited:
\begin{equation}
    \mathcal{D}(s) = \{h : s \text{ is visited at depth } h \text{ during planning}\}.
\end{equation}
The \emph{cross-depth gap} at state $s$ is:
\begin{equation}\label{eq:cross_depth_gap}
    \delta(s) = \max_{h_1, h_2 \in \mathcal{D}(s)} |\Vtil(s,h_1) - \Vtil(s,h_2)| \leq \frac{\gamma^{H - h_{\max}(s)} - \gamma^{H - h_{\min}(s)}}{1-\gamma}\|\Vstar - V_0\|_\infty,
\end{equation}
where $h_{\min}(s) = \min \mathcal{D}(s)$, $h_{\max}(s) = \max \mathcal{D}(s)$. The global cross-depth bias is:
\begin{equation}
    \Delta_{\textsc{cross}} = \max_{s \in \cG} \delta(s).
\end{equation}

\paragraph{When is $\Delta_{\textsc{cross}}$ small?} The cross-depth bias is negligible in several important regimes:
\begin{enumerate}
    \item \textbf{Good evaluation function}: When $V_0 \approx \Vstar$ (e.g., a well-trained neural network as in AlphaZero), $\|\Vstar - V_0\|_\infty$ is small, making $\Delta_{\textsc{cross}}$ small regardless of the depth gap.
    \item \textbf{Large horizon}: When $H$ is large relative to the depth range, both $\gamma^{H-h_{\max}}$ and $\gamma^{H-h_{\min}}$ are small, so their difference is negligible.
    \item \textbf{Small depth range}: When states are revisited only at similar depths ($h_{\max}(s) - h_{\min}(s)$ is small for all $s$), the cross-depth gap is small.
    \item \textbf{No cross-depth transpositions}: When every transposition occurs at the same depth (common in grid worlds with deterministic step costs), $\Delta_{\textsc{cross}} = 0$ and the full-merge variant is exactly equivalent to the depth-augmented variant.
\end{enumerate}

\section{Detailed Proofs}
\label{app:proofs}

\subsection{Proof of Lemma~\ref{lem:graph_Q_concentration}}
\label{app:proof_graph_Q}

We provide the complete proof of the Generalized Topological Concentration Lemma.

\begin{proof}
The case $\gamma=0$ reduces to the concentration of the empirical reward average,
so assume $\gamma>0$.

By (A-Fresh) in Assumption~\ref{assump:standing}, $(X_i,S_i)_{i\ge1}$ is the i.i.d.\ local simulator stream for the fixed parent-action pair. No independence between $X_i$ and $S_i$ within a pair is used. The estimates $\widehat V_{m,T_m^{\textsc{eff}}(n)}$ are adaptive; their contribution is controlled by the time-uniform local hypothesis (H-Q), rather than by an i.i.d.\ argument.

Let
\[
    \bar X_n=\frac1n\sum_{i=1}^n X_i,
    \qquad
    \widehat{\bm V}_n
    =
    \left(
        \widehat V_{1,T_1^{\textsc{eff}}(n)},
        \ldots,
        \widehat V_{M,T_M^{\textsc{eff}}(n)}
    \right),
\]
and let
\[
    \bm V=(V_1,\ldots,V_M),
    \qquad
    \widehat{\bm p}_n=(\widehat p_{1,n},\ldots,\widehat p_{M,n})
    \qquad
    \left(\widehat p_{i,n}=\frac{N_i^n}{n}\right)
    \qquad
    \bm p=(p_1,\ldots,p_M).
\]
We have
\[
    \widehat Q_n(s,a)
    =
    \bar X_n+\gamma\langle \widehat{\bm p}_n,\widehat{\bm V}_n\rangle .
\]
Therefore,
\begin{align}
&\PP\!\left(
    \left|
        \widehat Q_n(s,a)
        -
        \left(\mu+\gamma\langle \bm p,\bm V\rangle\right)
    \right|
    \ge \varepsilon
\right)
\nonumber\\
&\quad\le
\PP\!\left(
    |\bar X_n-\mu|
    \ge \frac{\varepsilon}{2}
\right)
+
\PP\!\left(
    \gamma
    \left|
        \langle \widehat{\bm p}_n,\widehat{\bm V}_n\rangle
        -
        \langle \bm p,\bm V\rangle
    \right|
    \ge \frac{\varepsilon}{2}
\right)
\nonumber\\
&\quad\le
\PP\!\left(
    |\bar X_n-\mu|
    \ge \frac{\varepsilon}{2}
\right)
+
A_1
+
A_2 ,
\label{eq:graph_Q_decomposition}
\end{align}
where
\[
    A_1
    =
    \PP\!\left(
        \left|
            \langle \widehat{\bm p}_n-\bm p,\widehat{\bm V}_n\rangle
        \right|
        \ge
        \frac{\varepsilon}{4\gamma}
    \right),
\]
and
\[
    A_2
    =
    \PP\!\left(
        \left|
            \langle \bm p,\widehat{\bm V}_n-\bm V\rangle
        \right|
        \ge
        \frac{\varepsilon}{4\gamma}
    \right).
\]

\paragraph{Reward term.}
Assume the reward samples have bounded range at most $R_{\max}$. By Hoeffding's
inequality,
\begin{equation}
\label{eq:reward_term_bound}
    \PP \left(
        |\bar X_n-\mu|
        \ge \frac{\varepsilon}{2}
    \right)
    \le
    2\exp \left(
        -\frac{n\varepsilon^2}{2R_{\max}^2}
    \right).
\end{equation}

\paragraph{Empirical-transition term.}
Since $|\widehat V_{m,T_m^{\textsc{eff}}(n)}|\le L$ for every $m$,
\[
    \left|
        \langle \widehat{\bm p}_n-\bm p,\widehat{\bm V}_n\rangle
    \right|
    \le
    \|\widehat{\bm p}_n-\bm p\|_1 \|\widehat{\bm V}_n\|_\infty
    \le
    L\|\widehat{\bm p}_n-\bm p\|_1 .
\]
Using the Weissman--type $L_1$ deviation inequality~\citep{weissman2003inequalities} for the empirical
distribution on $M$ categories,
\begin{equation}
\label{eq:A1_bound}
    A_1
    \le
    \PP\!\left(
        \|\widehat{\bm p}_n-\bm p\|_1
        \ge
        \frac{\varepsilon}{4\gamma L}
    \right)
    \le
    2^M
    \exp\!\left(
        -\frac{n\varepsilon^2}{32\gamma^2L^2}
    \right).
\end{equation}

\paragraph{Child-value term.}
Let
\[
    Z_m(n)
    =
    \widehat V_{m,T_m^{\textsc{eff}}(n)}
    -
    V_m,
    \qquad
    \delta=\frac{\varepsilon}{4\gamma}.
\]
Then
\[
    A_2
    =
    \PP\!\left(
        \left|
            \sum_{m=1}^M p_m Z_m(n)
        \right|
        \ge \delta
    \right)
    \le
    \PP\!\left(
        \sum_{m=1}^M p_m |Z_m(n)|
        \ge \delta
    \right).
\]
Since $\sum_m p_m=1$, if $|Z_m(n)|<\delta$ for every $m$, then
$\sum_m p_m|Z_m(n)|<\delta$. Hence
\begin{equation}
\label{eq:A2_union}
    A_2
    \le
    \sum_{m=1}^M
    \PP\!\left(
        |Z_m(n)|\ge \delta
    \right).
\end{equation}
For each $m$, define the high-local-count event
\[
    \mathcal E_m
    =
    \left\{
        N_m^n>\frac{np_m}{2}
    \right\}.
\]
Then
\begin{align}
    \PP\!\left(
        |Z_m(n)|\ge \delta
    \right)
    &\le
    \PP\!\left(
        |Z_m(n)|\ge \delta,\mathcal E_m
    \right)
    +
    \PP(\mathcal E_m^c).
\end{align}
On $\mathcal E_m$,
\[
    T_m^{\textsc{eff}}(n)
    =
    N_m^n+T_m^{\textsc{ext}}(n)
    \ge
    N_m^n
    >
    \frac{np_m}{2}.
\]
Let
\[
    r_m=\max\left\{1,\frac{np_m}{2}\right\}.
\]
By the time-uniform child premise (H-Q), we have
\begin{align}
    \PP \left(
        |Z_m(n)|\ge \delta,\mathcal E_m
    \right)
    &\le
    \PP\left(
        |\widehat V_{m,T_m^{\textsc{eff}}(n)}-V_m|
        \ge \delta,
        T_m^{\textsc{eff}}(n)\ge r_m
    \right)
    \nonumber\\
    &\le
    c_V
    2^\alpha p_m^{-\alpha}
    n^{-\alpha}
    \delta^{-\beta}.
\label{eq:child_good_event}
\end{align}
Also, since $N_m^n\sim \mathrm{Binomial}(n,p_m)$, Hoeffding's inequality gives
\begin{equation}
\label{eq:child_count_bad_event}
    \PP(\mathcal E_m^c)
    =
    \PP\left(
        N_m^n\le \frac{np_m}{2}
    \right)
    \le
    \exp\left(
        -\frac{np_m^2}{2}
    \right).
\end{equation}
Combining \eqref{eq:A2_union}, \eqref{eq:child_good_event}, and
\eqref{eq:child_count_bad_event}, and substituting
$\delta=\varepsilon/(4\gamma)$, we obtain
\begin{equation}
\label{eq:A2_final}
    A_2
    \le
    c_V
    2^\alpha
    (4\gamma)^\beta
    \left(
        \sum_{m=1}^M p_m^{-\alpha}
    \right)
    n^{-\alpha}\varepsilon^{-\beta}
    +
    \sum_{m=1}^M
    \exp\left(
        -\frac{np_m^2}{2}
    \right).
\end{equation}

\paragraph{Dominating the exponential terms.}
The estimators and rewards are bounded, so there exists a deterministic
constant $B_Q<\infty$ such that
\[
    \left|
        \widehat Q_n(s,a)-\widetilde Q(s,a)
    \right|
    \le B_Q
    \qquad
    \text{almost surely for all }n.
\]
For $\varepsilon>B_Q$, the desired probability is zero, so it remains to
consider $0<\varepsilon\le B_Q$.

Because $2\alpha\le \beta$, for every $c_0>0$ there exists a finite constant
$K(c_0,\alpha,\beta)$ such that
\[
    \exp(-c_0 n\varepsilon^2)
    \le
    K(c_0,\alpha,\beta)
    n^{-\alpha}\varepsilon^{-\beta}
    \qquad
    \forall n\ge 1,\ \varepsilon>0.
\]
Indeed, with $x=n\varepsilon^2$,
\[
    \exp(-c_0 x)
    \le
    K x^{-\beta/2}
    =
    K n^{-\beta/2}\varepsilon^{-\beta}
    \le
    K n^{-\alpha}\varepsilon^{-\beta},
\]
where the last inequality uses $\beta/2\ge \alpha$.

Similarly, for each $m$,
\[
    \exp\left(-\frac{np_m^2}{2}\right)
    \le
    K_m n^{-\alpha}
    \le
    K_m B_Q^\beta n^{-\alpha}\varepsilon^{-\beta}
    \qquad
    \text{for }0<\varepsilon\le B_Q .
\]
Thus the reward term \eqref{eq:reward_term_bound}, the empirical-transition
term \eqref{eq:A1_bound}, and the binomial lower-tail terms in
\eqref{eq:A2_final} are all bounded by constants times
$n^{-\alpha}\varepsilon^{-\beta}$.

Combining these bounds in \eqref{eq:graph_Q_decomposition}, there exists a
constant $C_Q<\infty$ such that
\[
    \PP\left(
        \left|
            \widehat Q_n(s,a)
            -
            \widetilde Q(s,a)
        \right|
        >
        \varepsilon
    \right)
    \le
    C_Q n^{-\alpha}\varepsilon^{-\beta}.
\]
This proves the claimed $(\alpha,\beta)$-concentration.
\end{proof}

Before providing the Proof for Theorem~\ref{thm:main}, we provide the proof below.
\begin{lemma}[Leaf concentration]\label{lem:leaf_concentration}
    For any leaf node $s$ at depth $H$ in $\cG$, the value estimate $\Vhat_n(s)$ (average of $n$ i.i.d.\ calls to $\pi_0$) satisfies:
    \begin{equation}
        \Vhat_n(s) \conc{\alpha_H}{\beta_H} \Vtil(s),
    \end{equation}
    where $\alpha_H, \beta_H$ can be chosen to satisfy $\alpha_H \leq \beta_H/2$ with $\beta_H > 2$, and $\Vtil(s) = V_0(s)$ is the playout value.
\end{lemma}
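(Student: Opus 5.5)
The plan is to treat the leaf estimate as a plain empirical mean and apply Hoeffding's inequality. Then I would show that the resulting exponential tail is dominated by a polynomial tail of the form $c\,n^{-\alpha_H}\varepsilon^{-\beta_H}$, reusing the domination step from the proof of Lemma~\ref{lem:graph_Q_concentration}.

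By (A-Rollout) in Assumption~\ref{assump:standing}, the $n$ evaluator calls at a depth-$H$ node $(s,H)$ are independent, have mean $V_0(s)=\Vtil(s)$, and take values in $[0,L]$. Under the path-only update of Algorithm~\ref{alg:gs_power_uct}, the leaf estimate after $n$ visits is exactly their running average. This average counts the single initialization call at creation as the first sample. Visits may arrive from several parents, but each leaf visit triggers a fresh call whose law is the same, so the $n$ samples remain i.i.d. I would state explicitly that the parents' adaptive routing only decides \emph{when} the $k$-th call happens, not its distribution. For the time-uniform form (H-Q), I would note that with a random global count $N$, the event $\{N\ge r\}$ can be handled by applying the fixed-$k$ bound to each $k\ge r$. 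Hoeffding's inequality then gives
\[
\PP\bigl(|\Vhat_n(s)-V_0(s)|>\varepsilon\bigr)\le 2\exp\bigl(-2n\varepsilon^2/L^2\bigr).
\]

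Next I would convert this to polynomial form exactly as in the proof of Lemma~\ref{lem:graph_Q_concentration}. Set $x=n\varepsilon^2$. Since $\sup_{x>0}x^{\beta_H/2}e^{-2x/L^2}<\infty$, there is a constant $K$ with $2e^{-2x/L^2}\le K x^{-\beta_H/2}=K n^{-\beta_H/2}\varepsilon^{-\beta_H}$. Because $\alpha_H\le\beta_H/2$ and $n\ge1$, we have $n^{-\beta_H/2}\le n^{-\alpha_H}$. This yields the $(\alpha_H,\beta_H)$-concentration with $c=K$, valid for every $\varepsilon>0$, so no separate case $\varepsilon>L$ is needed.

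Finally, the freedom in choosing the exponents is immediate, since the bound holds for any $\beta_H>0$ and any $\alpha_H\le\beta_H/2$. I would pick $\beta_H>2$ so that the recursion in Table~\ref{tab:conditions}, which requires $b_H=\beta_{H-1}+1>2$, can be started. The main point needing care is the i.i.d.\ claim for leaf calls under multi-parent adaptive visitation. I expect the optional-stopping-free observation above, that the samples are indexed by visit number, to settle it.
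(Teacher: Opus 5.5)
Your proposal is correct and follows the paper's proof: apply Hoeffding to the average of bounded i.i.d.\ evaluator calls, then dominate $2e^{-2n\varepsilon^2/L^2}$ by $K n^{-\beta_H/2}\varepsilon^{-\beta_H}\le K n^{-\alpha_H}\varepsilon^{-\beta_H}$, which the paper compresses into ``with appropriate constants.'' One caution about your extra remark on the time-uniform form (H-Q), which the lemma itself does not require: a union bound of $c\,k^{-\alpha_H}\varepsilon^{-\beta_H}$ over $k\ge r$ gives at best order $r^{1-\alpha_H}$ rather than $r^{-\alpha_H}$, and it diverges for $\alpha_H\le 1$, so there you would need a peeling/maximal version of Hoeffding or accept losing one unit of $\alpha_H$.
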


\begin{proof}
    Since $\Vhat_n(s) = \frac{1}{n}\sum_{i=1}^n X_i$ where $X_i \sim \pi_0(s)$ are i.i.d.\ with mean $V_0(s)$ and bounded in $[0, R_{\max}/(1-\gamma)]$, Hoeffding's inequality gives $\PP(|\Vhat_n(s) - V_0(s)| > \varepsilon) \leq 2\exp(-2n\varepsilon^2/L^2)$, which implies $(\alpha,\beta)$-concentration for any $\alpha \leq \beta/2$ with appropriate constants.
\end{proof}

\begin{theorem}[Power mean concentration at graph nodes]\label{thm:node_concentration}
    Consider an internal node $s$ in $\cG$ at graph depth $h$. Suppose that for all actions $a \in \cA_s$, the Q-value estimates satisfy $\Qhat_n(s,h,a) \conc{\alpha_{h+1}}{\beta_{h+1}} \Qtil(s,h,a)$. Assume the action selection follows~\eqref{eq:action_selection} with parameters satisfying Table~\ref{tab:conditions}.

    If $p, \alpha_{h+1}, \beta_{h+1}, b_{h+1}$ satisfy either:
    \begin{enumerate}
        \item[(i)] $1 \leq p \leq 2$ and $\alpha_{h+1} \leq \beta_{h+1}/2$, or
        \item[(ii)] $p > 2$ and $0 < \alpha_{h+1} - \beta_{h+1}/p < 1$,
    \end{enumerate}
    then the power mean value estimate $\Vhat_n(s)$ satisfies:
    \begin{equation}
        \Vhat_n(s,h) \conc{\alpha_h}{\beta_h} \Vtil(s,h),
    \end{equation}
    where $\alpha_h = (b_{h+1}-1)(1 - b_{h+1}/\alpha_{h+1})$ and $\beta_h = (b_{h+1}-1)$.
\end{theorem}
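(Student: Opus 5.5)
The plan is to reduce the graph statement to the tree-based power-mean concentration theorem of \citet{dam2024power} (their Theorem on non-stationary multi-armed bandits with power mean backup). The point is that, once the Q-value premises are given, the argument at node $(s,h)$ uses only local quantities: the global counts $T_{s,h}(n)$ and $T_{s,h,a}(n)$, the estimates $\Qhat(s,h,a)$, and the selection rule \eqref{eq:action_selection}. Lemma~\ref{lem:graph_Q_concentration} already supplies the premise $\Qhat_n(s,h,a)\conc{\alpha_{h+1}}{\beta_{h+1}}\Qtil(s,h,a)$ indexed by the action's own visit count. I would therefore treat $(s,h)$ as a $K$-armed non-stationary bandit, with time index the node's global count $n=T_{s,h}(n)$ and arm $a$ having payoff sequence $\Qhat_{T_{s,h,a}}(s,h,a)$. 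Several parents may route visits into $(s,h)$, but this only changes \emph{when} the node is visited. It does not change what the node does at each of its own visits: the path-only backup means each visit selects one action by the bonus rule and updates that arm once. Indexing everything by the node's own count therefore puts us exactly in the tree-node setting.

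The argument then runs in three steps.

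\textbf{Step 1 (suboptimal arm counts).} Using the bonus $C\,T_{s,h}^{b/\beta}/T_{s,h,a}^{\alpha/\beta}$ with $\alpha=\alpha_{h+1}$, $\beta=\beta_{h+1}$, $b=b_{h+1}$, I would show a polynomial tail bound for the number of pulls of any suboptimal arm. The target is $\EE[T_{s,h,a}(n)]\le O(n^{b/\alpha})$ for $a\neq a^\star$, together with a tail bound $\PP(T_{s,h,a}(n)>u)\lesssim$ polynomial in $u$. This follows the Shah et al.\ argument: a suboptimal arm is pulled beyond $u$ only if either its estimate overshoots or the optimal arm's estimate undershoots by a bonus-sized margin. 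Both events are controlled by the $(\alpha,\beta)$ premise, and the conditions $2<b<\alpha$ and $\alpha(1-b/\alpha)\le b$ of Table~\ref{tab:conditions} make the resulting series summable.

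\textbf{Step 2 (decomposition of the power mean).} Write $\Vhat_n(s,h)-\Vtil(s,h)$ as
\[
\Bigl(\textstyle\sum_a \tfrac{T_{s,h,a}}{n}\Qhat_a^p\Bigr)^{1/p}-\Qtil(s,h,a^\star).
\]
Bound the upward deviation using $\Qhat_a\le\max_a\Qhat_a$ and the concentration of each arm. For the downward deviation, use the lower bound $(T_{a^\star}/n)^{1/p}\Qhat_{a^\star}$ together with $T_{a^\star}/n\ge 1-\sum_{a\neq a^\star}T_a/n$. Boundedness by $L$ and the elementary inequality $1-(1-x)^{1/p}\le x$ reduce everything to two ingredients: the optimal arm's concentration at count $\ge n/2$, and $\PP(\sum_{a\neq a^\star}T_a>n\varepsilon/(2L))$, which comes from the Step~1 tails.

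\textbf{Step 3 (exponents).} Collect the terms to obtain $c\,n^{-\alpha_h}\varepsilon^{-\beta_h}$ with $\alpha_h=(b-1)(1-b/\alpha)$ and $\beta_h=b-1$. The case split enters here. For $1\le p\le2$, the $p$-th power map on $[0,L]$ is Lipschitz with controllable constant, so the $\alpha\le\beta/2$ condition suffices. For $p>2$, the weight $(T_a/n)^{1/p}$ converts a count tail into a value tail with exponent loss $\beta/p$, and $0<\alpha-\beta/p<1$ is what keeps the resulting rate at least $\alpha_h$.

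I expect Step~1 to be the main obstacle. The difficulty is that the premise is only marginal concentration at each count, whereas the bandit argument needs control uniformly over counts up to $n$. I would first handle this with a union/peeling bound over dyadic count blocks, using the time-uniform (H-Q) form, and check that the extra logarithmic or summation factor is absorbed by the slack $\alpha>b$.
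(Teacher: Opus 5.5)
Your proposal is correct and follows the paper's proof. The paper likewise just invokes Theorem~1 of \citet{dam2024power}, observing that it needs only node-local $(\alpha_{h+1},\beta_{h+1})$-concentration of the $Q$-estimates plus the selection rule \eqref{eq:action_selection}; this is exactly your first-paragraph reduction, and your path-only, own-count argument is a more explicit version of the paper's remark that it does not matter how the $Q$-values were formed. Steps~1--3 re-derive the cited theorem, which the paper treats as a black box, so they are not needed. Note also that your Step~2 decomposition (power mean $\le\max_a\Qhat_a$ from above, $\ge (T_{a^\star}/n)^{1/p}\Qhat_{a^\star}$ from below) uses only $p\ge 1$, so the case split (i)/(ii) plays no role in it, and your Step~3 account of where that split enters is speculative rather than part of the argument.
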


\begin{proof}
    This follows directly from Theorem~1 of~\citet{dam2024power}. The key observation is that Theorem~1 only requires that the Q-value estimates at the node concentrate at rate $(\alpha_{h+1}, \beta_{h+1})$ — it does not require any assumption about \emph{how} those Q-values were formed (tree or graph). Since Lemma~\ref{lem:graph_Q_concentration} establishes exactly this concentration for graph-based Q-values, the theorem applies.
\end{proof}

\subsection{Proofs of Theorems~\ref{thm:main} and~\ref{thm:rate}}

Before going into the detailed proof, we state this result

\begin{theorem}[Convergence of GS-Power-UCT]\label{thm:main}
    Apply GS-Power-UCT (Algorithm~\ref{alg:gs_power_uct}) to an MDP satisfying Assumption~\ref{assump:standing}, with
    algorithmic constants $\{b_h\}_{h=0}^H$, $\{\alpha_h\}_{h=0}^H$,
    and $\{\beta_h\}_{h=0}^H$ satisfying Table~\ref{tab:conditions}.
    Then the depth-augmented search graph
    $\cG_n=(\cN_n,\cE_n)$ is a DAG by Proposition~\ref{prop:dag}. Moreover,
    the following concentration statements hold.

    \begin{enumerate}
        \item[(i)] For any depth-augmented node $(s,h)\in\cN_n$ with
        $h\in\{0,1,\ldots,H\}$, the value estimate stored at this node
        concentrates around its depth-$h$ truncated Bellman target:
        \begin{equation}
            \Vhat_n(s,h)
            \conc{\alpha_h}{\beta_h}
            \Vtil(s,h).
        \end{equation}

        \item[(ii)] For any internal depth-augmented node
        $(s,h)\in\mathring{\cG}_n$ with $h\in\{0,1,\ldots,H-1\}$ and any
        action $a\in\cA_s$, the Q-value estimate stored at this node-action
        pair concentrates around its depth-$h$ truncated Q-target:
        \begin{equation}
            \Qhat_n(s,h,a)
            \conc{\alpha_{h+1}}{\beta_{h+1}}
            \Qtil(s,h,a),
            \qquad \forall a\in\cA_s.
        \end{equation}
    \end{enumerate}
\end{theorem}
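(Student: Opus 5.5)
The proof is a backward induction on the depth coordinate $h=H,H-1,\ldots,0$. This is well founded because, by Proposition~\ref{prop:dag}, every edge of $\cG_n$ goes from depth $h$ to depth $h+1$. The induction hypothesis at level $h+1$ is statement (i) for every node $(s',h+1)\in\cN_n$, in the time-uniform form (H-Q). Here (H-Q) is taken with respect to the \emph{global} visit count $T_{s',h+1}$, with rate $(\alpha_{h+1},\beta_{h+1})$ and a constant $c_{h+1}$ chosen uniformly over the finitely many $H$-reachable nodes; finiteness is guaranteed by (A-Finite).

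\textbf{Base case ($h=H$).} A node $(s,H)$ only ever averages independent evaluator calls, by (A-Rollout), and has target $\Vtil(s,H)=V_0(s)$. Lemma~\ref{lem:leaf_concentration} then gives $(\alpha_H,\beta_H)$-concentration with $\alpha_H\le\beta_H/2$ and $\beta_H>2$. To upgrade this to the time-uniform form, I would bound the Hoeffding tail at a random count $N\ge r$ by a union/maximal bound over $k\ge r$. The exponential tail $\sum_{k\ge r}e^{-2k\varepsilon^2/L^2}$ is still dominated by $c\,r^{-\alpha_H}\varepsilon^{-\beta_H}$ for $\varepsilon\le L$, and for $\varepsilon>L$ the tail is trivially zero.

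\textbf{Inductive step, part (ii).} Fix an internal node $(s,h)$ with $h\le H-1$ and an action $a\in\cA_s$. The successors of $((s,h),a)$ are exactly the nodes $(s_m,h+1)$ with $m\in[M]$, and $M$ is finite by (A-Finite). By (A-Fresh), the local reward--successor stream at this query is i.i.d. The visits to each $(s_m,h+1)$ split into the local count $N_m^n$ plus external visits coming from other same-depth parents, so $T^{\textsc{eff}}$ has precisely the form required by Lemma~\ref{lem:graph_Q_concentration}. The premises of that lemma are:
\begin{itemize}
\item the induction hypothesis at depth $h+1$ in (H-Q) form;
\item the bound $\Vhat\le L$ (the backed-up values are bounded);
\item $2\alpha_{h+1}\le\beta_{h+1}$ and $\beta_{h+1}>1$, which follow from Table~\ref{tab:conditions} since $\beta_{h+1}=b_{h+2}-1>1$.
\end{itemize}
The lemma then yields $\Qhat_n(s,h,a)\conc{\alpha_{h+1}}{\beta_{h+1}}\Qtil(s,h,a)$, using the depth-$h$ Bellman recursion~\eqref{eq:truncated_bellman}. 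Remark~\ref{rem:q_form} covers the gap between the running-average form and the form analyzed in the lemma.

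\textbf{Inductive step, part (i).} With part (ii) established for all actions at $(s,h)$, Theorem~\ref{thm:node_concentration} applies: the selection rule~\eqref{eq:action_selection} and the parameter conditions in Table~\ref{tab:conditions} give $\Vhat_n(s,h)\conc{\alpha_h}{\beta_h}\Vtil(s,h)$. Iterating down to $h=0$ covers every node, including the root.

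\textbf{Main obstacle.} The delicate step is closing the induction in the time-uniform (H-Q) form rather than in the plain fixed-$n$ form. Theorem~\ref{thm:node_concentration}, inherited from Theorem~1 of \citet{dam2024power}, is stated for a fixed count $n$. The parent node, however, queries its child at the random global count $T^{\textsc{eff}}$, which is driven by other parents' bandit choices. My first attempt would be to note that the event $\{|\Vhat_N-V|>\varepsilon,\ N\ge r\}$ is contained in $\bigcup_{k\ge r}\{|\Vhat_k-V|>\varepsilon\}$ and to control this union by a maximal inequality. That bound would come from rerunning the Power-UCT argument with the same exponents (possibly at the cost of a constant), and the peeling could be done over dyadic blocks $k\in[2^j r,2^{j+1}r)$. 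If the dyadic summation costs a factor in the exponent, the fallback is to carry the time-uniform form as part of the induction statement and absorb the resulting constants into $c_h$. Those constants remain finite because the reachable graph is finite.
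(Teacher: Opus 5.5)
Your plan is the paper's proof: backward induction on $h$ (well founded by Proposition~\ref{prop:dag}), Lemma~\ref{lem:leaf_concentration} at $h=H$, Lemma~\ref{lem:graph_Q_concentration} with the depth-$(h+1)$ hypothesis for (ii), and Theorem~\ref{thm:node_concentration} for (i). The paper does not address the obstacle you isolate. It feeds the fixed-count conclusion $\Vhat_n(s',h+1)\conc{\alpha_{h+1}}{\beta_{h+1}}\Vtil(s',h+1)$ straight into Lemma~\ref{lem:graph_Q_concentration}, whose child-value term $A_2$ uses the time-uniform form (H-Q). Appendix~\ref{app:assumptions} only labels (H-Q) an induction hypothesis. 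It is not among the theorem's assumptions, and neither Lemma~\ref{lem:leaf_concentration} nor Theorem~\ref{thm:node_concentration} produces it. So you have found a hole that the paper shares, not missed an idea from it. Your sketch, however, does not close it either.

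\textbf{Base case.} The plain union bound is too lossy. For $\varepsilon\le L$, $\sum_{k\ge r}2e^{-2k\varepsilon^2/L^2}\asymp e^{-2r\varepsilon^2/L^2}L^2/\varepsilon^2$. This is dominated by $c\,r^{-\alpha_H}\varepsilon^{-\beta_H}$ uniformly in $r\ge 1$ only when $\alpha_H\le\beta_H/2-1$. At the paper's tuning $\alpha_H=\beta_H/2$, the choice $\varepsilon^2=L^2/(2r)$ makes the union bound of order $r$, while the target $c\,(2/L^2)^{\beta_H/2}$ stays bounded. The leaf estimate is a running average of i.i.d.\ bounded evaluator calls, so you can instead apply Hoeffding's maximal inequality on dyadic blocks $[2^jr,2^{j+1}r)$. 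This gives $\sum_{j\ge0}2\exp(-2^jr\varepsilon^2/L^2)\le K\,(r\varepsilon^2)^{-\beta_H/2}\le K\,r^{-\alpha_H}\varepsilon^{-\beta_H}$, which is what you need.

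\textbf{Internal nodes.} No such tool is available here. $\Vhat_k(s,h)$ is a power mean of adaptively weighted $Q$-estimates, not a martingale, so there is no free within-block maximal inequality. The union bound $\sum_{k\ge r}c_hk^{-\alpha_h}\varepsilon^{-\beta_h}\lesssim r^{1-\alpha_h}\varepsilon^{-\beta_h}$ loses a full unit of the exponent $\alpha$ at every level. That is a loss in the rate, not in the constant, so it cannot be absorbed into $c_h$: the exponent pairs handed to Lemma~\ref{lem:graph_Q_concentration} are no longer those of Table~\ref{tab:conditions}. Your fallback of carrying the time-uniform form through the induction is circular as stated. It presupposes a maximal (random-count) version of Theorem~\ref{thm:node_concentration}, which is exactly the missing lemma. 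To close the argument you must prove that version, for example by redoing the Power-UCT node analysis so that each ingredient holds uniformly over $k\ge r$ with unchanged exponents. The ingredients are the concentration of the optimal action's $Q$-estimate and the bound on how often suboptimal actions are selected.
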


\begin{proof}
We prove by strong induction on depth $h$, from $H$ down to $0$. The induction is well-defined because the depth-augmented graph is a DAG by Proposition~\ref{prop:dag}: every edge connects depth $h$ to depth $h+1$, so all children of a depth-$h$ node have depth exactly $h+1$.

\textbf{Base case ($h = H$):} By Lemma~\ref{lem:leaf_concentration}, for any node $(s, H)$ at depth $H$:
\begin{equation}
    \Vhat_n(s, H) \conc{\alpha_H}{\beta_H} \Vtil(s,H) = V_0(s).
\end{equation}

\textbf{Inductive step:} Suppose (i) and (ii) hold for all depths $h+1, h+2, \ldots, H$. Consider a node $(s, h)$ at depth $h$.

\emph{Step 1: Q-value concentration.} For any action $a \in \cA_s$, the successor nodes are of the form $(s', h+1)$ with $s' \sim P(\cdot|s,a)$, which have depth exactly $h+1$ by the depth-augmented construction. By the inductive hypothesis:
\begin{equation}
    \Vhat_{n}(s', h+1) \conc{\alpha_{h+1}}{\beta_{h+1}} \Vtil(s',h+1), \quad \text{for all successors } s'.
\end{equation}
Crucially, this concentration targets the correct value $\Vtil(s',h+1)$ because every visit to node $(s', h+1)$ occurs at depth $h+1$ --- the depth-augmentation guarantees no mixing of estimates from different effective horizons.

By Lemma~\ref{lem:graph_Q_concentration}:
\begin{equation}
    \Qhat_n(s,h,a) \conc{\alpha_{h+1}}{\beta_{h+1}} \Qtil(s,h,a), \quad \forall a \in \cA_s.
\end{equation}
This establishes (ii) at depth $h$.

\emph{Step 2: Value concentration via power mean.} At node $s$, the action selection rule~\eqref{eq:action_selection} and the power mean backup~\eqref{eq:update_V} satisfy the conditions of Theorem~\ref{thm:node_concentration} (which is Theorem~1 of~\citet{dam2024power}). The Q-value estimates concentrate at rate $(\alpha_{h+1}, \beta_{h+1})$ by Step~1, and the parameters satisfy Table~\ref{tab:conditions}. Therefore:
\begin{equation}
    \Vhat_n(s,h) \conc{\alpha_{h}}{\beta_{h}} \Vtil(s,h),
\end{equation}
with $\alpha_h = (b_{h+1}-1)(1 - b_{h+1}/\alpha_{h+1})$ and $\beta_h = (b_{h+1}-1)$.

This establishes (i) at depth $h$, completing the induction.
\end{proof}

\begin{proof}
    Identical to Theorem~3 of~\citet{dam2024power}. Using Jensen's inequality and the $(\alpha_0, \beta_0)$-concentration from Theorem~\ref{thm:main}:
    \begin{align}
        \left|\EE[\Vhat_n(s_0)] - \Vtil(s_0)\right| &\leq \EE\left[|\Vhat_n(s_0) - \Vtil(s_0)|\right] = \int_0^\infty \PP(|\Vhat_n(s_0) - \Vtil(s_0)| \geq \varepsilon) \, d\varepsilon \\
        &\leq n^{-\alpha_0/\beta_0} + \frac{c_0}{\beta_0 - 1} n^{-\alpha_0/\beta_0}.
    \end{align}
    Since $\alpha_0/\beta_0 \leq 1/2$ (from Table~\ref{tab:conditions}), the best achievable rate is $O(n^{-1/2})$, attained by choosing $\alpha_h/\beta_h = 1/2$ for all $h$.
\end{proof}

\subsection{Proof of Theorem~\ref{thm:graph_nonworsening}}
\label{app:proof_graph_nonworsening}

\begin{proof}
Fix a depth-augmented search graph $\cG$ and let $\cT(\cG)$ be its unrolled
tree. For each depth $h$, the projection
\[
    \phi_h:\cT_h\to \cN_h
\]
maps a tree copy to the unique depth-augmented graph node with the same physical
state and the same depth. For a graph node $v\in\cN_h$, write
\[
    \mathcal C(v)
    \defeq
    \phi_h^{-1}(v)
\]
for the set of tree copies that are merged into $v$.

The proof has two parts. First, we prove the deterministic sample-sharing
identities. Second, we prove the non-worsening comparison for the
proof-generated concentration constants.

\paragraph{Coupling the graph and the unrolled tree.}
We use the natural coupling in which the graph algorithm and the unrolled tree
are exposed to the same realized simulated trajectories. The unrolled tree keeps
two prefixes distinct whenever their histories differ, even if they end at the
same state-depth pair. The graph identifies all prefixes ending at the same
depth-augmented node $(s,h)$.

Fix a depth $h$ and a graph node $v\in\cN_h$. For the $i$-th simulated
trajectory, let $U_{i,h}$ denote the depth-$h$ prefix in the unrolled tree,
whenever the trajectory reaches depth $h$. Let
\[
    V_{i,h}=\phi_h(U_{i,h})
\]
be its graph projection. Then the event that the graph trajectory visits $v$ is
exactly the disjoint union of the events that the unrolled trajectory visits one
of the tree copies in $\mathcal C(v)$. Hence, at the level of indicators,
\[
    \mathbf 1\{V_{i,h}=v\}
    =
    \sum_{u\in\mathcal C(v)}
    \mathbf 1\{U_{i,h}=u\}.
\]
Summing over the first $n$ simulations gives
\begin{equation}
\label{eq:appendix_graph_tree_visit_identity}
    T_v^{\cG}(n)
    =
    \sum_{u\in\mathcal C(v)}
    T_u^{\cT}(n).
\end{equation}
Because every term in the sum is nonnegative, we immediately obtain
\begin{equation}
\label{eq:appendix_graph_tree_visit_ineq}
    T_v^{\cG}(n)
    \ge
    T_u^{\cT}(n),
    \qquad
    \forall u\in\mathcal C(v).
\end{equation}

The same argument applies after conditioning on the selected action. Indeed,
for any action $a$,
\[
    \mathbf 1\{V_{i,h}=v,\ a_{i,h}=a\}
    =
    \sum_{u\in\mathcal C(v)}
    \mathbf 1\{U_{i,h}=u,\ a_{i,h}=a\}.
\]
Therefore,
\begin{equation}
\label{eq:appendix_graph_tree_action_identity}
    T_{v,a}^{\cG}(n)
    =
    \sum_{u\in\mathcal C(v)}
    T_{u,a}^{\cT}(n),
\end{equation}
and hence
\begin{equation}
\label{eq:appendix_graph_tree_action_ineq}
    T_{v,a}^{\cG}(n)
    \ge
    T_{u,a}^{\cT}(n),
    \qquad
    \forall u\in\mathcal C(v),\ \forall a\in\cA.
\end{equation}

\paragraph{Average sample-sharing factor.}
Every simulated trajectory that reaches depth $h$ contributes exactly one
depth-$h$ visit in the graph representation and exactly one depth-$h$ visit in
the unrolled tree representation. Therefore,
\begin{equation}
\label{eq:appendix_common_depth_total}
    \sum_{v\in\cN_h} T_v^{\cG}(n)
    =
    \sum_{u\in\cT_h} T_u^{\cT}(n).
\end{equation}
Let this common total be denoted by $N_h(n)$. If $N_h(n)=0$, then no depth-$h$
node is visited and the identity is vacuous. Otherwise,
\[
    \frac{1}{|\cN_h|}
    \sum_{v\in\cN_h}T_v^{\cG}(n)
    =
    \frac{N_h(n)}{|\cN_h|},
    \qquad
    \frac{1}{|\cT_h|}
    \sum_{u\in\cT_h}T_u^{\cT}(n)
    =
    \frac{N_h(n)}{|\cT_h|}.
\]
Dividing the two displays gives
\begin{equation}
\label{eq:appendix_average_sharing}
    \frac{
        |\cN_h|^{-1}\sum_{v\in\cN_h}T_v^{\cG}(n)
    }{
        |\cT_h|^{-1}\sum_{u\in\cT_h}T_u^{\cT}(n)
    }
    =
    \frac{|\cT_h|}{|\cN_h|}
    =
    \frac{1}{\tau_h}.
\end{equation}
Thus $1/\tau_h$ is an average sample-sharing factor over visited depth-$h$
nodes.

\paragraph{Concentration constants at leaves.}
We now prove the proof-bound comparison by backward induction on depth. At depth
$H$, the target of both a graph leaf $v=(s,H)$ and every tree copy
$u\in\mathcal C(v)$ is the same terminal value,
\[
    \Vtil(s,H)=V_0(s).
\]
The leaf estimator is an empirical average of rollout or evaluation samples.
The leaf concentration constant produced by the same proof recipe is therefore
the same for the graph node and for each tree copy. Denote this common leaf
constant by $c_H^{\mathrm{leaf}}$. Thus
\[
    c_H^{\cG}(v)
    =
    c_H^{\mathrm{leaf}}
    =
    c_H^{\cT}(u),
    \qquad
    \forall u\in\mathcal C(v).
\]
Moreover, since $T_v^{\cG}(n)\ge T_u^{\cT}(n)$ by
\eqref{eq:appendix_graph_tree_visit_ineq}, the full leaf concentration bound is
no worse in the graph:
\[
    c_H^{\cG}(v)
    \bigl(T_v^{\cG}(n)\bigr)^{-\alpha_H}
    \varepsilon^{-\beta_H}
    \le
    c_H^{\cT}(u)
    \bigl(T_u^{\cT}(n)\bigr)^{-\alpha_H}
    \varepsilon^{-\beta_H}.
\]

\paragraph{Inductive hypothesis.}
Assume that the constant comparison holds at depth $h+1$. That is, for every
graph child $w\in\cG_{h+1}$ and every tree copy
$z\in\phi_{h+1}^{-1}(w)$,
\begin{equation}
\label{eq:appendix_induction_hypothesis}
    c_{h+1}^{\cG}(w)
    \le
    c_{h+1}^{\cT}(z).
\end{equation}
We show that the same comparison holds at depth $h$.

Fix a graph node
\[
    v=(s,h)\in\cN_h
\]
and a tree copy
\[
    u\in\mathcal C(v)=\phi_h^{-1}(v).
\]
The graph node and the tree copy have the same physical state $s$ and the same
depth $h$, so they have the same finite-horizon Bellman target
\[
    \Vtil(s,h).
\]
This is the key reason the depth-augmented representation is analytically clean:
same-depth tree copies can be merged without mixing different remaining
horizons.

\paragraph{Action-level $Q$ comparison.}
Fix an action $a\in\cA_s$. Let
\[
    s_1,\ldots,s_M
\]
be the possible successor states under $P(\cdot|s,a)$, and write
\[
    w_m=(s_m,h+1)
\]
for the corresponding graph child. For the selected tree copy $u$, let $z_m$ be
the tree child reached after action $a$ and successor state $s_m$. Then
\[
    z_m\in\phi_{h+1}^{-1}(w_m).
\]

Consider the first $n$ visits to the parent-action pair represented by the tree
copy $(u,a)$. Under the natural coupling, these same local transition samples
also appear among the visits to the graph parent-action pair $(v,a)$, because
the graph aggregates all copies of $v$. Therefore, for each successor $s_m$, the
graph child receives at least the local visits received by the corresponding
tree child. More explicitly, if $N_m^n$ denotes the local number of observed
transitions to $s_m$ through the selected tree copy, then the graph effective
child count has the form
\[
    T_m^{\textsc{eff},\cG}(n)
    =
    N_m^n
    +
    T_m^{\textsc{ext},\cG}(n),
\]
where $T_m^{\textsc{ext},\cG}(n)$ counts visits to the same graph child from
other parent copies. Hence
\begin{equation}
\label{eq:appendix_effective_child_count}
    T_m^{\textsc{eff},\cG}(n)
    \ge
    N_m^n
    =
    T_m^{\textsc{eff},\cT}(n).
\end{equation}

The $Q$-concentration proof decomposes the estimation error into three types of
terms:
\[
    \text{reward error}
    \;+\;
    \text{empirical-transition error}
    \;+\;
    \text{child-value error}.
\]
For the reward term, the graph proof uses the same reward samples as the tree
copy, plus possibly additional samples from other copies. Since the reward
concentration bound is nonincreasing in the number of samples, this term cannot
be worse in the graph proof.

For the empirical-transition term, the graph again has at least the local
transition samples of the selected tree copy. The empirical distribution
concentration term is therefore no worse in the graph proof.

For the child-value term, the induction hypothesis gives
\[
    c_{h+1}^{\cG}(w_m)
    \le
    c_{h+1}^{\cT}(z_m),
    \qquad
    m=1,\ldots,M.
\]
In addition, the effective child sample count in the graph is no smaller than
the corresponding tree-copy child count by
\eqref{eq:appendix_effective_child_count}. Since the recursive
$Q$-propagation constant is assumed to be coordinatewise nondecreasing in the
child value-concentration constants and coordinatewise nonincreasing in the
effective child sample counts, the graph action-level $Q$ constant cannot exceed
the tree-copy action-level $Q$ constant:
\begin{equation}
\label{eq:appendix_Q_constant_nonworsening}
    c_{Q,h}^{\cG}(v,a)
    \le
    c_{Q,h}^{\cT}(u,a).
\end{equation}
This holds for every action $a\in\cA_s$.

\paragraph{Value-constant comparison.}
The Power-UCT value-concentration theorem maps the vector of action-level
$Q$-concentration constants into a value-concentration constant. Write this map
abstractly as
\[
    c_h^{\mathcal R}(x)
    =
    \Phi_h
    \left(
        \{c_{Q,h}^{\mathcal R}(x,a)\}_{a\in\cA_s}
    \right),
    \qquad
    \mathcal R\in\{\cG,\cT\}.
\]
The same Power-UCT parameters are used in both representations, so the map
$\Phi_h$ is the same for the graph node and the tree copy. By assumption,
$\Phi_h$ is coordinatewise nondecreasing. Combining this monotonicity with
\eqref{eq:appendix_Q_constant_nonworsening} gives
\[
    c_h^{\cG}(v)
    =
    \Phi_h
    \left(
        \{c_{Q,h}^{\cG}(v,a)\}_{a\in\cA_s}
    \right)
    \le
    \Phi_h
    \left(
        \{c_{Q,h}^{\cT}(u,a)\}_{a\in\cA_s}
    \right)
    =
    c_h^{\cT}(u).
\]
This proves the induction step. Since the base case at depth $H$ holds, backward
induction gives
\begin{equation}
\label{eq:appendix_constant_nonworsening}
    c_h^{\cG}(v)
    \le
    c_h^{\cT}(u),
    \qquad
    \forall h,\ \forall v\in\cN_h,\ \forall u\in\phi_h^{-1}(v).
\end{equation}
Taking $h=0$ gives the root comparison
\[
    c_0^{\cG}
    \le
    c_0^{\cT}.
\]

\paragraph{Full proof-bound comparison.}
Combining the constant comparison with the visit-count comparison gives the
corresponding comparison for the full polynomial concentration bound. Namely,
for every $u\in\mathcal C(v)$,
\begin{align}
    c_h^{\cG}(v)
    \bigl(T_v^{\cG}(n)\bigr)^{-\alpha_h}
    \varepsilon^{-\beta_h}
    &\le
    c_h^{\cT}(u)
    \bigl(T_v^{\cG}(n)\bigr)^{-\alpha_h}
    \varepsilon^{-\beta_h}
    \\
    &\le
    c_h^{\cT}(u)
    \bigl(T_u^{\cT}(n)\bigr)^{-\alpha_h}
    \varepsilon^{-\beta_h},
\end{align}
where the first inequality uses
\eqref{eq:appendix_constant_nonworsening}, and the second uses
$T_v^{\cG}(n)\ge T_u^{\cT}(n)$. Therefore the graph representation cannot
produce a larger same-trajectory proof bound than any corresponding tree copy.

Finally, \eqref{eq:appendix_average_sharing} gives the stated average
sample-sharing factor $1/\tau_h$. This completes the proof.
\end{proof}

\subsection{Why Depth Augmentation Is Necessary: The Cross-Depth Merging Bias}
\label{app:cycles}

A natural question is whether we can go further and merge nodes \emph{across different depths}: treat $(s, 3)$ and $(s, 5)$ as the same node since they correspond to the same physical state. We show that this introduces an irreducible bias that fundamentally prevents convergence to the correct value.

\begin{proposition}[Cross-depth merging can introduce irreducible bias]
\label{prop:bias}
Consider a physical state $s$ that is visited at two depths $h_1<h_2$ under a
fixed finite horizon $H<\infty$. Let
\[
    \Vtil(s,h)
    =
    (\mathcal B^{H-h}V_0)(s)
\]
be the depth-$h$ truncated Bellman target. A full-merge estimator that stores a
single statistic $\Vhat(s)$ for both depths generally does not target either
$\Vtil(s,h_1)$ or $\Vtil(s,h_2)$ individually.

More precisely, suppose the empirical fractions of updates arriving from depths
$h_1$ and $h_2$ converge to positive limits
\[
    \frac{n_{h_i}}{n_{h_1}+n_{h_2}}
    \longrightarrow
    \rho_i,
    \qquad
    \rho_i>0,
    \qquad
    \rho_1+\rho_2=1.
\]
Then the limiting target of the shared statistic is the convex mixture
\begin{equation}
\label{eq:two_depth_mixture_target}
    \bar V(s)
    =
    \rho_1 \Vtil(s,h_1)
    +
    \rho_2 \Vtil(s,h_2).
\end{equation}
Consequently,
\begin{equation}
\label{eq:two_depth_bias_exact}
    |\bar V(s)-\Vtil(s,h_1)|
    =
    \rho_2
    |\Vtil(s,h_2)-\Vtil(s,h_1)|,
\end{equation}
and similarly
\begin{equation}
    |\bar V(s)-\Vtil(s,h_2)|
    =
    \rho_1
    |\Vtil(s,h_2)-\Vtil(s,h_1)|.
\end{equation}
Thus, whenever
$\Vtil(s,h_1)\neq \Vtil(s,h_2)$ and both depths have nonzero asymptotic mass,
the bias does not vanish as the number of visits grows.

Moreover, the possible size of this local bias is bounded by the cross-depth
gap:
\begin{equation}
\label{eq:mixture_bias_bound_delta}
    |\bar V(s)-\Vtil(s,h_i)|
    \le
    \delta(s),
    \qquad i\in\{1,2\},
\end{equation}
where $\delta(s)$ satisfies the bound in
Lemma~\ref{lem:cross_depth_gap_bound}.
\end{proposition}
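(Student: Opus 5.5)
The argument first makes the limiting target of the shared statistic precise, then reads off the bias identities algebraically. The shared estimator $\Vhat(s)$ is a running average, or a power mean of running $Q$-averages, of backed-up returns. Each return arriving from depth $h_i$ is an update whose conditional target is $\Vtil(s,h_i)$. This is the depth-$h_i$ truncated Bellman value $(\mathcal B^{H-h_i}V_0)(s)$, because the remaining simulation below $s$ has horizon $H-h_i$.

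\textbf{Step 1.} Split the shared estimator into its two depth streams:
\[
\Vhat(s)=\tfrac{n_{h_1}}{n_{h_1}+n_{h_2}}\,\Vhat^{(1)}+\tfrac{n_{h_2}}{n_{h_1}+n_{h_2}}\,\Vhat^{(2)},
\]
where $\Vhat^{(i)}$ is the average of the updates arriving from depth $h_i$. Concentration of each depth-specific stream toward $\Vtil(s,h_i)$ is what Theorem~\ref{thm:main} gives for depth-augmented nodes; here I use it as the idealized per-depth behaviour. Together with the assumed convergence of the fractions to $\rho_i$, this gives $\Vhat(s)\to \rho_1\Vtil(s,h_1)+\rho_2\Vtil(s,h_2)=\bar V(s)$, which is \eqref{eq:two_depth_mixture_target}. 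I will state this as the identification of the limiting target, with the per-depth convergence as the stated premise, rather than reproving concentration under the adaptive mixing.

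\textbf{Step 2.} The bias identities are direct algebra. Since $\rho_1=1-\rho_2$,
\[
\bar V(s)-\Vtil(s,h_1)=\rho_2\bigl(\Vtil(s,h_2)-\Vtil(s,h_1)\bigr),
\]
and symmetrically for $h_2$. Taking absolute values gives \eqref{eq:two_depth_bias_exact} and its companion. When $\rho_i>0$ and the two targets differ, the right-hand side is a fixed positive constant independent of $n$. This is the non-vanishing claim.

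\textbf{Step 3.} For \eqref{eq:mixture_bias_bound_delta}, use $\rho_i\le 1$ and note that $|\Vtil(s,h_2)-\Vtil(s,h_1)|\le\delta(s)$ by the definition \eqref{eq:cross_depth_gap_}, since $h_1,h_2\in\mathcal D(s)$. The quantitative form of $\delta(s)$ comes from the contraction bound \eqref{eq:cross_depth_gap_bound}. That bound holds because $\|\mathcal B^{k}V_0-\mathcal B^{j}V_0\|_\infty\le\sum_{\ell=j}^{k-1}\gamma^\ell\|\mathcal BV_0-V_0\|_\infty$. This is what Lemma~\ref{lem:cross_depth_gap_bound} records.

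\textbf{Main obstacle.} The hard part is Step 1, namely justifying that the adaptively mixed shared statistic converges to the weighted mixture. The depth streams are interleaved, the children are themselves shared across depths, and the power mean is nonlinear. I will handle this by stating the proposition at the level of the limiting target, i.e.\ the fixed point of the mixed update, and, for the \emph{generally} claim, by exhibiting a simple example. Take a single action with $V_0\equiv 0$ and constant positive reward, and let $s$ be reachable at depths $h_1<h_2$. Then $\Vtil(s,h_1)\neq\Vtil(s,h_2)$, which shows the bias is genuinely nonzero.
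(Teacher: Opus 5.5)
Your proposal is correct and follows the paper's proof essentially step for step. The paper likewise takes the mixture limit as a law-of-large-numbers consequence of depth-$h_i$ updates having mean $\Vtil(s,h_i)$ with proportions converging to $\rho_i$. It then gets the bias identities by subtraction, bounds them by $\delta(s)$ directly from its definition, and shows non-vanishing bias with a one-state deterministic self-loop (single action, reward $1$, $V_0=0$), which is a concrete instance of your example. Your example needs two details made explicit, both of which the paper's self-loop with $\gamma\in(0,1)$ supplies: $\gamma>0$ (with $\gamma=0$ all depths below $H$ share the same target), and a cycle through $s$ so that $s$ is actually reachable at two depths.
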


\begin{proof}
The mixture expression \eqref{eq:two_depth_mixture_target} follows from the
law of large numbers for a shared average whose depth-$h_i$ updates have mean
$\Vtil(s,h_i)$ and whose empirical depth proportions converge to $\rho_i$.
Subtracting $\Vtil(s,h_1)$ gives
\begin{align}
    \bar V(s)-\Vtil(s,h_1)
    &=
    \rho_1 \Vtil(s,h_1)
    +
    \rho_2 \Vtil(s,h_2)
    -
    \Vtil(s,h_1) \\
    &=
    \rho_2
    \left(
        \Vtil(s,h_2)-\Vtil(s,h_1)
    \right),
\end{align}
which proves \eqref{eq:two_depth_bias_exact}. The corresponding identity for
depth $h_2$ is identical.

The upper bound \eqref{eq:mixture_bias_bound_delta} follows immediately from
the definition
\[
    \delta(s)
    =
    \max_{h,h'\in\mathcal D(s)}
    |\Vtil(s,h)-\Vtil(s,h')|.
\]

It remains to show that unequal depth-specific targets occur in general. Consider
the one-state MDP with a single action, deterministic self-loop, reward
$R(s,a,s)=1$, discount $\gamma\in(0,1)$, and rollout value $V_0(s)=0$. Then
\[
    \Vstar(s)=\frac{1}{1-\gamma},
    \qquad
    V_0(s)\neq \Vstar(s).
\]
For remaining horizon $H-h$, the truncated value is
\[
    \Vtil(s,h)
    =
    \sum_{t=0}^{H-h-1}\gamma^t
    =
    \frac{1-\gamma^{H-h}}{1-\gamma}.
\]
Therefore, for $h_1<h_2$,
\begin{align}
    \Vtil(s,h_1)-\Vtil(s,h_2)
    &=
    \frac{1-\gamma^{H-h_1}}{1-\gamma}
    -
    \frac{1-\gamma^{H-h_2}}{1-\gamma} \\
    &=
    \frac{
        \gamma^{H-h_2}
        -
        \gamma^{H-h_1}
    }{1-\gamma}
    >
    0.
\end{align}
Hence, if both depths are assigned positive limiting weights, the full-merge
statistic has a non-vanishing bias relative to each depth-specific target.
\end{proof}

\begin{remark}[When cross-depth merging is safe]
Cross-depth merging has zero local bias in several special cases.

First, if a physical state is visited at only one depth, then
$\mathcal D(s)$ is a singleton and $\delta(s)=0$.

Second, if the rollout value is Bellman-consistent, $\mathcal B V_0=V_0$,
then Lemma~\ref{lem:cross_depth_gap_bound} gives $\delta(s)=0$ for every state.
In particular, this holds when $V_0=\Vstar$.

Third, if the empirical depth weights become degenerate, meaning all asymptotic
mass at state $s$ comes from a single depth, then the shared estimator targets
that depth's value even if other depths are possible but visited only
negligibly.

Outside these cases, full-state merging may average samples with different
depth-specific targets. The depth-augmented representation avoids this issue by
keying nodes as $(s,h)$, so every stored value estimate has a single
well-defined truncated Bellman target.
\end{remark}

\begin{remark}[Practical variant: cross-depth merging with depth-dependent correction]
    In practice, one could merge across depths and apply a correction. If node $s$ is visited at depths $h_1, \ldots, h_m$, maintain depth-specific running averages and use a depth-specific estimate when backpropagating from depth $h$. This is equivalent to maintaining per-depth statistics within a single graph node, and is an implementation optimization that is analytically equivalent to our depth-augmented formulation. We prefer the depth-augmented formulation for clarity of exposition and analysis.
\end{remark}

\begin{lemma}[Cross-depth gap bound]
\label{lem:cross_depth_gap_bound}
Assume $\gamma\in(0,1)$ and let
\[
    \Vtil(s,h)=(\mathcal B^{H-h}V_0)(s)
\]
be the depth-$h$ truncated Bellman value. For any state $s$ and any two depths
$h_1<h_2$, we have
\begin{equation}
\label{eq:two_depth_gap_bound}
    \left|
        \Vtil(s,h_1)-\Vtil(s,h_2)
    \right|
    \le
    \frac{
        \gamma^{H-h_2}
        -
        \gamma^{H-h_1}
    }{1-\gamma}
    \,
    \|\mathcal B V_0 - V_0\|_\infty .
\end{equation}
Consequently, for
\[
    \delta(s)
    =
    \max_{h_1,h_2\in\mathcal D(s)}
    |\Vtil(s,h_1)-\Vtil(s,h_2)|,
\]
we have
\begin{equation}
\label{eq:delta_s_bound_residual}
    \delta(s)
    \le
    \frac{
        \gamma^{H-h_{\max}(s)}
        -
        \gamma^{H-h_{\min}(s)}
    }{1-\gamma}
    \,
    \|\mathcal B V_0 - V_0\|_\infty .
\end{equation}
Moreover,
\begin{equation}
\label{eq:delta_s_bound_vstar}
    \delta(s)
    \le
    \frac{
        (1+\gamma)
        \left(
            \gamma^{H-h_{\max}(s)}
            -
            \gamma^{H-h_{\min}(s)}
        \right)
    }{1-\gamma}
    \,
    \|\Vstar - V_0\|_\infty .
\end{equation}
\end{lemma}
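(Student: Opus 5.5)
The approach is to realize the truncated targets as iterates of the Bellman operator and exploit the $\gamma$-contraction of $\mathcal B$ in $\|\cdot\|_\infty$. Write $k_i \defeq H-h_i$, so that $k_1>k_2\ge 0$ and $\Vtil(s,h_i)=(\mathcal B^{k_i}V_0)(s)$. The two-depth bound \eqref{eq:two_depth_gap_bound} will come from a telescoping sum between the $k_2$-th and $k_1$-th iterates. The bounds on $\delta(s)$ will then follow from monotonicity in the depths together with a triangle inequality for $\|\mathcal B V_0-V_0\|_\infty$.

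\textbf{Step 1 (telescoping).} Since $k_1>k_2$,
\[
  \mathcal B^{k_1}V_0-\mathcal B^{k_2}V_0=\sum_{j=k_2}^{k_1-1}\bigl(\mathcal B^{j+1}V_0-\mathcal B^{j}V_0\bigr).
\]
Applying the contraction property $j$ times gives $\|\mathcal B^{j+1}V_0-\mathcal B^jV_0\|_\infty\le\gamma^j\|\mathcal BV_0-V_0\|_\infty$. Summing the geometric series,
\[
  \sum_{j=k_2}^{k_1-1}\gamma^j=\frac{\gamma^{k_2}-\gamma^{k_1}}{1-\gamma}=\frac{\gamma^{H-h_2}-\gamma^{H-h_1}}{1-\gamma}.
\]
Evaluating the sup-norm bound at the coordinate $s$ yields \eqref{eq:two_depth_gap_bound}. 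The contraction itself is the standard fact that $|\max_a f_a-\max_a g_a|\le\max_a|f_a-g_a|$, combined with $\sum_{s'}P(s'|s,a)=1$. If a reference is wanted, this uses only (A-Finite) to make $\mathcal B$ well defined on bounded functions.

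\textbf{Step 2 (the maximum over depths).} For $h_1<h_2$ in $\mathcal D(s)$, the prefactor $\gamma^{H-h_2}-\gamma^{H-h_1}$ is the sum of $\gamma^j$ over $j\in[H-h_2,H-h_1-1]$. This interval is contained in $[H-h_{\max}(s),H-h_{\min}(s)-1]$, and the summands are nonnegative. Hence the prefactor is at most $\gamma^{H-h_{\max}}-\gamma^{H-h_{\min}}$. The pair with $h_1=h_2$ contributes $0$. Taking the maximum over pairs gives \eqref{eq:delta_s_bound_residual}.

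\textbf{Step 3 (the $\Vstar$ form).} Because $\Vstar$ is a fixed point of $\mathcal B$,
\[
  \|\mathcal BV_0-V_0\|_\infty\le\|\mathcal BV_0-\mathcal B\Vstar\|_\infty+\|\Vstar-V_0\|_\infty\le(1+\gamma)\|\Vstar-V_0\|_\infty.
\]
Substituting this into \eqref{eq:delta_s_bound_residual} gives \eqref{eq:delta_s_bound_vstar}.

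All three steps are routine. The only point requiring care is the index bookkeeping: depth $h$ corresponds to $H-h$ applications of $\mathcal B$, so the smaller depth $h_1$ carries the larger exponent. One must then check that the sign of $\gamma^{H-h_2}-\gamma^{H-h_1}$ is nonnegative. It is, because $H-h_2<H-h_1$ and $\gamma\in(0,1)$.
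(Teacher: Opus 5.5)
Your proof is correct and follows the paper's argument. The paper writes $\mathcal B^{k_1}V_0=\mathcal B^{k_2}(\mathcal B^{d}V_0)$, factors out $\gamma^{k_2}$ by contraction, and then telescopes $\mathcal B^{d}V_0-V_0$ from $j=0$; this is the same computation as your telescoping over $j\in[k_2,k_1-1]$. It then uses the same maximization at $h_{\min}(s),h_{\max}(s)$ (your interval-containment justification is more explicit than the paper's bare assertion) and the same fixed-point triangle inequality for the $(1+\gamma)$ form.

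One cosmetic slip: $\gamma^{H-h_2}-\gamma^{H-h_1}$ equals $(1-\gamma)\sum_{j=H-h_2}^{H-h_1-1}\gamma^j$, not the sum itself. This does not affect the comparison, since the factor $1-\gamma>0$ is common to both sides.
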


\begin{proof}
The case $h_1=h_2$ is trivial, so assume $h_1<h_2$. Let
\[
    k_1=H-h_1,
    \qquad
    k_2=H-h_2,
    \qquad
    d=k_1-k_2=h_2-h_1.
\]
Then $k_1>k_2$ and
\[
    \Vtil(\cdot,h_1)
    =
    \mathcal B^{k_1}V_0
    =
    \mathcal B^{k_2}(\mathcal B^d V_0),
    \qquad
    \Vtil(\cdot,h_2)
    =
    \mathcal B^{k_2}V_0 .
\]
Since $\mathcal B$ is a $\gamma$-contraction in sup norm,
\begin{align}
    \|\Vtil(\cdot,h_1)-\Vtil(\cdot,h_2)\|_\infty
    &=
    \left\|
        \mathcal B^{k_2}(\mathcal B^d V_0)
        -
        \mathcal B^{k_2}V_0
    \right\|_\infty \\
    &\le
    \gamma^{k_2}
    \left\|
        \mathcal B^d V_0 - V_0
    \right\|_\infty .
\end{align}
Now use the telescoping decomposition
\[
    \mathcal B^d V_0 - V_0
    =
    \sum_{j=0}^{d-1}
    \left(
        \mathcal B^{j+1}V_0
        -
        \mathcal B^j V_0
    \right).
\]
Again by contraction,
\[
    \left\|
        \mathcal B^{j+1}V_0
        -
        \mathcal B^j V_0
    \right\|_\infty
    \le
    \gamma^j
    \|\mathcal B V_0 - V_0\|_\infty .
\]
Therefore,
\begin{align}
    \left\|
        \mathcal B^d V_0 - V_0
    \right\|_\infty
    &\le
    \sum_{j=0}^{d-1}
    \gamma^j
    \|\mathcal B V_0 - V_0\|_\infty \\
    &=
    \frac{1-\gamma^d}{1-\gamma}
    \|\mathcal B V_0 - V_0\|_\infty .
\end{align}
Combining the last two displays gives
\[
    \|\Vtil(\cdot,h_1)-\Vtil(\cdot,h_2)\|_\infty
    \le
    \frac{
        \gamma^{k_2}
        -
        \gamma^{k_2+d}
    }{1-\gamma}
    \|\mathcal B V_0 - V_0\|_\infty .
\]
Since $k_2=H-h_2$ and $k_2+d=H-h_1$, this is exactly
\[
    \|\Vtil(\cdot,h_1)-\Vtil(\cdot,h_2)\|_\infty
    \le
    \frac{
        \gamma^{H-h_2}
        -
        \gamma^{H-h_1}
    }{1-\gamma}
    \|\mathcal B V_0 - V_0\|_\infty .
\]
Taking the maximum over $h_1,h_2\in\mathcal D(s)$ gives
\eqref{eq:delta_s_bound_residual}, because the right-hand side is maximized by
$h_1=h_{\min}(s)$ and $h_2=h_{\max}(s)$.

Finally, since $\Vstar=\mathcal B\Vstar$,
\begin{align}
    \|\mathcal B V_0 - V_0\|_\infty
    &=
    \|(\mathcal B V_0-\mathcal B\Vstar)+(\Vstar-V_0)\|_\infty \\
    &\le
    \|\mathcal B V_0-\mathcal B\Vstar\|_\infty
    +
    \|\Vstar-V_0\|_\infty \\
    &\le
    (1+\gamma)\|\Vstar-V_0\|_\infty .
\end{align}
Substituting this into \eqref{eq:delta_s_bound_residual} proves
\eqref{eq:delta_s_bound_vstar}.
\end{proof}

\begin{definition}[Empirical effective target for GS-Power-UCT-F]
\label{def:empirical_full_merge_target}
Let $T_{s,a,h}(n)$ denote the number of times action $a$ was selected at physical state $s$ when entered at trajectory depth $h$ during the first $n$ simulations, and write $T_{s,a}(n)=\sum_h T_{s,a,h}(n)$. Define the depth-mixture weights $\omega_{n,h}(s,a) = T_{s,a,h}(n)/T_{s,a}(n)$ when $T_{s,a}(n)>0$, and arbitrarily on the simplex otherwise. For $V:\cS\to\RR$, let $B_hV(s') = V_0(s')$ if $h+1=H$ and $V(s')$ otherwise. The empirical full-merge Bellman operator is
\begin{equation}
\label{eq:empirical_full_merge_operator}
    (\mathcal T_{F,n}^H V)(s) = \max_{a\in\cA_s} \sum_{h=0}^{H-1} \omega_{n,h}(s,a) \sum_{s'}P(s'|s,a)\left[R(s,a,s')+\gamma B_hV(s')\right],
\end{equation}
which is a $\gamma$-contraction; we write $\bar V_n$ for its unique fixed point and define the corresponding $Q$-target
\begin{equation}
\label{eq:empirical_full_merge_Q}
    \bar Q_n(s,a) = \sum_{h=0}^{H-1} \omega_{n,h}(s,a) \sum_{s'}P(s'|s,a)\left[R(s,a,s')+\gamma B_h\bar V_n(s')\right].
\end{equation}
\end{definition}

\subsection{Proof of Theorem~\ref{thm:full_merge_rate}}
Before proving Theorem~\ref{thm:full_merge_rate}, we establish the following results.

\begin{lemma}[Robust $Q$-propagation under full merging]
\label{lem:robust_full_merge_Q}
Fix a physical state-action pair $(s,a)$ in the full-merge algorithm and consider the first $t\ge 1$ updates to the shared statistic $\widehat Q(s,a)$, where the $i$-th update arrives when $s$ is entered at depth $h_i$ with sampled successor $S_i\sim P(\cdot\mid s,a)$, reward $R_i$, and recursive child return $W_i$ from $(S_i,h_i+1)$. For any reference depth $h$, let $\widetilde Q(s,h,a) = \sum_{s'}P(s'\mid s,a)[R(s,a,s')+\gamma \widetilde V(s',h+1)]$ and define the cross-depth drift $\delta_{Q,t}(s,a;h) := \max_{1\le i\le t}|\widetilde Q(s,h_i,a)-\widetilde Q(s,h,a)|$. Assume rewards and value estimates are bounded with reward range $\le L_R$ and $|W_i|, |\widetilde V(S_i,h_i+1)|\le L_V$ a.s., and that there exist constants $B\ge 0$, $c_V>0$, $\alpha>0$, $\beta>1$ with $\rho := \alpha/\beta\in(0,1)$ such that for every $i$ and $u>0$,
\begin{equation}
    \mathbb P\left(\left|W_i-\widetilde V(S_i,h_i+1)\right| > B+u \middle| \mathcal F_{i,1}\right) \le c_V J_i^{-\alpha}u^{-\beta}, \label{eq:robust-q-a}
\end{equation}
where $\mathcal F_{i,1}$ is the history after observing $(S_i,R_i)$ but before the recursive child randomness, and $J_i := 1 + \#\{r<i:(S_r,h_r)=(S_i,h_i)\}$ is the local occurrence count of the same successor-depth pair among the first $i$ updates. Then there exist constants $\kappa,\sigma_Q<\infty$, independent of $t$, such that for every $\varepsilon>0$,
\begin{equation}
    \mathbb P\left(\left|\widehat Q_t(s,a)-\widetilde Q(s,h,a)\right| > \delta_{Q,t}(s,a;h) + \gamma B + \gamma\kappa t^{-\rho} + \varepsilon\right) \le 2\exp\left(-\tfrac{t\varepsilon^2}{2\sigma_Q^2}\right). \label{eq:robust-q-1}
\end{equation}
Consequently, for any $\bar\alpha,\bar\beta$ with $\bar\beta>1$, $2\bar\alpha\le\bar\beta$, and $\bar\alpha\le\rho\bar\beta$, there exists $C_Q<\infty$ independent of $t$ such that
\begin{equation}
    \mathbb P\left(\left|\widehat Q_t(s,a)-\widetilde Q(s,h,a)\right| > \delta_{Q,t}(s,a;h) + \gamma B + \varepsilon\right) \le C_Q t^{-\bar\alpha}\varepsilon^{-\bar\beta}; \label{eq:robust-q-2}
\end{equation}
in particular, if $B = B_\Delta \ge \delta_{Q,t}(s,a;h)/(1-\gamma)$, then
\begin{equation}
    \mathbb P\left(\left|\widehat Q_t(s,a)-\widetilde Q(s,h,a)\right| > B_\Delta+\varepsilon\right) \le C_Q t^{-\bar\alpha}\varepsilon^{-\bar\beta}. \label{eq:robust-q-3}
\end{equation}
\end{lemma}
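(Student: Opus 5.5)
We start from the running-average form $\widehat Q_t(s,a)=\frac1t\sum_{i=1}^t Y_i$ with $Y_i=R_i+\gamma W_i$. Each error $Y_i-\widetilde Q(s,h,a)$ is split into four pieces, in this order:
\[
Y_i-\widetilde Q(s,h,a)=\underbrace{\bigl(\widetilde Q(s,h_i,a)-\widetilde Q(s,h,a)\bigr)}_{\text{drift}}+A_i+\gamma e_i,
\]
where
\[
A_i=R_i+\gamma\widetilde V(S_i,h_i+1)-\widetilde Q(s,h_i,a),\qquad e_i=W_i-\widetilde V(S_i,h_i+1).
\]
The child error is further split as $e_i=b_i+\xi_i$, with predictable part $b_i=\EE[e_i\mid\mathcal F_{i,1}]$ and centered part $\xi_i=e_i-b_i$.

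The drift piece averages to at most $\delta_{Q,t}(s,a;h)$ in absolute value, directly from the definition of $\delta_{Q,t}$. The depth $h_i$ is $\mathcal F_{i,0}$-measurable and $(R_i,S_i)$ is a fresh draw by (A-Fresh), so $\EE[A_i\mid\mathcal F_{i,0}]=0$. Likewise $\xi_i$ is a martingale difference with respect to $\mathcal F_{i,2}$. Both are bounded, by $L_R+2\gamma L_V$ and $4L_V$ respectively. Applying Azuma--Hoeffding to the combined martingale $\sum_i(A_i+\gamma\xi_i)$ gives the sub-Gaussian tail $2\exp(-t\varepsilon^2/(2\sigma_Q^2))$, with $\sigma_Q$ built from these ranges.

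The main step is the predictable bias $\frac1t\sum_i b_i$. Integrating the tail in \eqref{eq:robust-q-a} conditionally on $\mathcal F_{i,1}$ gives
\[
|b_i|\le\EE[|e_i|\mid\mathcal F_{i,1}]\le B+\int_0^{2L_V}\min\{1,c_VJ_i^{-\alpha}u^{-\beta}\}\,du\le B+C_\rho J_i^{-\rho}.
\]
Here the integral is split at the threshold $\eta_i=(c_VJ_i^{-\alpha})^{1/\beta}$, and $\beta>1$ makes the tail integral finite. Summing over $i$ and grouping updates by successor--depth pair $g$, the local counts $J_i$ run through $1,\dots,n_g(t)$ within each group. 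This yields
\[
\sum_i J_i^{-\rho}\le\sum_g\frac{n_g(t)^{1-\rho}}{1-\rho}\le\frac{G^{\rho}\,t^{1-\rho}}{1-\rho}
\]
by concavity, where $G\le MH$ is finite by (A-Finite). Hence $\frac1t\sum_i|b_i|\le B+\kappa t^{-\rho}$. Multiplying by $\gamma$ and combining with the drift and martingale bounds proves \eqref{eq:robust-q-1}. I expect this step to be the obstacle: the count $J_i$ is local to the pair $(s,a)$, while the premise is stated conditionally on $\mathcal F_{i,1}$. One must also check that $b_i$, although random, is bounded pathwise and not just in expectation. The pathwise bound above is the approach I would try first.

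For \eqref{eq:robust-q-2}, take $\varepsilon>0$. If $\gamma\kappa t^{-\rho}\le\varepsilon/2$, apply \eqref{eq:robust-q-1} with $\varepsilon/2$. The exponential tail is then dominated by $Kt^{-\bar\beta/2}\varepsilon^{-\bar\beta}\le Kt^{-\bar\alpha}\varepsilon^{-\bar\beta}$, using $2\bar\alpha\le\bar\beta$, exactly as in the proof of Lemma~\ref{lem:graph_Q_concentration}. Otherwise $\varepsilon<2\gamma\kappa t^{-\rho}$, and the probability is at most
\[
1\le(2\gamma\kappa)^{\bar\beta}t^{-\rho\bar\beta}\varepsilon^{-\bar\beta}\le(2\gamma\kappa)^{\bar\beta}t^{-\bar\alpha}\varepsilon^{-\bar\beta},
\]
using $\bar\alpha\le\rho\bar\beta$. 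Finally, if $B=B_\Delta\ge\delta_{Q,t}/(1-\gamma)$, then $\delta_{Q,t}+\gamma B_\Delta\le(1-\gamma)B_\Delta+\gamma B_\Delta=B_\Delta$. The event in \eqref{eq:robust-q-3} is therefore contained in that of \eqref{eq:robust-q-2}, which yields \eqref{eq:robust-q-3}.
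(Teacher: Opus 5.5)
Your proposal is correct to the same standard as the paper and follows its proof essentially step for step: the same split of $Y_i-\widetilde Q(s,h,a)$ into drift, $A_i$ and $\gamma e_i$ with $e_i=b_i+\xi_i$, the pathwise bound $|b_i|\le B+C\,J_i^{-\rho}$ from integrating the conditional tail (splitting at $(c_VJ_i^{-\alpha})^{1/\beta}$ rather than the paper's $J_i^{-\rho}$ only changes the constant), grouping by successor--depth pair plus concavity, Azuma--Hoeffding on the interleaved filtration, and the same two-case conversion and $B_\Delta$ inclusion. One point that neither you nor the paper addresses: Azuma needs the chain $\mathcal F_{i,2}\subseteq\mathcal F_{i+1,0}$, which fails in the full-merge graph when a cycle places another update of $(s,a)$ inside the child call of update $i$ (e.g.\ a self-loop, where the not-yet-updated statistics make the same action be reselected); applying the martingale bound to per-simulation blocks of at most $H$ nested updates repairs this and gives the same form of bound, with a $\sigma_Q$ that depends on $H$ but is still independent of $t$.
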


\begin{proof}
Let
\[
    q_i:=\widetilde Q(s,h_i,a),
    \qquad
    q_h:=\widetilde Q(s,h,a),
    \qquad
    v_i:=\widetilde V(S_i,h_i+1).
\]
The \(i\)-th backup satisfies
\begin{equation}
    Y_i-q_h
    =
    \underbrace{
        \left[
            R_i+\gamma v_i-q_i
        \right]
    }_{A_i}
    +
    \gamma
    \underbrace{
        \left[
            W_i-v_i
        \right]
    }_{e_i}
    +
    \underbrace{
        \left[
            q_i-q_h
        \right]
    }_{d_i}.
    \label{eq:robust-q-4}
\end{equation}
Averaging over \(i=1,\ldots,t\), we obtain
\begin{equation}
    \widehat Q_t(s,a)-q_h
    =
    \frac1t\sum_{i=1}^t A_i
    +
    \frac{\gamma}{t}\sum_{i=1}^t e_i
    +
    \frac1t\sum_{i=1}^t d_i.
    \label{eq:robust-q-5}
\end{equation}

We bound the three terms separately.

First, by definition of \(\delta_{Q,t}(s,a;h)\),
\[
    |d_i|
    =
    \left|
        \widetilde Q(s,h_i,a)-\widetilde Q(s,h,a)
    \right|
    \le
    \delta_{Q,t}(s,a;h),
\]
and therefore
\begin{equation}
    \left|
        \frac1t\sum_{i=1}^t d_i
    \right|
    \le
    \delta_{Q,t}(s,a;h).
    \label{eq:robust-q-6}
\end{equation}

Second, consider the transition-reward term
\[
    A_i
    =
    R_i+\gamma \widetilde V(S_i,h_i+1)
    -
    \widetilde Q(s,h_i,a).
\]
Let \(\mathcal F_{i,0}\) be the history immediately before drawing
\((S_i,R_i)\) on the \(i\)-th update to \((s,a)\). Conditional on
\(\mathcal F_{i,0}\), the depth \(h_i\) is fixed, and
\[
    \mathbb E\left[
        R_i+\gamma \widetilde V(S_i,h_i+1)
        \middle|
        \mathcal F_{i,0}
    \right]
    =
    \widetilde Q(s,h_i,a).
\]
Hence
\begin{equation}
    \mathbb E[A_i\mid \mathcal F_{i,0}]=0.
    \label{eq:robust-q-7}
\end{equation}
Thus \(A_i\) is a martingale-difference term. Since the reward range is at most
\(L_R\) and \(|\widetilde V|\le L_V\), there exists a finite constant
\[
    L_A:=L_R+2\gamma L_V
\]
such that
\[
    |A_i|\le L_A
\]
almost surely.

Third, decompose the child-estimation error
\[
    e_i=W_i-v_i
\]
into its predictable and centered parts. Define
\[
    b_i
    :=
    \mathbb E[e_i\mid \mathcal F_{i,1}],
    \qquad
    \xi_i
    :=
    e_i-b_i.
\]
Then
\begin{equation}
    \mathbb E[\xi_i\mid \mathcal F_{i,1}]=0.
    \label{eq:robust-q-8}
\end{equation}
Moreover, because \(|W_i|\le L_V\) and \(|v_i|\le L_V\),
\[
    |e_i|\le 2L_V.
\]
Thus
\[
    |\xi_i|
    =
    |e_i-\mathbb E[e_i\mid\mathcal F_{i,1}]|
    \le
    4L_V.
\]
Set
\[
    L_\xi:=4L_V.
\]

We now bound the predictable child bias \(b_i\). By Jensen's inequality,
\[
    |b_i|
    \le
    \mathbb E[|e_i|\mid \mathcal F_{i,1}].
\]
Also,
\[
    |e_i|
    \le
    B+\bigl(|e_i|-B\bigr)_+.
\]
Therefore,
\begin{equation}
    |b_i|
    \le
    B+
    \mathbb E\left[
        \bigl(|e_i|-B\bigr)_+
        \middle|
        \mathcal F_{i,1}
    \right].
    \label{eq:robust-q-9}
\end{equation}
Using the tail-integral formula,
\begin{equation}
    \mathbb E\left[
        \bigl(|e_i|-B\bigr)_+
        \middle|
        \mathcal F_{i,1}
    \right]
    =
    \int_0^\infty
    \mathbb P\left(
        |e_i|>B+u
        \middle|
        \mathcal F_{i,1}
    \right)\,du.
    \label{eq:robust-q-10}
\end{equation}
Let
\[
    \eta_i:=J_i^{-\alpha/\beta}=J_i^{-\rho}.
\]
Splitting the integral at \(\eta_i\) and using condition~\eqref{eq:robust-q-a}, we get
\[
\begin{aligned}
    \int_0^\infty
    \mathbb P\left(
        |e_i|>B+u
        \middle|
        \mathcal F_{i,1}
    \right)\,du
    &\le
    \eta_i+
    \int_{\eta_i}^\infty
    c_VJ_i^{-\alpha}u^{-\beta}\,du  \\
    &=
    \eta_i+
    \frac{c_V}{\beta-1}
    J_i^{-\alpha}
    \eta_i^{1-\beta}.
\end{aligned}
\]
Since \(\eta_i=J_i^{-\alpha/\beta}\),
\[
    J_i^{-\alpha}\eta_i^{1-\beta}
    =
    J_i^{-\alpha}
    J_i^{-(\alpha/\beta)(1-\beta)}
    =
    J_i^{-\alpha/\beta}
    =
    J_i^{-\rho}.
\]
Hence
\[
    \mathbb E\left[
        \bigl(|e_i|-B\bigr)_+
        \middle|
        \mathcal F_{i,1}
    \right]
    \le
    \left(
        1+\frac{c_V}{\beta-1}
    \right)
    J_i^{-\rho}.
\]
Define
\[
    C_{\mathrm{ch}}
    :=
    1+\frac{c_V}{\beta-1}.
\]
Then
\begin{equation}
    |b_i|
    \le
    B+C_{\mathrm{ch}}J_i^{-\rho}.
    \label{eq:robust-q-11}
\end{equation}

It remains to average the local occurrence factors \(J_i^{-\rho}\). The
possible groups are successor-depth pairs
\[
    g=(s',\ell),
\]
where \(s'\) is a possible successor of \((s,a)\), and \(\ell\in\{0,\ldots,H-1\}\)
is a possible parent depth. Let \(G\) be the number of such groups. Since the
successor support of \((s,a)\) is finite and the horizon is finite,
\[
    G<\infty.
\]
Let \(n_g(t)\) be the number of indices \(i\le t\) belonging to group \(g\).
For a fixed group \(g\), the corresponding values of \(J_i\) are exactly
\[
    1,2,\ldots,n_g(t).
\]
Therefore,
\begin{equation}
    \sum_{i=1}^t J_i^{-\rho}
    =
    \sum_g\sum_{j=1}^{n_g(t)}j^{-\rho}.
    \label{eq:robust-q-12}
\end{equation}
Since \(\rho\in(0,1)\),
\[
    \sum_{j=1}^n j^{-\rho}
    \le
    1+\int_1^n x^{-\rho}\,dx
    \le
    \left(
        1+\frac1{1-\rho}
    \right)n^{1-\rho}.
\]
Let
\[
    C_\rho:=1+\frac1{1-\rho}.
\]
Then
\begin{equation}
    \sum_{i=1}^t J_i^{-\rho}
    \le
    C_\rho\sum_g n_g(t)^{1-\rho}.
    \label{eq:robust-q-13}
\end{equation}
By concavity of \(x\mapsto x^{1-\rho}\),
\[
    \sum_g n_g(t)^{1-\rho}
    \le
    G^\rho
    \left(
        \sum_g n_g(t)
    \right)^{1-\rho}.
\]
Since \(\sum_g n_g(t)=t\),
\begin{equation}
    \sum_{i=1}^t J_i^{-\rho}
    \le
    C_\rho G^\rho t^{1-\rho}.
    \label{eq:robust-q-14}
\end{equation}
Dividing by \(t\),
\begin{equation}
    \frac1t\sum_{i=1}^t J_i^{-\rho}
    \le
    C_\rho G^\rho t^{-\rho}.
    \label{eq:robust-q-15}
\end{equation}
Combining (11) and (15), we obtain
\[
    \left|
        \frac1t\sum_{i=1}^t b_i
    \right|
    \le
    \frac1t\sum_{i=1}^t |b_i|
    \le
    B+C_{\mathrm{ch}}C_\rho G^\rho t^{-\rho}.
\]
Define
\[
    \kappa:=C_{\mathrm{ch}}C_\rho G^\rho.
\]
Then
\begin{equation}
    \left|
        \frac{\gamma}{t}\sum_{i=1}^t b_i
    \right|
    \le
    \gamma B+\gamma\kappa t^{-\rho}.
    \label{eq:robust-q-16}
\end{equation}

Using \(e_i=b_i+\xi_i\), equation (5) becomes
\begin{equation}
    \widehat Q_t(s,a)-q_h
    =
    \frac1t\sum_{i=1}^t
    \left[
        A_i+\gamma\xi_i
    \right]
    +
    \frac{\gamma}{t}\sum_{i=1}^t b_i
    +
    \frac1t\sum_{i=1}^t d_i.
    \label{eq:robust-q-17}
\end{equation}
The last two terms are controlled by (6) and (16). It remains to control the
martingale term
\[
    M_t
    :=
    \sum_{i=1}^t
    \left[
        A_i+\gamma\xi_i
    \right].
\]

Consider the refined filtration
\[
    \mathcal F_{i,0}
    \subset
    \mathcal F_{i,1}
    \subset
    \mathcal F_{i,2},
\]
where \(\mathcal F_{i,0}\) is the history before drawing \((S_i,R_i)\),
\(\mathcal F_{i,1}\) is the history after drawing \((S_i,R_i)\), and
\(\mathcal F_{i,2}\) is the history after completing the recursive child call.
The term \(A_i\) is a martingale difference from \(\mathcal F_{i,0}\) to
\(\mathcal F_{i,1}\), and \(\gamma\xi_i\) is a martingale difference from
\(\mathcal F_{i,1}\) to \(\mathcal F_{i,2}\). Their absolute bounds are
\(L_A\) and \(\gamma L_\xi\), respectively.

Therefore, by Azuma-Hoeffding applied to this refined martingale,
\[
    \mathbb P\left(
        |M_t|>t\varepsilon
    \right)
    \le
    2\exp\left(
        -
        \frac{t^2\varepsilon^2}
        {2t(L_A^2+\gamma^2L_\xi^2)}
    \right).
\]
Define
\[
    \sigma_Q^2:=L_A^2+\gamma^2L_\xi^2.
\]
Then
\begin{equation}
    \mathbb P\left(
        \left|
            \frac1tM_t
        \right|
        >
        \varepsilon
    \right)
    \le
    2\exp\left(
        -\frac{t\varepsilon^2}{2\sigma_Q^2}
    \right).
    \label{eq:robust-q-18}
\end{equation}

Combining \eqref{eq:robust-q-6}, \eqref{eq:robust-q-16}, \eqref{eq:robust-q-17}, and \eqref{eq:robust-q-18}, we obtain
\[
    \mathbb P\left(
        \left|
            \widehat Q_t(s,a)-q_h
        \right|
        >
        \delta_{Q,t}(s,a;h)
        +
        \gamma B
        +
        \gamma\kappa t^{-\rho}
        +
        \varepsilon
    \right)
    \le
    2\exp\left(
        -\frac{t\varepsilon^2}{2\sigma_Q^2}
    \right),
\]
which proves \eqref{eq:robust-q-1}.

We now convert \eqref{eq:robust-q-1} into the polynomial bound \eqref{eq:robust-q-2}. Fix
\(\bar\alpha,\bar\beta\) satisfying
\[
    \bar\beta>1,
    \qquad
    2\bar\alpha\le \bar\beta,
    \qquad
    \bar\alpha\le \rho\bar\beta.
\]
There are two cases.

First suppose
\[
    0<\varepsilon\le 2\gamma\kappa t^{-\rho}.
\]
Then the probability in \eqref{eq:robust-q-2} is at most \(1\). Moreover,
\[
    t^{-\bar\alpha}\varepsilon^{-\bar\beta}
    \ge
    t^{-\bar\alpha}
    \left(
        2\gamma\kappa t^{-\rho}
    \right)^{-\bar\beta}
    =
    (2\gamma\kappa)^{-\bar\beta}
    t^{-\bar\alpha+\rho\bar\beta}.
\]
Since \(\bar\alpha\le \rho\bar\beta\), we have
\(t^{-\bar\alpha+\rho\bar\beta}\ge 1\). Hence, choosing
\[
    C_Q\ge (2\gamma\kappa)^{\bar\beta}
\]
makes
\[
    1\le C_Qt^{-\bar\alpha}\varepsilon^{-\bar\beta}.
\]

Now suppose
\[
    \varepsilon>2\gamma\kappa t^{-\rho}.
\]
Then
\[
    \gamma\kappa t^{-\rho}<\frac{\varepsilon}{2}.
\]
Therefore,
\[
\begin{aligned}
&\left\{
    \left|
        \widehat Q_t(s,a)-q_h
    \right|
    >
    \delta_{Q,t}(s,a;h)+\gamma B+\varepsilon
\right\} \\
&\qquad\subseteq
\left\{
    \left|
        \widehat Q_t(s,a)-q_h
    \right|
    >
    \delta_{Q,t}(s,a;h)
    +\gamma B
    +\gamma\kappa t^{-\rho}
    +\frac{\varepsilon}{2}
\right\}.
\end{aligned}
\]
Applying (1) with \(\varepsilon/2\) gives
\begin{equation}
    \mathbb P\left(
        \left|
            \widehat Q_t(s,a)-q_h
        \right|
        >
        \delta_{Q,t}(s,a;h)+\gamma B+\varepsilon
    \right)
    \le
    2\exp\left(
        -\frac{t\varepsilon^2}{8\sigma_Q^2}
    \right).
    \label{eq:robust-q-19}
\end{equation}
For every \(c>0\) and every \(\bar\beta>0\), there exists
\(K(c,\bar\beta)<\infty\) such that
\[
    e^{-cx}\le K(c,\bar\beta)x^{-\bar\beta/2}
    \qquad
    \forall x>0.
\]
Using \(x=t\varepsilon^2\) and \(c=1/(8\sigma_Q^2)\), (19) implies
\[
    2\exp\left(
        -\frac{t\varepsilon^2}{8\sigma_Q^2}
    \right)
    \le
    2K
    t^{-\bar\beta/2}
    \varepsilon^{-\bar\beta}.
\]
Since \(2\bar\alpha\le \bar\beta\),
\[
    t^{-\bar\beta/2}
    \le
    t^{-\bar\alpha}.
\]
Thus
\[
    2\exp\left(
        -\frac{t\varepsilon^2}{8\sigma_Q^2}
    \right)
    \le
    2K
    t^{-\bar\alpha}
    \varepsilon^{-\bar\beta}.
\]
Increasing \(C_Q\) if necessary proves (2).

Finally, suppose \(B=B_\Delta\) and
\[
    B_\Delta
    \ge
    \frac{\delta_{Q,t}(s,a;h)}{1-\gamma}.
\]
Then
\[
    \delta_{Q,t}(s,a;h)
    \le
    (1-\gamma)B_\Delta.
\]
Therefore,
\[
    \delta_{Q,t}(s,a;h)+\gamma B_\Delta
    \le
    (1-\gamma)B_\Delta+\gamma B_\Delta
    =
    B_\Delta.
\]
Hence
\[
\begin{aligned}
&\left\{
    \left|
        \widehat Q_t(s,a)-\widetilde Q(s,h,a)
    \right|
    >
    B_\Delta+\varepsilon
\right\} \\
&\qquad\subseteq
\left\{
    \left|
        \widehat Q_t(s,a)-\widetilde Q(s,h,a)
    \right|
    >
    \delta_{Q,t}(s,a;h)+\gamma B_\Delta+\varepsilon
\right\}.
\end{aligned}
\]
Applying \eqref{eq:robust-q-2} yields
\[
    \mathbb P\left(
        \left|
            \widehat Q_t(s,a)-\widetilde Q(s,h,a)
        \right|
        >
        B_\Delta+\varepsilon
    \right)
    \le
    C_Qt^{-\bar\alpha}\varepsilon^{-\bar\beta}.
\]
This proves \eqref{eq:robust-q-3}, completing the proof.
\end{proof}

\begin{corollary}[Control by the cross-depth gap]
\label{cor:delta_cross_controls_Q_drift}
Suppose the state-value cross-depth gap satisfies
\[
    \Delta_{\textsc{cross}}
    \ge
    \max_{s'}\max_{h,k}
    \left|
        \widetilde V(s',h+1)-\widetilde V(s',k+1)
    \right|
\]
over the successor states and depths relevant to the updates of \((s,a)\). Then
\[
    \delta_{Q,t}(s,a;h)
    \le
    \gamma\Delta_{\textsc{cross}}.
\]
Consequently, one may choose
\[
    B_\Delta
    =
    \frac{\gamma}{1-\gamma}\Delta_{\textsc{cross}},
\]
and Lemma~\ref{lem:robust_full_merge_Q} gives
\[
    \mathbb P\left(
        \left|
            \widehat Q_t(s,a)-\widetilde Q(s,h,a)
        \right|
        >
        \frac{\gamma}{1-\gamma}\Delta_{\textsc{cross}}
        +
        \varepsilon
    \right)
    \le
    C_Qt^{-\bar\alpha}\varepsilon^{-\bar\beta}.
\]
\end{corollary}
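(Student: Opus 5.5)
The first claim reduces to a pointwise comparison of $Q$-targets. For each update $i$ to $(s,a)$, write
\[
    \widetilde Q(s,h_i,a)-\widetilde Q(s,h,a)
    =
    \gamma\sum_{s'}P(s'\mid s,a)\bigl[\widetilde V(s',h_i+1)-\widetilde V(s',h+1)\bigr].
\]
The reward parts $R(s,a,s')$ cancel exactly, because the conditional mean reward does not depend on the depth. Only the successor values differ, and they differ only through the depth index.

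Next, bound each bracket by the assumed uniform gap. The hypothesis gives $|\widetilde V(s',h_i+1)-\widetilde V(s',h+1)|\le\Delta_{\textsc{cross}}$ for every relevant successor $s'$ and pair of depths. Since $P(\cdot\mid s,a)$ is a probability vector, the convex combination is also bounded by $\Delta_{\textsc{cross}}$. Hence
\[
    |\widetilde Q(s,h_i,a)-\widetilde Q(s,h,a)|\le\gamma\Delta_{\textsc{cross}}.
\]
Taking the maximum over $1\le i\le t$ in the definition of $\delta_{Q,t}(s,a;h)$ gives $\delta_{Q,t}(s,a;h)\le\gamma\Delta_{\textsc{cross}}$.

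One boundary case needs attention. When $h_i+1=H$ or $h+1=H$, the successor value is $V_0(s')=\widetilde V(s',H)$. This still fits the form $\widetilde V(s',\cdot)$, so the hypothesis covers it provided depth $H$ is among the "relevant" depths. I will state this explicitly when invoking the hypothesis.

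For the second claim, set $B_\Delta=\gamma\Delta_{\textsc{cross}}/(1-\gamma)$. By the first part, $B_\Delta\ge\delta_{Q,t}(s,a;h)/(1-\gamma)$. This is exactly the condition needed for \eqref{eq:robust-q-3} of Lemma~\ref{lem:robust_full_merge_Q}, provided the child tail condition \eqref{eq:robust-q-a} holds with $B=B_\Delta$.

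I read the corollary as carrying over the remaining hypotheses of Lemma~\ref{lem:robust_full_merge_Q} (boundedness, \eqref{eq:robust-q-a}, and the exponent conditions). Under that reading, substituting $B_\Delta$ into \eqref{eq:robust-q-3} yields the displayed tail bound directly.

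The main obstacle is not the algebra. It is making sure that "relevant depths" in the hypothesis covers every pair $(h_i+1,h+1)$ that actually appears, including the terminal depth $H$, and that \eqref{eq:robust-q-a} is consistent with this choice of $B$. I would address both by stating these as explicit standing conditions inherited from the lemma rather than deriving them.
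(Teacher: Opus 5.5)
Your proposal is correct and follows the paper's proof essentially verbatim. In both, the reward terms cancel, each successor-value difference is bounded by $\Delta_{\textsc{cross}}$ and averaged under $P(\cdot\mid s,a)$, taking the maximum over updates gives $\delta_{Q,t}(s,a;h)\le\gamma\Delta_{\textsc{cross}}$, and $B_\Delta=\gamma\Delta_{\textsc{cross}}/(1-\gamma)$ then meets the condition of \eqref{eq:robust-q-3}. Your two caveats are assumptions the paper leaves implicit, so flagging them is appropriate rather than a deviation: the terminal depth $H$ must be among the relevant depths, and the child tail condition \eqref{eq:robust-q-a} must hold with $B=B_\Delta$.
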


\begin{proof}
For any two depths \(h\) and \(k\),
\[
\begin{aligned}
    \left|
        \widetilde Q(s,h,a)-\widetilde Q(s,k,a)
    \right|
    &=
    \gamma
    \left|
        \sum_{s'}P(s'\mid s,a)
        \left[
            \widetilde V(s',h+1)-\widetilde V(s',k+1)
        \right]
    \right| \\
    &\le
    \gamma
    \max_{s'}
    \left|
        \widetilde V(s',h+1)-\widetilde V(s',k+1)
    \right| \\
    &\le
    \gamma\Delta_{\textsc{cross}}.
\end{aligned}
\]
Taking the maximum over the depths appearing in the first \(t\) updates gives
\(\delta_{Q,t}(s,a;h)\le \gamma\Delta_{\textsc{cross}}\). The claimed choice of
\(B_\Delta\) then satisfies
\[
    B_\Delta
    =
    \frac{\gamma}{1-\gamma}\Delta_{\textsc{cross}}
    \ge
    \frac{\delta_{Q,t}(s,a;h)}{1-\gamma},
\]
so the result follows from Lemma~\ref{lem:robust_full_merge_Q}.
\end{proof}

\begin{lemma}[Robust Power-UCT value step]
\label{lem:robust_power_uct_value}
Fix a physical state $s$, reference depth $h$, and finite action set $\cA_s$, writing $q_a := \Qtil(s,h,a)$ and $v_\star := \Vtil(s,h) = \max_{a\in\cA_s} q_a$. After $t$ visits to the full-merge node $s$, let $T_{s,a}(t)$ count selections of $a$ and $\Qhat_m(s,a)$ be the shared $Q$-estimate after $m$ updates to $(s,a)$, so the value estimate is the power mean
\[
    \Vhat_t(s) = \left(\sum_{a\in\cA_s}\frac{T_{s,a}(t)}{t}\bigl(\Qhat_{T_{s,a}(t)}(s,a)\bigr)^p\right)^{1/p}, \qquad p\in[1,\infty).
\]
Assume all $Q$-estimates and targets are nonnegative and bounded by $L$, and (after the usual forced initialization) the local action-selection rule uses a single Power-UCT triple $(b,\alpha_+,\beta_+)$:
\[
    a_j \in \argmax_{a\in\cA_s}\left\{\Qhat_{T_{s,a}(j-1)}(s,a) + C\,\tfrac{T_s(j-1)^{b/\beta_+}}{T_{s,a}(j-1)^{\alpha_+/\beta_+}}\right\}.
\]
Suppose there exist effective targets $\{\bar q_a\}_{a\in\cA_s}$ and a bias radius $B\ge 0$ with $\max_{a\in\cA_s}|\bar q_a - q_a|\le B$ \textbf{(H-V)}, and that for every $a$ and every $m\ge 1, \varepsilon>0$,
\begin{equation}
    \PP(|\Qhat_m(s,a)-\bar q_a|>\varepsilon) \le c_Q m^{-\alpha_+}\varepsilon^{-\beta_+}. \label{eq:robust-value-a2}
\end{equation}
Assume further the Power-UCT parameter conditions $2<b<\alpha_+$ and either ($1\le p\le 2$, $\alpha_+\le\beta_+/2$) or ($p>2$, $\alpha_+\le\beta_+/2$, $0<\alpha_+-\beta_+/p<1$), and let $\alpha_V := (b-1)(1-b/\alpha_+)$ and $\beta_V := b-1$. Then there exists $C_V<\infty$, depending only on $K=|\cA_s|, p, C, L, c_Q, b, \alpha_+, \beta_+$ but not on $t, \varepsilon, B$, such that
\begin{equation}
    \PP(|\Vhat_t(s)-\Vtil(s,h)| > B+\varepsilon) \le C_V t^{-\alpha_V}\varepsilon^{-\beta_V}, \qquad \forall t\ge 1,\ \varepsilon>0. \label{eq:robust-value-1}
\end{equation}
\end{lemma}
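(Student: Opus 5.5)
Since the statement is a bias-shifted version of Theorem~\ref{thm:node_concentration}, I will reduce it to Theorem~1 of~\citet{dam2024power} by changing the reference targets and then absorbing the shift $B$. The natural comparison point is the effective maximum $\bar v := \max_{a\in\cA_s}\bar q_a$. By (H-V), $|\bar v - v_\star| \le \max_a|\bar q_a-q_a|\le B$, because $\max$ is $1$-Lipschitz in sup norm. The triangle inequality then gives
\[
\{|\Vhat_t(s)-\Vtil(s,h)|>B+\varepsilon\}\subseteq\{|\Vhat_t(s)-\bar v|>\varepsilon\}.
\]
So it suffices to show $\PP(|\Vhat_t(s)-\bar v|>\varepsilon)\le C_V t^{-\alpha_V}\varepsilon^{-\beta_V}$ with $C_V$ independent of $B$.

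This reduced claim is exactly the single-node Power-UCT value step in a bandit whose arms have means $\bar q_a\in[0,L]$ (or within $[0,L+B]$; I will clip or use that $\bar q_a$ are limits of $[0,L]$-valued estimates, hence in $[0,L]$). The arm estimates concentrate at rate $(\alpha_+,\beta_+)$ by \eqref{eq:robust-value-a2}, and selection uses the polynomial bonus with parameters $(b,\alpha_+,\beta_+)$. The argument of Theorem~1 in~\citet{dam2024power} never uses that the means are true Bellman targets. It proceeds in three steps:
\begin{enumerate}
\item[(a)] Show that suboptimal arms (gap $\Delta_a=\bar v-\bar q_a>0$) are pulled $O(t^{b/\alpha_+})$ times except on an event of probability $O(t^{-(b-1)(1-b/\alpha_+)})$. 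This uses the concentration \eqref{eq:robust-value-a2} together with the bonus growth.
\item[(b)] Deduce that the empirical frequency $T_{s,a}(t)/t$ of the optimal arms tends to one at the corresponding polynomial rate.
\item[(c)] Control the power mean by combining (a)--(b) with \eqref{eq:robust-value-a2}, using the case conditions on $p$ through the convexity/Lipschitz bounds of $x\mapsto x^p$ and $x\mapsto x^{1/p}$ on $[0,L]$.
\end{enumerate}
I will invoke that theorem (as in the proof of Theorem~\ref{thm:node_concentration}) with the arm means replaced by $\bar q_a$, yielding $\bar v$ as the limit and rate $(\alpha_V,\beta_V)$.

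The main obstacle is uniformity of the constant. In~\citet{dam2024power}, the constant depends on the gaps $\Delta_a$ between $\bar v$ and the suboptimal $\bar q_a$, and these gaps can depend on $B$ via the perturbed targets. I would handle this by checking that the proof's gap dependence can be removed. Arms whose gap is below $\varepsilon/2$ contribute at most $\varepsilon/2$ to the power-mean deviation regardless of how often they are pulled, since the power mean of values within $\varepsilon/2$ of $\bar v$ is within $\varepsilon/2$ of $\bar v$ once their estimates are accurate. Only arms with $\Delta_a\ge\varepsilon/2$ need the pull-count argument, and for them the gap dependence becomes a polynomial factor in $\varepsilon^{-1}$, absorbed into $\varepsilon^{-\beta_V}$ by the case split on $\varepsilon\le L$ versus $\varepsilon>L$ (trivial, since everything is bounded).

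If this gap-free reworking fails to fit the exponents, the fallback is to allow $C_V$ to depend on the gaps of the $\bar q_a$. In that case I would note that the dependence is on the effective targets, not on $B$ directly.
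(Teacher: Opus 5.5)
Your main line is exactly the paper's proof. You set $\bar v=\max_{a\in\cA_s}\bar q_a$, use that $\max$ is $1$-Lipschitz to get $|\bar v-v_\star|\le B$ from (H-V), and deduce $\{|\Vhat_t(s)-\Vtil(s,h)|>B+\varepsilon\}\subseteq\{|\Vhat_t(s)-\bar v|>\varepsilon\}$. You then apply Theorem~1 of \citet{dam2024power} to the bandit whose arm means are the $\bar q_a$. The paper does no gap-free reworking at all. It simply asserts that the cited theorem produces a $C_V$ depending only on $K,p,C,L,c_Q,b,\alpha_+,\beta_+$.

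Your worry about gap dependence is justified, but the repair you sketch cannot work. The failing step is the absorption: for arms with $\Delta_a\ge\varepsilon/2$, the pull-count bound leaves an extra negative power of $\varepsilon$, and splitting at $\varepsilon=L$ does nothing as $\varepsilon\to0$. In fact no gap-uniform constant exists for the exponents $(\alpha_V,\beta_V)$. Here is a counterexample.
\begin{itemize}
\item Take $K=2$, $p=1$, and exact estimates $\Qhat_m(s,1)\equiv L$, $\Qhat_m(s,2)\equiv L-\Delta$, so the $Q$-concentration hypothesis holds trivially. Set $q=\bar q$ and $B=0$, and write $\rho=\alpha_+/\beta_+$, $\kappa=(\alpha_+-b)/\beta_+>0$.
\item Suppose $T_{s,2}(t)<t/4$ with $t\ge4$. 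Let $j\le t$ be the last step at which action $1$ was selected. Then $T_{s,1}(j-1)\ge t/2$, $j-1\ge t/2$ and $T_{s,2}(j-1)<t/4$, and the selection rule forces
\[
\Delta\;\ge\;C\,(j-1)^{b/\beta_+}\bigl(T_{s,2}(j-1)^{-\rho}-T_{s,1}(j-1)^{-\rho}\bigr)\;\ge\;c'\,t^{-\kappa},
\qquad c'=C\,2^{-b/\beta_+}\bigl(4^{\rho}-2^{\rho}\bigr).
\]
\item Now choose $\Delta=\Delta_t=c\,t^{-\kappa}$ with $c<\min\{c',L\}$. The display is then contradicted, so $T_{s,2}(t)\ge t/4$, and hence $\Vhat_t(s)\le L-\Delta_t/4$ surely.
\item For $\varepsilon=\Delta_t/8$, the probability $\PP(|\Vhat_t(s)-\Vtil(s,h)|>B+\varepsilon)$ therefore equals $1$. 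Meanwhile $C_V t^{-\alpha_V}\varepsilon^{-\beta_V}$ is a constant times $t^{-(b-1)(1-b/\alpha_+)(1-\rho)}$, which tends to $0$.
\end{itemize}

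So your fallback, a $C_V$ depending on the gaps of $(\bar q_a)$, is the provable form of the lemma. It is also all that the paper's citation can deliver. The claimed uniformity over all effective targets in the $B$-ball holds under neither argument. That uniformity is what the proof of Theorem~\ref{thm:full_merge_rate} relies on when it takes $c_h$ independent of $n$.
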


\begin{proof}
Define the effective value target induced by the effective action targets
\((\bar q_a)_{a\in\cA_s}\) as
\[
    \bar v
    :=
    \max_{a\in\cA_s}\bar q_a .
\]

The standard Power-UCT concentration theorem says the following. If, at a node,
each action-level estimator \(\Qhat_m(s,a)\) satisfies
\[
    \PP\left(
        |\Qhat_m(s,a)-\bar q_a|>\varepsilon
    \right)
    \le
    c_Q m^{-\alpha_+}\varepsilon^{-\beta_+},
\]
and the node uses the Power-UCT selection rule with parameters satisfying the
conditions above, then the power-mean value estimator satisfies
\begin{equation}
    \PP\left(
        |\Vhat_t(s)-\bar v|>\varepsilon
    \right)
    \le
    C_V t^{-\alpha_V}\varepsilon^{-\beta_V},
    \qquad
    \forall t\ge 1,\ \forall \varepsilon>0,
    \label{eq:robust-value-2}
\end{equation}
where
\[
    \alpha_V
    =
    (b-1)\left(1-\frac{b}{\alpha_+}\right),
    \qquad
    \beta_V
    =
    b-1.
\]
This theorem is purely local: it only requires action-level \(Q\)-concentration
and the Power-UCT action-selection rule at the node. It does not depend on
whether the \(Q\)-estimates were produced by a tree, a depth-augmented graph, or
a full-merge graph.

It remains to transfer concentration from the effective target \(\bar v\) to the
depth-specific truncated target \(v_\star=\Vtil(s,h)\). By definition,
\[
    v_\star
    =
    \max_{a\in\cA_s} q_a,
    \qquad
    \bar v
    =
    \max_{a\in\cA_s} \bar q_a .
\]
The max map is \(1\)-Lipschitz in the sup norm. Therefore,
\begin{equation}
\begin{aligned}
    |\bar v-v_\star|
    &=
    \left|
        \max_{a\in\cA_s}\bar q_a
        -
        \max_{a\in\cA_s}q_a
    \right|  \\
    &\le
    \max_{a\in\cA_s}
    |\bar q_a-q_a|  \\
    &\le B,
\end{aligned}
    \label{eq:robust-value-3}
\end{equation}
where the last inequality is hypothesis (H-V).

Now suppose
\[
    |\Vhat_t(s)-v_\star|>B+\varepsilon .
\]
Using the triangle inequality and \eqref{eq:robust-value-3},
\[
    |\Vhat_t(s)-\bar v|
    \ge
    |\Vhat_t(s)-v_\star|
    -
    |\bar v-v_\star|
    >
    B+\varepsilon-B
    =
    \varepsilon .
\]
Hence the event inclusion
\[
    \left\{
        |\Vhat_t(s)-v_\star|>B+\varepsilon
    \right\}
    \subseteq
    \left\{
        |\Vhat_t(s)-\bar v|>\varepsilon
    \right\}
\]
holds. Applying the Stochastic-Power-UCT bound \eqref{eq:robust-value-2} gives
\[
    \PP\left(
        |\Vhat_t(s)-v_\star|>B+\varepsilon
    \right)
    \le
    \PP\left(
        |\Vhat_t(s)-\bar v|>\varepsilon
    \right)
    \le
    C_V t^{-\alpha_V}\varepsilon^{-\beta_V}.
\]
Since \(v_\star=\Vtil(s,h)\), this is exactly \eqref{eq:robust-value-1}.
\end{proof}

\begin{corollary}[Value-step bias controlled by \(Q\)-level cross-depth drift]
\label{cor:robust_value_cross_depth}
Under the assumptions of Lemma~\ref{lem:robust_power_uct_value}, suppose
\[
    \max_{a\in\cA_s}
    |\bar q_a-\Qtil(s,h,a)|
    \le
    B_\Delta .
\]
Then
\[
    \PP\left(
        |\Vhat_t(s)-\Vtil(s,h)|
        >
        B_\Delta+\varepsilon
    \right)
    \le
    C_V t^{-\alpha_V}\varepsilon^{-\beta_V}.
\]
In particular, if the \(Q\)-level effective targets satisfy
\[
    \max_{a\in\cA_s}
    |\bar q_a-\Qtil(s,h,a)|
    \le
    \frac{\gamma}{1-\gamma}\Delta_{\textsc{cross}},
\]
then
\[
    \PP\left(
        |\Vhat_t(s)-\Vtil(s,h)|
        >
        \frac{\gamma}{1-\gamma}\Delta_{\textsc{cross}}
        +
        \varepsilon
    \right)
    \le
    C_V t^{-\alpha_V}\varepsilon^{-\beta_V}.
\]
\end{corollary}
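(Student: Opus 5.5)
The corollary is a direct instantiation of Lemma~\ref{lem:robust_power_uct_value}, so I will not redo any concentration argument. I only need to check that the hypotheses of the lemma are met with the bias radius $B$ replaced by $B_\Delta$.

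\textbf{Step 1: verify (H-V).} Set $q_a=\Qtil(s,h,a)$, as in the lemma. The assumed inequality $\max_{a\in\cA_s}|\bar q_a-\Qtil(s,h,a)|\le B_\Delta$ is then exactly hypothesis (H-V) with $B=B_\Delta$.

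\textbf{Step 2: carry over the remaining hypotheses.} Every other hypothesis of Lemma~\ref{lem:robust_power_uct_value} is assumed in the corollary by the phrase ``under the assumptions of''. These are:
\begin{itemize}
\item the action-level concentration \eqref{eq:robust-value-a2} around the effective targets $\bar q_a$;
\item boundedness and nonnegativity of the $Q$-estimates and targets;
\item the Power-UCT parameter conditions.
\end{itemize}
Applying \eqref{eq:robust-value-1} with $B=B_\Delta$ then gives the first display immediately.

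\textbf{Step 3: uniformity of the constant.} The constant $C_V$ in the lemma is explicitly independent of $B$. Hence substituting $B_\Delta$ for $B$ changes neither $C_V$ nor the exponents $\alpha_V$ and $\beta_V$.

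\textbf{Step 4: the specialization.} For the ``in particular'' part, take $B_\Delta=\frac{\gamma}{1-\gamma}\Delta_{\textsc{cross}}$. The assumed bound on the $Q$-level effective targets is then precisely the hypothesis of the first part, so the second display follows by substitution. I could note, as motivation only, that Corollary~\ref{cor:delta_cross_controls_Q_drift} is the natural source of this radius: there $\delta_{Q,t}\le\gamma\Delta_{\textsc{cross}}$ and $B_\Delta=\gamma\Delta_{\textsc{cross}}/(1-\gamma)$. However, the corollary assumes the bound directly, so no further argument is required.

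\textbf{Possible obstacle.} The only point needing care is that the effective targets $\bar q_a$ used in (H-V) must be the same ones for which \eqref{eq:robust-value-a2} holds. Since the corollary is stated under the lemma's assumptions with those same $\bar q_a$, this matching is automatic, and I expect no real obstacle.
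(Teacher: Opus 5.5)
Your proposal is correct and matches the paper's proof, which simply applies Lemma~\ref{lem:robust_power_uct_value} with $B=B_\Delta$ (the constant $C_V$ does not depend on $B$). The second claim then follows by specializing to $B_\Delta=\frac{\gamma}{1-\gamma}\Delta_{\textsc{cross}}$.
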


\begin{proof}
Apply Lemma~\ref{lem:robust_power_uct_value} with
\[
    B=B_\Delta.
\]
The second claim follows from the specific choice
\[
    B_\Delta
    =
    \frac{\gamma}{1-\gamma}\Delta_{\textsc{cross}}.
\]
\end{proof}

\paragraph{Proof of Theorem~\ref{thm:full_merge_rate}}

\begin{proof}
We prove a stronger biased concentration statement and then integrate its tail.

For a physical state \(s\) and depth \(h\), recall that
\[
    \Vtil(s,h)
    =
    (\mathcal B^{H-h}V_0)(s),
\]
and
\[
    \Qtil(s,h,a)
    =
    \sum_{s'}P(s'\mid s,a)
    \left[
        R(s,a,s')+\gamma \Vtil(s',h+1)
    \right].
\]
For the first \(n\) simulations, define the uniform \(Q\)-level
cross-depth drift
\[
    \delta_{Q,n}
    \defeq
    \max_{\substack{(s,a):\,T_{s,a}(n)\ge1\\
                    h\in\mathcal D_{H,n}(s),\ 1\le t\le T_{s,a}(n)}}
    \delta_{Q,t}(s,a;h),
\]
with the maximum over an empty set taken as zero. Thus \(\delta_{Q,n}\)
uniformly bounds the drift of every state-action-depth triple realized in the
search.

The key claim is that, for every depth \(h\in\{0,\ldots,H\}\), every physical
state \(s\), and every visit count \(t\ge 1\), there exists a constant
\(c_h<\infty\), independent of \(t\), \(n\), and \(\varepsilon\), such that
\begin{equation}
    \mathbb P\left(
        \left|
            \Vhat_t(s)-\Vtil(s,h)
        \right|
        >
        B_{\Delta,n}+\varepsilon
    \right)
    \le
    c_h t^{-\alpha_h}\varepsilon^{-\beta_h},
    \qquad
    \forall \varepsilon>0.
    \label{eq:full-merge-4}
\end{equation}
Here \(\Vhat_t(s)\) denotes the full-merge value estimate at the physical node
\(s\) after \(t\) visits to that node. Notice that the same estimator
\(\Vhat_t(s)\) may be compared with several depth-specific targets
\(\Vtil(s,h)\), because the full-merge node \(s\) may be entered at several
depths. The price of this comparison is the additive bias radius
\(B_{\Delta,n}\).

We prove \eqref{eq:full-merge-4} by backward induction on \(h\).

\paragraph{Base case: \(h=H\).}
At depth \(H\), the recursive simulation stops and returns an independent
rollout or evaluation sample with mean \(V_0(s)=\Vtil(s,H)\). Since rollout
values are bounded, Hoeffding's inequality implies that, for any admissible
\(\alpha_H,\beta_H\) with \(\beta_H>1\) and \(\alpha_H\le \beta_H/2\),
there exists \(c_H<\infty\) such that
\[
    \mathbb P\left(
        \left|
            \Vhat_t(s)-\Vtil(s,H)
        \right|
        >
        \varepsilon
    \right)
    \le
    c_H t^{-\alpha_H}\varepsilon^{-\beta_H}.
\]
Since \(B_{\Delta,n}\ge 0\), this immediately gives
\[
    \mathbb P\left(
        \left|
            \Vhat_t(s)-\Vtil(s,H)
        \right|
        >
        B_{\Delta,n}+\varepsilon
    \right)
    \le
    c_H t^{-\alpha_H}\varepsilon^{-\beta_H}.
\]
Thus \eqref{eq:full-merge-4} holds at depth \(H\).

\paragraph{Inductive step.}
Fix \(h<H\), and assume that \eqref{eq:full-merge-4} holds for all depths
\(h+1,h+2,\ldots,H\). We prove it at depth \(h\).

Fix a physical state \(s\) and action \(a\). Consider the first \(t\) updates to
the shared full-merge statistic \(\Qhat(s,a)\). The \(i\)-th such update may
arrive when \(s\) is entered at some depth \(h_i\), not necessarily equal to
the reference depth \(h\). Let \(S_i\) be the sampled successor, \(R_i\) the
sampled reward, and \(W_i\) the value returned by the recursive child call from
\((S_i,h_i+1)\). With the corrected backup,
\[
    Y_i=R_i+\gamma W_i,
    \qquad
    \Qhat_t(s,a)=\frac1t\sum_{i=1}^tY_i.
\]

By the induction hypothesis applied at the child depth \(h_i+1\), the child
returned value satisfies the biased concentration condition required by
Lemma~\ref{lem:robust_full_merge_Q}, with bias radius \(B_{\Delta,n}\). Hence,
for any reference depth \(h\), Lemma~\ref{lem:robust_full_merge_Q} yields
\begin{equation}
    \mathbb P\left(
        \left|
            \Qhat_t(s,a)-\Qtil(s,h,a)
        \right|
        >
        B_{\Delta,n}+\varepsilon
    \right)
    \le
    C_Q t^{-\alpha_{h+1}}\varepsilon^{-\beta_{h+1}},
    \qquad
    \forall \varepsilon>0.
    \label{eq:full-merge-5}
\end{equation}
Indeed, the only deterministic drift term in Lemma~\ref{lem:robust_full_merge_Q}
is
\[
    \delta_{Q,t}(s,a;h)
    :=
    \max_{1\le i\le t}
    \left|
        \Qtil(s,h_i,a)-\Qtil(s,h,a)
    \right|.
\]
By definition of \(\delta_{Q,n}\),
\[
    \delta_{Q,t}(s,a;h)\le \delta_{Q,n}.
\]
Since
\[
    B_{\Delta,n}
    =
    \frac{\delta_{Q,n}}{1-\gamma},
\]
we have
\[
    \delta_{Q,t}(s,a;h)+\gamma B_{\Delta,n}
    \le
    \delta_{Q,n}+\gamma\frac{\delta_{Q,n}}{1-\gamma}
    =
    \frac{\delta_{Q,n}}{1-\gamma}
    =
    B_{\Delta,n}.
\]
Therefore the robust \(Q\)-propagation lemma gives exactly the biased
\(Q\)-concentration bound \eqref{eq:full-merge-5}.

Now apply Lemma~\ref{lem:robust_power_uct_value} at the physical node \(s\).
The local action targets are
\[
    q_a=\Qtil(s,h,a),
    \qquad
    \Vtil(s,h)=\max_{a\in\mathcal A_s}q_a.
\]
By \eqref{eq:full-merge-5}, every shared action estimate \(\Qhat(s,a)\) concentrates around its
depth-\(h\) target \(\Qtil(s,h,a)\) with the same additive bias radius
\(B_{\Delta,n}\). Since the full-merge node uses a single depth-independent
Power-UCT rule, the robust Power-UCT value-step lemma applies and gives
\begin{equation}
    \mathbb P\left(
        \left|
            \Vhat_t(s)-\Vtil(s,h)
        \right|
        >
        B_{\Delta,n}+\varepsilon
    \right)
    \le
    c_h t^{-\alpha_h}\varepsilon^{-\beta_h}.
    \label{eq:full-merge-6}
\end{equation}
This is precisely \eqref{eq:full-merge-4} at depth \(h\). The induction is complete.

\paragraph{Root concentration.}
Every simulation starts from \(s_0\) at trajectory depth \(0\). Therefore the
full-merge root visit count after \(n\) simulations satisfies
\[
    T_{s_0}(n)\ge n.
\]
Applying \eqref{eq:full-merge-4} with \(s=s_0\), \(h=0\), and \(t=T_{s_0}(n)\), we obtain
\begin{equation}
    \mathbb P\left(
        \left|
            \Vhat_n(s_0)-\Vtil(s_0,0)
        \right|
        >
        B_{\Delta,n}+\varepsilon
    \right)
    \le
    c_0 n^{-\alpha_0}\varepsilon^{-\beta_0},
    \qquad
    \forall \varepsilon>0.
    \label{eq:full-merge-7}
\end{equation}
Here \(\Vhat_n(s_0)\) denotes the root estimate after \(n\) simulations.

\paragraph{Tail integration.}
Let
\[
    X_n
    :=
    \left|
        \Vhat_n(s_0)-\Vtil(s_0,0)
    \right|.
\]
Since \(B_{\Delta,n}\ge 0\),
\[
    X_n
    \le
    B_{\Delta,n}
    +
    (X_n-B_{\Delta,n})_+.
\]
Taking expectations and using the tail-integral identity,
\[
\begin{aligned}
    \mathbb E[X_n]
    &\le
    \mathbb E[B_{\Delta,n}]
    +
    \int_0^\infty
    \mathbb P\left(
        X_n>B_{\Delta,n}+\varepsilon
    \right)
    d\varepsilon .
\end{aligned}
\]
Using \eqref{eq:full-merge-7},
\[
    \mathbb E[X_n]
    \le
    \mathbb E[B_{\Delta,n}]
    +
    \int_0^\infty
    \min\left\{
        1,\,
        c_0 n^{-\alpha_0}\varepsilon^{-\beta_0}
    \right\}
    d\varepsilon .
\]
Set
\[
    \eta_n:=n^{-\alpha_0/\beta_0}.
\]
Then
\[
\begin{aligned}
    \mathbb E[X_n]
    &\le
    \mathbb E[B_{\Delta,n}]
    +
    \eta_n
    +
    \int_{\eta_n}^{\infty}
    c_0 n^{-\alpha_0}\varepsilon^{-\beta_0}\,d\varepsilon  \\
    &=
    \mathbb E[B_{\Delta,n}]
    +
    \eta_n
    +
    \frac{c_0}{\beta_0-1}
    n^{-\alpha_0}
    \eta_n^{1-\beta_0}.
\end{aligned}
\]
Because \(\eta_n=n^{-\alpha_0/\beta_0}\),
\[
    n^{-\alpha_0}\eta_n^{1-\beta_0}
    =
    n^{-\alpha_0}
    n^{-(\alpha_0/\beta_0)(1-\beta_0)}
    =
    n^{-\alpha_0/\beta_0}.
\]
Therefore,
\begin{equation}
    \mathbb E[X_n]
    \le
    \mathbb E[B_{\Delta,n}]
    +
    \left(
        1+\frac{c_0}{\beta_0-1}
    \right)
    n^{-\alpha_0/\beta_0}.
    \label{eq:full-merge-8}
\end{equation}

Finally, Jensen's inequality gives
\[
\begin{aligned}
    \left|
        \mathbb E[\Vhat_n(s_0)]-\Vtil(s_0,0)
    \right|
    &\le
    \mathbb E\left[
        \left|
            \Vhat_n(s_0)-\Vtil(s_0,0)
        \right|
    \right]  \\
    &=
    \mathbb E[X_n].
\end{aligned}
\]
Combining this with \eqref{eq:full-merge-8} yields
\[
    \left|
        \mathbb E[\Vhat_n(s_0)]-\Vtil(s_0,0)
    \right|
    \le
    C_{\mathrm{samp}}n^{-\alpha_0/\beta_0}
    +
    \mathbb E[B_{\Delta,n}],
\]
for
\[
    C_{\mathrm{samp}}
    :=
    1+\frac{c_0}{\beta_0-1}.
\]
Under the optimal tuning
\[
    \alpha_0/\beta_0=1/2,
\]
we obtain
\begin{equation}
    \left|
        \mathbb E[\Vhat_n(s_0)]-\Vtil(s_0,0)
    \right|
    \le
    C_{\mathrm{samp}}n^{-1/2}
    +
    \mathbb E[B_{\Delta,n}].
    \label{eq:full-merge-9}
\end{equation}
Since
\[
    B_{\Delta,n}
    =
    \frac{\delta_{Q,n}}{1-\gamma},
\]
this proves the expected-error bound in terms of \(\delta_{Q,n}\).

It remains to connect \(\delta_{Q,n}\) to the stated cross-depth gap. Suppose
\[
    \delta_{Q,n}\le \gamma\Delta_{\textsc{cross}}
    \qquad
    \text{almost surely}.
\]
Then
\[
    \mathbb E[B_{\Delta,n}]
    =
    \frac{1}{1-\gamma}
    \mathbb E[\delta_{Q,n}]
    \le
    \frac{\gamma}{1-\gamma}
    \Delta_{\textsc{cross}}.
\]
Substituting into \eqref{eq:full-merge-9} gives
\[
    \left|
        \mathbb E[\Vhat_n(s_0)]-\Vtil(s_0,0)
    \right|
    \le
    C_{\mathrm{samp}}n^{-1/2}
    +
    \frac{\gamma}{1-\gamma}
    \Delta_{\textsc{cross}}.
\]
Thus
\[
    \left|
        \mathbb E[\Vhat_n(s_0)]-\Vtil(s_0,0)
    \right|
    \le
    O(n^{-1/2})
    +
    O(\Delta_{\textsc{cross}}),
\]
as claimed.
\end{proof}

\begin{remark}[Why Theorem~\ref{thm:full_merge_rate} differs from Theorem~\ref{thm:main}]
Theorem~\ref{thm:main} cannot be copied verbatim to GS-Power-UCT-F. In the
depth-augmented algorithm, each node is $(s,h)$, every edge increases depth, and
the graph is a DAG. Hence every node has a unique target $\Vtil(s,h)$. In
the full-merge algorithm, a node is keyed only by $s$, so the same estimator
$\Vhat(s)$ may receive updates from several depths. Therefore the natural
target is not a single depth-specific value, but the empirical full-merge fixed
point $\bar V_n$ induced by the depth-mixture weights.

The bias term
\[
    B_{F,n}^V(s,h)
    =
    |\bar V_n(s)-\Vtil(s,h)|
\]
measures exactly the price of cross-depth merging. If there are no cross-depth
transpositions, then every physical state is visited at only one depth, the
weights $\omega_{n,h}(s,a)$ are point masses, and $\bar V_n(s)=\Vtil(s,h)$.
In that special case, $B_{F,n}^V(s,h)=0$, and the full-merge theorem reduces to
the same concentration statement as the depth-augmented theorem.
\end{remark}

\subsection{Proof of Theorem~\ref{thm:adaptive}}

\begin{proof}
Let
\[
    H_n \defeq H(n).
\]
By the triangle inequality,
\begin{align}
    \left|
        \EE[\Vhat_n^{H_n}(s_0)]
        -
        V^{\star}(s_0)
    \right|
    &\le
    \left|
        \EE[\Vhat_n^{H_n}(s_0)]
        -
        \Vtil_{H_n}(s_0,0)
    \right|
    \nonumber\\
    &\quad+
    \left|
        \Vtil_{H_n}(s_0,0)
        -
        V^{\star}(s_0)
    \right|.
\end{align}

The first term is exactly the fixed-horizon full-merge error controlled by
Theorem~\ref{thm:full_merge_rate}, applied with horizon $H=H_n$. Therefore,
\begin{equation}
\label{eq:adaptive_uses_full_merge_theorem}
    \left|
        \EE[\Vhat_n^{H_n}(s_0)]
        -
        \Vtil_{H_n}(s_0,0)
    \right|
    \le
    c(H_n)n^{-1/2}
    +
    C_{\Delta}\,
    \EE\!\left[
        \Delta_{\textsc{cross}}^{H_n,n}
    \right].
\end{equation}
By Definition~\ref{def:adaptive_cross_depth_gap},
\[
    \Delta_{\textsc{cross}}^{n}
    =
    \Delta_{\textsc{cross}}^{H(n),n}
    =
    \Delta_{\textsc{cross}}^{H_n,n}.
\]
Hence
\begin{equation}
\label{eq:adaptive_first_term_final}
    \left|
        \EE[\Vhat_n^{H_n}(s_0)]
        -
        \Vtil_{H_n}(s_0,0)
    \right|
    \le
    c(H_n)n^{-1/2}
    +
    C_{\Delta}\,
    \EE\!\left[
        \Delta_{\textsc{cross}}^{n}
    \right].
\end{equation}

For the second term, the finite-horizon truncation bound gives
\begin{equation}
\label{eq:adaptive_truncation_term}
    \left|
        \Vtil_{H_n}(s_0,0)
        -
        V^{\star}(s_0)
    \right|
    \le
    \gamma^{H_n}
    \|\Vstar - V_0\|_{\infty}.
\end{equation}
Combining the previous displays yields
\begin{equation}
    \left|
        \EE[\Vhat_n^{H_n}(s_0)]
        -
        V^{\star}(s_0)
    \right|
    \le
    c(H_n)n^{-1/2}
    +
    C_{\Delta}\,
    \EE\!\left[
        \Delta_{\textsc{cross}}^{n}
    \right]
    +
    \gamma^{H_n}
    \|\Vstar - V_0\|_{\infty}.
\end{equation}

Finally, by the adaptive-horizon choice
\[
    H_n
    =
    \left\lceil
        \frac{\log n}{2\log(1/\gamma)}
    \right\rceil,
\]
we have
\[
    \gamma^{H_n}
    \le
    n^{-1/2}.
\]
Therefore,
\begin{equation}
    \left|
        \EE[\Vhat_n^{H_n}(s_0)]
        -
        V^{\star}(s_0)
    \right|
    \le
    c(H_n)n^{-1/2}
    +
    C_{\Delta}\,
    \EE\!\left[
        \Delta_{\textsc{cross}}^{n}
    \right]
    +
    n^{-1/2}
    \|\Vstar - V_0\|_{\infty}.
\end{equation}
Thus, if
\[
    \EE\!\left[
        \Delta_{\textsc{cross}}^{n}
    \right]
    \to 0
    \qquad
    \text{and}
    \qquad
    c(H_n)n^{-1/2}\to 0,
\]
then
\[
    \EE[\Vhat_n^{H_n}(s_0)]
    \to
    V^{\star}(s_0).
\]
Moreover, if
\[
    \EE\!\left[
        \Delta_{\textsc{cross}}^{n}
    \right]
    =
    O(n^{-1/2})
\]
and $c(H_n)$ grows at most polylogarithmically in $n$, then
\[
    \left|
        \EE[\Vhat_n^{H_n}(s_0)]
        -
        V^{\star}(s_0)
    \right|
    =
    O\!\left(
        \frac{c(H_n)}{\sqrt n}
    \right).
\]
\end{proof}





\section{Experimental Details}
\label{sec:experiment_details}

\subsection{Evaluation Protocol}
\label{subsec:eval_protocol}

To evaluate the performance and scalability of each algorithm, we define a set of simulation budgets (number of iterations) tailored to the complexity of each environment. For every environment, each algorithm is evaluated across these simulation checkpoints to observe its performance profile.

Our evaluation follows a replanning paradigm. Specifically, at each time step of an episode, the agent executes a complete search process using its allotted simulation budget to determine the optimal action. This process is repeated for every step until a terminal state is reached. To ensure statistical significance and account for the stochastic nature of Monte Carlo-based methods, we conduct 1,000 independent runs for each combination of algorithm, environment, and simulation budget, except for GBOP, which uses 500 runs. The final performance is reported as the mean cumulative reward (or other environment-specific metric) over the corresponding evaluation runs.

\subsection{Hyperparameter Selection}
\label{subsec:hyperparam_tuning}

To ensure a fair comparison, we perform an extensive hyperparameter sweep for both the proposed method and the baselines using a grid search strategy. The search space for each algorithm is defined based on literature-standard ranges and preliminary runs.

The tuning process is conducted as follows:
\begin{itemize}
    \item For each environment and algorithm, we evaluate candidate hyperparameter sets using $300$ independent runs.
    \item These runs are performed across all predefined simulation budget checkpoints to ensure the robustness of the selected parameters.
    \item \textbf{Selection Criterion:} The optimal hyperparameter configuration is chosen based on the highest aggregate mean reward, calculated by averaging the performance across all simulation budgets within a specific environment.
\end{itemize}

By selecting parameters that perform well across different simulation budgets, we aim to identify configurations that are not only high-performing but also stable under varying computational constraints.
\subsection{Benchmark Environments}
\subsubsection{FrozenLake}
\label{subsec:frozen_grid}

\textbf{Environment Description:} The Frozen Grid environment consists of an $8 \times 8$ grid where the agent starts at coordinate $(0,0)$ and aims to reach the goal state at $(7,7)$. The state space is defined by the agent's $(x, y)$ coordinates. The grid cells are categorized into Frozen (F), Holes (H), Start (S), and Goal (G). The action space consists of four discrete movements: \textit{left}, \textit{down}, \textit{right}, and \textit{up}. To introduce stochasticity (slippery mode), transitions are non-deterministic: the executed action results in the intended direction with a probability of $1/3$, or deviates to a perpendicular direction (left turn or right turn) with a probability of $1/3$ each. Boundary collisions result in the agent remaining in its current cell. The episode terminates if the agent reaches the goal, falls into a hole, or exhausts the maximum horizon limit of $H = 200$. The reward function is strictly sparse, granting a reward of $1$ only upon reaching the goal, and $0$ for all other transitions.

\textbf{Selected Hyperparameters:} UCT($\epsilon=1.5$), MENTS(temperature=0.25, $\epsilon=0.5$), Power-UCT($C=1.5$, $p=2.0$), GS-Power-UCT($C=0.5$, $p = \infty$), GS-Power-UCT-F$^+$($C=1.25$, $p = \infty$).

\subsubsection{Passenger Grid}
\label{subsec:passenger_grid}

\textbf{Environment Description:} The Passenger Grid is formulated on a $7 \times 6$ layout. The state is represented as a tuple $(x, y, M)$, where $M$ is a bitmask tracking the pickup status of passengers. The agent starts at $(0,0)$ and must deliver passengers to the drop-off goal at $(6,0)$. Three passengers are initially located at $(1,2)$, $(0,5)$, and $(6,4)$. The $i$-th bit of $M$ flips to $1$ when the agent occupies passenger $i$'s initial cell. Actions include \textit{left}, \textit{down}, \textit{right}, and \textit{up}. The environment operates in a slippery mode where the effective action is sampled from $\{\text{turn-left}, \text{intended}, \text{turn-right}\}$ with probabilities $\{0.25, 0.5, 0.25\}$, respectively. Boundary collisions keep the agent stationary. Each transition increments the time step, and episodes terminate either upon reaching the drop-off cell or at horizon $H = 70$. The reward is delayed until the goal is reached and depends on the total number of collected passengers: $0$, $1$, $3$, or $7$ for collecting $0$, $1$, $2$, or $3$ passengers, respectively. All other transitions yield a reward of $0$.

\textbf{Selected Hyperparameters:} UCT($\epsilon=1.5$), MENTS(temperature=0.01, $\epsilon=0.5$), Power-UCT($C=1.5$, $p=2.5$), GBOP($\epsilon=0.001$, $\gamma=0.99$), GS-Power-UCT($C=0.5$, $p = \infty$), GS-Power-UCT-F$^+$($C=0.5$, $p=4.0$).

\subsubsection{Factored River Swim}
\label{subsec:riverswim}

\textbf{Environment Description:} The Factored River Swim environment comprises $3$ independent river-swim chains. The joint state captures the position of an agent on each chain, $P \in \{0, \dots, 4\}^3$. All rivers initialize at position $0$, with $3$ being the target goal. A joint action is represented as a $3$-bit vector ($8$ possible actions), where the $i$-th bit indicates the behavior for river $i$: $0$ for resting/swimming downstream (left) and $1$ for swimming upstream (right). Action $0$ deterministically moves the agent one step left (bounded at $0$). Action $1$ introduces stochastic transitions: at position $0$, the agent stays with probability $0.4$ and moves right with $0.6$; at the goal ($4$), it falls back to $3$ with probability $0.4$ and stays with $0.6$; at any intermediate state, it moves left ($0.05$), stays ($0.60$), or moves right ($0.35$). Episodes terminate at $H = 35$. Rewards are evaluated prior to position updates. For each river, selecting action $0$ at position $0$ yields $0.1$, while selecting action $1$ at position $7$ yields $1.0$. A synergistic bonus of $4.0$ is granted if all rivers are simultaneously at position $7$ and all action bits are set to $1$. The total aggregated reward is normalized by a factor of $6$.

\textbf{Selected Hyperparameters:} UCT($\epsilon=0.5$), MENTS(temperature=0.99, $\epsilon=0.5$), Power-UCT($C=1.5$, $p=1.0$), GBOP($\epsilon=0.001$, $\gamma=0.99$), GS-Power-UCT($C=0.5$, $p=\infty$), GS-Power-UCT-F$^+$($C=1.5$, $p=\infty$).
\subsubsection{SysAdmin Ring}
\label{subsec:sysadmin}

\textbf{Environment Description:} The SysAdmin Ring models a network of $20$ interconnected computers forming a ring topology. The state is represented by an \textit{alive\_mask}, where each bit denotes the running status of a specific machine. Initially, all computers are down (\textit{alive\_mask} $= 0$). The agent can choose from $21$ actions: either reboot a specific computer $i \in \{0, \dots, 19\}$ or remain idle. Rebooting a computer guarantees it will be running in the subsequent state. For all other machines, the status updates stochastically based on their current state and that of their predecessor in the ring. The running probabilities are determined as follows: $0.0238$ if both are down; $0.0475$ if the predecessor is running but the current is down; $0.525$ if the predecessor is down but the current is running; and $0.95$ if both are running. Episodes terminate deterministically at $t = 50$. The environment provides a dense step-wise reward corresponding to the fraction of running machines.

\textbf{Selected Hyperparameters:} UCT($\epsilon=1.5$), MENTS(temperature=0.25, $\epsilon=0.5$), Power-UCT($C=0.5$, $p=3.0$), GS-Power-UCT($C=0.5$, $p=2.5$), GBOP($\epsilon=0.001$, $\gamma=0.99$), GS-Power-UCT-F$^+$($C=0.75$, $p=2.5$).

\subsubsection{FourRooms}
\label{subsec:fourrooms}

\textbf{Environment Description:} FourRooms is an $11 \times 11$ grid partitioned by a vertical wall at $x = 5$ and a horizontal wall at $y = 5$. Communication between rooms is facilitated by four door cells, one randomly placed on each of the four wall segments during reset. The agent's start and goal positions are uniformly sampled from non-wall and non-door cells. The agent executes four directional actions, subject to slippery dynamics identical to the Passenger Grid: effective actions are sampled from $\{\text{turn-left}, \text{intended}, \text{turn-right}\}$ with probabilities $\{0.25, 0.5, 0.25\}$. Collisions with walls or boundaries leave the agent in place. Episodes terminate upon reaching the target goal or hitting the time limit $H = 50$. The reward is $0$ at all steps except when the goal is reached, where a time-discounted reward is issued: $R = 1 - 0.9 \times (t / 50)$, incentivizing faster navigation.

\textbf{Selected Hyperparameters:} UCT($\epsilon=1.0$), MENTS(temperature=0.01, $\epsilon=0.5$), Power-UCT($C=1.0$, $p=2.0$), GS-Power-UCT($C=0.5$, $p = \infty$), GBOP($\epsilon=0.001$, $\gamma=0.99$), GS-Power-UCT-F$^+$($C=0.5$, $p=4.0$).

\subsection{Experimental Results}

The following tables report reward as the mean $\pm$ twice the standard deviation over 1,000 evaluation runs for each algorithm and simulation budget (except GBOP: 500 runs). Each table is split into two budget panels for readability; the column headings give the number of simulations per decision. P-UCT denotes Stochastic-Power-UCT.

\begin{table}[!htbp]
\centering
\small
\setlength{\tabcolsep}{4pt}
\renewcommand{\arraystretch}{1.15}
\caption{FrozenLake}
\label{tab:res_frozen}
\begin{tabular*}{\linewidth}{@{}l@{\extracolsep{\fill}}cccc@{}}
\toprule
Algorithm & 16 & 32 & 64 & 128 \\
\midrule
UCT & $0.002{\pm}0.089$ & $0.004{\pm}0.126$ & $0.006{\pm}0.155$ & $0.011{\pm}0.209$ \\
P-UCT & $0.003{\pm}0.109$ & $0.006{\pm}0.155$ & $0.004{\pm}0.126$ & $0.012{\pm}0.218$ \\
MENTS & $0.002{\pm}0.089$ & $0.011{\pm}0.209$ & $0.016{\pm}0.251$ & $0.067{\pm}0.500$ \\
GBOP & $0.000{\pm}0.000$ & $0.000{\pm}0.000$ & $0.060{\pm}0.011$ & $0.198{\pm}0.018$ \\
GS-Power-UCT & $0.051{\pm}0.440$ & $0.140{\pm}0.694$ & $0.227{\pm}0.838$ & $0.314{\pm}0.929$ \\
GS-Power-UCT-F$^+$ & $0.105{\pm}0.613$ & $0.139{\pm}0.692$ & $0.183{\pm}0.774$ & $0.208{\pm}0.812$ \\
\midrule
\addlinespace[0.5em]
Algorithm & 256 & 512 & 1024 & 2048 \\
\midrule
UCT & $0.016{\pm}0.251$ & $0.021{\pm}0.287$ & $0.047{\pm}0.423$ & $0.064{\pm}0.490$ \\
P-UCT & $0.020{\pm}0.280$ & $0.014{\pm}0.235$ & $0.047{\pm}0.423$ & $0.061{\pm}0.479$ \\
MENTS & $0.127{\pm}0.666$ & $0.156{\pm}0.726$ & $0.209{\pm}0.814$ & $0.257{\pm}0.874$ \\
GBOP & $0.236{\pm}0.019$ & $0.262{\pm}0.020$ & $0.246{\pm}0.019$ & $0.294{\pm}0.020$ \\
GS-Power-UCT & $0.395{\pm}0.978$ & $0.480{\pm}1.000$ & $0.527{\pm}0.999$ & $0.560{\pm}0.993$ \\
GS-Power-UCT-F$^+$ & $0.276{\pm}0.894$ & $0.353{\pm}0.956$ & $0.413{\pm}0.985$ & $0.478{\pm}1.000$ \\
\bottomrule
\end{tabular*}
\end{table}

\begin{table}[!htbp]
\centering
\small
\setlength{\tabcolsep}{4pt}
\renewcommand{\arraystretch}{1.15}
\caption{Passenger Grid}
\label{tab:res_passenger}
\begin{tabular*}{\linewidth}{@{}l@{\extracolsep{\fill}}cccc@{}}
\toprule
Algorithm & 16 & 32 & 64 & 128 \\
\midrule
UCT & $0.685{\pm}2.170$ & $0.828{\pm}2.430$ & $1.053{\pm}2.853$ & $1.413{\pm}3.114$ \\
P-UCT & $0.661{\pm}2.216$ & $0.861{\pm}2.598$ & $1.220{\pm}3.143$ & $1.580{\pm}3.249$ \\
MENTS & $0.587{\pm}2.211$ & $0.739{\pm}2.571$ & $0.911{\pm}2.843$ & $1.298{\pm}3.461$ \\
GBOP & $0.000{\pm}0.000$ & $0.000{\pm}0.000$ & $0.000{\pm}0.000$ & $0.046{\pm}0.013$ \\
GS-Power-UCT & $1.877{\pm}4.662$ & $3.062{\pm}5.607$ & $3.919{\pm}5.883$ & $4.544{\pm}5.888$ \\
GS-Power-UCT-F$^+$ & $1.253{\pm}2.552$ & $1.324{\pm}2.516$ & $1.413{\pm}2.810$ & $1.452{\pm}2.831$ \\
\midrule
\addlinespace[0.5em]
Algorithm & 256 & 512 & 1024 & 2048 \\
\midrule
UCT & $1.668{\pm}3.117$ & $2.093{\pm}3.206$ & $2.432{\pm}3.348$ & $2.586{\pm}3.167$ \\
P-UCT & $1.966{\pm}3.587$ & $2.524{\pm}3.815$ & $2.955{\pm}3.976$ & $3.366{\pm}4.138$ \\
MENTS & $1.679{\pm}3.710$ & $2.091{\pm}4.035$ & $2.736{\pm}4.448$ & $3.543{\pm}4.761$ \\
GBOP & $1.570{\pm}0.073$ & $3.756{\pm}0.106$ & $3.452{\pm}0.096$ & $3.546{\pm}0.092$ \\
GS-Power-UCT & $5.178{\pm}5.507$ & $5.363{\pm}5.441$ & $5.801{\pm}4.870$ & $6.134{\pm}4.294$ \\
GS-Power-UCT-F$^+$ & $1.574{\pm}2.951$ & $1.684{\pm}2.957$ & $1.847{\pm}3.170$ & $1.993{\pm}3.364$ \\
\bottomrule
\end{tabular*}
\end{table}

\begin{table}[!htbp]
\centering
\small
\setlength{\tabcolsep}{4pt}
\renewcommand{\arraystretch}{1.15}
\caption{Factored River Swim}
\label{tab:res_riverswim}
\begin{tabular*}{\linewidth}{@{}l@{\extracolsep{\fill}}cccc@{}}
\toprule
Algorithm & 16 & 32 & 64 & 128 \\
\midrule
UCT & $1.319{\pm}0.960$ & $1.344{\pm}0.906$ & $1.597{\pm}1.304$ & $1.809{\pm}1.542$ \\
P-UCT & $1.654{\pm}1.497$ & $1.999{\pm}1.778$ & $2.348{\pm}1.946$ & $2.666{\pm}2.147$ \\
MENTS & $1.121{\pm}0.270$ & $1.352{\pm}0.189$ & $1.535{\pm}0.145$ & $1.645{\pm}0.107$ \\
GBOP & $1.750{\pm}0.000$ & $1.749{\pm}0.000$ & $1.745{\pm}0.006$ & $2.165{\pm}0.032$ \\
GS-Power-UCT & $2.552{\pm}1.434$ & $3.259{\pm}1.784$ & $4.353{\pm}2.166$ & $5.865{\pm}3.522$ \\
GS-Power-UCT-F$^+$ & $5.784{\pm}3.378$ & $6.649{\pm}3.727$ & $6.913{\pm}3.718$ & $6.990{\pm}3.868$ \\
\midrule
\addlinespace[0.5em]
Algorithm & 256 & 512 & 1024 & 2048 \\
\midrule
UCT & $1.995{\pm}1.577$ & $2.286{\pm}1.722$ & $2.440{\pm}1.836$ & $2.437{\pm}1.721$ \\
P-UCT & $3.010{\pm}2.433$ & $3.186{\pm}2.667$ & $2.968{\pm}2.643$ & $2.541{\pm}2.144$ \\
MENTS & $1.720{\pm}0.058$ & $1.749{\pm}0.011$ & $1.750{\pm}0.000$ & $1.750{\pm}0.000$ \\
GBOP & $2.583{\pm}0.043$ & $3.232{\pm}0.059$ & $4.261{\pm}0.076$ & $5.290{\pm}0.073$ \\
GS-Power-UCT & $6.754{\pm}3.664$ & $6.977{\pm}3.586$ & $7.048{\pm}3.690$ & $7.044{\pm}3.687$ \\
GS-Power-UCT-F$^+$ & $6.847{\pm}3.953$ & $6.496{\pm}4.194$ & $5.808{\pm}4.227$ & $5.144{\pm}4.102$ \\
\bottomrule
\end{tabular*}
\end{table}

\begin{table}[!htbp]
\centering
\small
\setlength{\tabcolsep}{4pt}
\renewcommand{\arraystretch}{1.15}
\caption{SysAdmin Ring}
\label{tab:res_sysadmin}
\begin{tabular*}{\linewidth}{@{}l@{\extracolsep{\fill}}cccc@{}}
\toprule
Algorithm & 16 & 32 & 64 & 128 \\
\midrule
UCT & $8.574{\pm}2.692$ & $8.687{\pm}2.835$ & $8.883{\pm}3.039$ & $8.980{\pm}3.105$ \\
P-UCT & $8.575{\pm}2.690$ & $8.731{\pm}2.885$ & $9.078{\pm}3.030$ & $9.479{\pm}3.419$ \\
MENTS & $8.542{\pm}2.734$ & $8.630{\pm}2.894$ & $8.652{\pm}2.856$ & $8.799{\pm}2.966$ \\
GBOP & $9.037{\pm}0.067$ & $8.452{\pm}0.061$ & $8.108{\pm}0.056$ & $8.090{\pm}0.054$ \\
GS-Power-UCT & $8.662{\pm}2.677$ & $9.018{\pm}2.947$ & $9.440{\pm}3.233$ & $10.159{\pm}3.906$ \\
GS-Power-UCT-F$^+$ & $8.867{\pm}2.763$ & $9.334{\pm}3.049$ & $9.757{\pm}3.210$ & $10.656{\pm}3.749$ \\
\midrule
\addlinespace[0.5em]
Algorithm & 256 & 512 & 1024 & 2048 \\
\midrule
UCT & $9.180{\pm}3.372$ & $9.828{\pm}3.986$ & $10.722{\pm}4.249$ & $12.139{\pm}5.583$ \\
P-UCT & $10.197{\pm}4.009$ & $11.168{\pm}4.741$ & $12.872{\pm}5.954$ & $14.810{\pm}6.413$ \\
MENTS & $8.821{\pm}2.985$ & $8.888{\pm}3.116$ & $9.140{\pm}3.346$ & $9.484{\pm}3.514$ \\
GBOP & $8.029{\pm}0.052$ & $8.159{\pm}0.055$ & $8.414{\pm}0.053$ & $8.580{\pm}0.055$ \\
GS-Power-UCT & $11.307{\pm}4.602$ & $13.264{\pm}5.627$ & $15.601{\pm}6.075$ & $17.205{\pm}6.126$ \\
GS-Power-UCT-F$^+$ & $12.218{\pm}4.518$ & $14.107{\pm}5.398$ & $15.892{\pm}5.979$ & $16.794{\pm}6.192$ \\
\bottomrule
\end{tabular*}
\end{table}

\begin{table}[!htbp]
\centering
\small
\setlength{\tabcolsep}{4pt}
\renewcommand{\arraystretch}{1.15}
\caption{FourRooms}
\label{tab:res_fourrooms}
\begin{tabular*}{\linewidth}{@{}l@{\extracolsep{\fill}}cccc@{}}
\toprule
Algorithm & 16 & 32 & 64 & 128 \\
\midrule
UCT & $0.176{\pm}0.618$ & $0.200{\pm}0.628$ & $0.218{\pm}0.660$ & $0.253{\pm}0.672$ \\
P-UCT & $0.165{\pm}0.602$ & $0.208{\pm}0.656$ & $0.245{\pm}0.686$ & $0.281{\pm}0.703$ \\
MENTS & $0.155{\pm}0.591$ & $0.185{\pm}0.627$ & $0.218{\pm}0.655$ & $0.241{\pm}0.668$ \\
GBOP & $0.163{\pm}0.014$ & $0.209{\pm}0.015$ & $0.227{\pm}0.015$ & $0.282{\pm}0.015$ \\
GS-Power-UCT & $0.338{\pm}0.654$ & $0.420{\pm}0.626$ & $0.478{\pm}0.596$ & $0.521{\pm}0.551$ \\
GS-Power-UCT-F$^+$ & $0.396{\pm}0.676$ & $0.443{\pm}0.662$ & $0.469{\pm}0.644$ & $0.484{\pm}0.622$ \\
\midrule
\addlinespace[0.5em]
Algorithm & 256 & 512 & 1024 & 2048 \\
\midrule
UCT & $0.290{\pm}0.689$ & $0.347{\pm}0.706$ & $0.399{\pm}0.694$ & $0.439{\pm}0.675$ \\
P-UCT & $0.328{\pm}0.717$ & $0.381{\pm}0.707$ & $0.407{\pm}0.698$ & $0.457{\pm}0.672$ \\
MENTS & $0.302{\pm}0.685$ & $0.352{\pm}0.694$ & $0.390{\pm}0.688$ & $0.434{\pm}0.671$ \\
GBOP & $0.383{\pm}0.014$ & $0.409{\pm}0.014$ & $0.440{\pm}0.013$ & $0.327{\pm}0.015$ \\
GS-Power-UCT & $0.545{\pm}0.530$ & $0.554{\pm}0.525$ & $0.566{\pm}0.512$ & $0.572{\pm}0.502$ \\
GS-Power-UCT-F$^+$ & $0.494{\pm}0.618$ & $0.501{\pm}0.604$ & $0.509{\pm}0.594$ & $0.513{\pm}0.586$ \\
\bottomrule
\end{tabular*}
\end{table}


\subsection{Runtime and memory analysis}
\label{sec:appendix_overhead}

For a fixed horizon $H$, each search depth requires an expected-$O(1)$ hash lookup and $O(K)$ action-selection and backup work, giving $O(HK)$ time per simulation, the same asymptotic order as tree-based Stochastic-Power-UCT. For depth-augmented GS-Power-UCT, Theorem~\ref{thm:graph_nonworsening} implies that the node and edge counts under the same realised trajectories do not exceed those of the unrolled tree. This representation statement does not apply to independently run planners or to full merging. The graph additionally stores a hash table and adjacency lists, requiring $O(|\cN|+|\cE|)$ auxiliary storage. A finite discrete MDP stores at most $|\cS|(H+1)$ depth-augmented nodes or $|\cS|$ full-merge nodes; before saturation, both quantities are budget-dependent.

To illustrate implementation overhead, we compare the proposed algorithms with Stochastic-Power-UCT. GS-Power-UCT adds state merging. For GS-Power-UCT-F$^+$, each reported budget $n$ uses the fixed horizon $H(n)$ for all $n$ simulations in that search. We report three representative budgets for brevity. Each budget is reported in two rows: average wall-clock time per decision (episode time divided by steps, averaged over 1,000 episodes), and average expanded-node count over all steps.

\textbf{Runtime and memory interpretation:} The largest observed positive runtime difference is 15.5\% (Table~\ref{tab:runtime_analysis}); some measurements are lower than the tree baseline. Expanded-node counts indicate retained graph size and are not direct measurements of RAM use. The proposed methods add hash-table and graph-bookkeeping storage, while node-count reduction can be substantial when transpositions are common. Without transpositions, node counts approach those of the tree representation, with additional bookkeeping constants.

{\small
\setlength{\tabcolsep}{4pt}
\renewcommand{\arraystretch}{1.10}
\setlength{\LTcapwidth}{\linewidth}
\begin{longtable}{@{}p{2.8cm} r l l l l@{}}
\caption{Wall-clock time per decision and average expanded-node count across environments and simulation budgets. Node counts are a retained-graph-size proxy, not memory measurements.}
\label{tab:runtime_analysis}\\
\toprule
\textbf{Environment} & \textbf{Sims} & \textbf{Metric} & \shortstack{\textbf{Stochastic-}\\\textbf{Power-UCT}} & \textbf{GS-Power-UCT} & \textbf{GS-Power-UCT-F$^+$} \\
\midrule
\endfirsthead
\caption[]{Wall-clock time per decision and average expanded-node count (continued).}\\
\toprule
\textbf{Environment} & \textbf{Sims} & \textbf{Metric} & \shortstack{\textbf{Stochastic-}\\\textbf{Power-UCT}} & \textbf{GS-Power-UCT} & \textbf{GS-Power-UCT-F$^+$} \\
\midrule
\endhead
\midrule
\multicolumn{6}{r}{\emph{Continued on next page}} \\
\endfoot
\bottomrule
\endlastfoot
FourRooms & 16 & Time & 0.0146\,ms & 0.0155\,ms (+6.2\%) & 0.0146\,ms (+0.0\%) \\*
 & & Nodes & 17.0 & 8.9 (-47.6\%) & 8.3 (-51.2\%) \\
\cmidrule(lr){2-6}
FourRooms & 256 & Time & 0.2513\,ms & 0.2771\,ms (+10.3\%) & 0.2875\,ms (+14.4\%) \\*
 & & Nodes & 257.0 & 76.9 (-70.1\%) & 51.9 (-79.8\%) \\
\cmidrule(lr){2-6}
FourRooms & 2048 & Time & 4.6116\,ms & 5.3265\,ms (+15.5\%) & 5.2128\,ms (+13.0\%) \\*
 & & Nodes & 1971.7 & 80.8 (-95.9\%) & 54.1 (-97.3\%) \\
\midrule
Passenger Grid & 16 & Time & 0.0199\,ms & 0.0199\,ms (+0.0\%) & 0.0201\,ms (+1.0\%) \\*
 & & Nodes & 17.0 & 9.1 (-46.5\%) & 8.5 (-50.0\%) \\
\cmidrule(lr){2-6}
Passenger Grid & 256 & Time & 0.3155\,ms & 0.3161\,ms (+0.2\%) & 0.3273\,ms (+3.7\%) \\*
 & & Nodes & 257.0 & 75.1 (-70.8\%) & 51.9 (-79.8\%) \\
\cmidrule(lr){2-6}
Passenger Grid & 2048 & Time & 3.9591\,ms & 4.1618\,ms (+5.1\%) & 4.2138\,ms (+6.4\%) \\*
 & & Nodes & 1732.4 & 81.5 (-95.3\%) & 56.8 (-96.7\%) \\
\midrule
Factored River Swim & 16 & Time & 0.0182\,ms & 0.0185\,ms (+1.6\%) & 0.0185\,ms (+1.6\%) \\*
 & & Nodes & 17.0 & 17.0 (-0.0\%) & 17.0 (-0.0\%) \\
\cmidrule(lr){2-6}
Factored River Swim & 256 & Time & 0.3205\,ms & 0.3311\,ms (+3.3\%) & 0.3336\,ms (+4.1\%) \\*
 & & Nodes & 257.0 & 174.5 (-32.1\%) & 161.2 (-37.3\%) \\
\cmidrule(lr){2-6}
Factored River Swim & 2048 & Time & 5.5256\,ms & 5.6520\,ms (+2.3\%) & 5.6782\,ms (+2.8\%) \\*
 & & Nodes & 2049.0 & 702.2 (-65.7\%) & 463.7 (-77.4\%) \\
\midrule
SysAdmin Ring & 16 & Time & 0.0215\,ms & 0.0213\,ms (-0.9\%) & 0.0213\,ms (-0.9\%) \\*
 & & Nodes & 17.0 & 17.0 (-0.0\%) & 17.0 (-0.0\%) \\
\cmidrule(lr){2-6}
SysAdmin Ring & 256 & Time & 0.3730\,ms & 0.3748\,ms (+0.5\%) & 0.3758\,ms (+0.8\%) \\*
 & & Nodes & 257.0 & 241.7 (-6.0\%) & 234.3 (-8.8\%) \\
\cmidrule(lr){2-6}
SysAdmin Ring & 2048 & Time & 6.0448\,ms & 6.1197\,ms (+1.2\%) & 6.1136\,ms (+1.1\%) \\*
 & & Nodes & 2049.0 & 1795.9 (-12.4\%) & 1663.7 (-18.8\%) \\
\midrule
FrozenLake & 16 & Time & 0.1690\,ms & 0.1669\,ms (-1.2\%) & 0.1665\,ms (-1.5\%) \\*
 & & Nodes & 17.0 & 8.7 (-48.8\%) & 8.1 (-52.4\%) \\
\cmidrule(lr){2-6}
FrozenLake & 256 & Time & 2.9186\,ms & 2.9782\,ms (+2.0\%) & 2.5900\,ms (-11.3\%) \\*
 & & Nodes & 257.0 & 74.9 (-70.9\%) & 49.3 (-80.8\%) \\
\cmidrule(lr){2-6}
FrozenLake & 2048 & Time & 30.7731\,ms & 31.5220\,ms (+2.4\%) & 30.0416\,ms (-2.4\%) \\*
 & & Nodes & 2008.7 & 78.0 (-96.1\%) & 51.5 (-97.4\%) \\
\end{longtable}
}


\end{document}